\newtheorem{theorem}{Theorem}[section]
\newtheorem*{theorem*}{Theorem}
\newtheorem*{remark}{Remark}
\newtheorem{lemma}[theorem]{Lemma}
\newtheorem{definition}{Definition}[section]
\newcommand\indep{ \perp \!\!\! \perp}
\definecolor{mpigreen}{HTML}{007977}
\definecolor{mBeige}{HTML}{F5F5DC}
\definecolor{mHoneydew}{HTML}{F0FFF0}
\definecolor{mFlowerwhite}{HTML}{FFFAF0}
\definecolor{mBlue}{HTML}{1F77B4}
\definecolor{mYellow}{HTML}{FF7F0E}
\renewcommand{\Function}[2]{%
  \csname ALG@cmd@\ALG@L @Function\endcsname{#1}{#2}%
  \def\jayden@currentfunction{#1}%
}
\newcommand{\funclabel}[1]{%
  \@bsphack
  \protected@write\@auxout{}{%
    \string\newlabel{#1}{{\jayden@currentfunction}{\thepage}}%
  }%
  \@esphack
}
\def\eqref#1{equation~\ref{#1}}
\def\1{\bm{1}}
\DeclareMathAlphabet{\mathsfit}{\encodingdefault}{\sfdefault}{m}{sl}
\SetMathAlphabet{\mathsfit}{bold}{\encodingdefault}{\sfdefault}{bx}{n}
    \newcommand{\eqdef}
    {
        \overset{{\rm \mbox{\tiny def}}}{=}
    }
\newcommand{\ie}{{\em i.e.\/}}
\newcommand{\eg}{{\em e.g.\/}}
\newcommand{\Acal}{{\mathcal A}}
\newcommand{\Ncal}{{\mathcal N}}
\newcommand{\Wcal}{{\mathcal W}}
\newcommand{\Zcal}{{\mathcal Z}}
\newcommand{\EE}{{\mathbb{E}}}
\newcommand{\Reals}{{\mathbb{R}}}
\title{Generalization Bounds For Meta-Learning:\\ An Information-Theoretic Analysis}
\author{%
    Qi Chen\thanks{\texttt{qi.chen.1@ulaval.ca}}\\
%\thanks{Use footnote for providing further information
%     about author (webpage, alternative address)---\emph{not} for acknowledging
%     funding agencies.} \\
  Université Laval\\
  % examples of more authors
   \And
   Changjian Shui \\
  Université Laval\\
   \And
   Mario Marchand \\
  Université Laval\\
  %\texttt{mario.marchand} \\
  % \And
  % Coauthor \\
  % Affiliation \\
  % Address \\
  % \texttt{email} \\
  % \And
  % Coauthor \\
  % Affiliation \\
  % Address \\
  % \texttt{email} \\
}
\begin{document}

\maketitle

\begin{abstract}
%   We propose a general framework for meta-learning from an information-theoretic perspective. Different types of meta-information can be interpreted with the same probabilistic form within our setting, so the proposed risk bounds are valid for a wide class of existing algorithms. In addition, the proposed theorems reveal that the generalization error of two different data-manipulating schemes used for meta-learning can be bounded by the mutual information between the parameters describing the predictors and the training sample. We also provide a data-dependent estimate bound for a stochastic variant of MAML. To our best knowledge, this is the first non-vacuous generalization bounds for deep few-shot learning. Empirical verification on both simulated data and well-known few shot benchmarks shows that the proposed bound is often orders of magnitude tighter than bounds that depend on the square norm of gradients.\\
%   \blue{I have change many things. Please verify. Do not use the past. Do not write OUR algorithm, OUR bounds. Do not use OURS. Keep it simple but exact. Avoid including too many details in the abstract!} 

We derive a novel information-theoretic analysis of the generalization property of meta-learning algorithms. Concretely, our analysis proposes a generic understanding of both the conventional learning-to-learn framework \citep{amit2018meta} and the modern model-agnostic meta learning (MAML) algorithms \citep{finn2017model}.
Moreover, we provide a data-dependent generalization bound for a stochastic variant of MAML, which is \emph{non-vacuous} for deep few-shot learning. As compared to previous bounds that depend on the square norm of gradients, empirical validations on both simulated data and a well-known few-shot benchmark show that the proposed bound is orders of magnitude tighter in most situations.
\end{abstract}

\section{Introduction}

Learning a task with limited samples is crucial for real-world machine learning applications, where proper \emph{prior knowledge} is a key component for a successful transfer. Meta-Learning \citep{thrun1998learning} or learning-to-learn (LTL) aims to extract such information through previous training tasks, which has recently re-emerged as an important topic.

Modern approaches based on MAML~\citep{finn2017model} have gained tremendous success by exploiting the capabilities of deep neural networks \citep{liu2018darts,brown2020language,garcia2017few,ravi2018amortized,snell2017prototypical,sung2018learning}. However, many theoretical questions still remain elusive. For instance, in the most popular methods for \emph{few-shot learning} \citep{ravi2016optimization}, the task-specific parameters and meta-parameter are updated in support (also called meta-train) and query (also called meta-validation) set, respectively. However, the majority of existing theoretical results such as \citep{amit2018meta,baxter2000model,pentina2014pac,denevi2019learning,balcan2019provable} do not provide a formal understanding of such popular practice. Moreover, modern meta-learning approaches have incorporated  over-parameterized deep neural networks, where conducting the theoretical analysis becomes even more challenging.

% Appropriate \emph{prior knowledge} is crucial for learning tasks, especially when samples are limited. As a consequence, the Meta-Learning~\citep{thrun1998learning} and learning-to-learn (LTL)~\citep{baxter2000model} frameworks have initiated a theoretical foundation for methodologies that aim at extracting some common prior knowledge between related tasks in order to increase the efficiency of learning a new task. Such methodologies that exploit the capabilities of deep neural networks, such as MAML \citep{finn2017model}, have recently emerged and been applied to numerous real-world applications \citep{liu2018darts,brown2020language,garcia2017few}.
% But apart from promising empirical results in meta-learning \citep{ravi2019amortized,snell2017prototypical,sung2018learning,finn2017model}, many theoretical questions still remain elusive. For instance, in the frequently-used gradient-based few-shot learning \citep{ravi2016optimization} approach, the task-specific parameters and the meta-parameter are updated in an intriguing alternation between a meta-train data set (also called a support set) and a meta-validation set (also called a query set), without any justification found in existing theoretical literature such as \citep{baxter2000model,pentina2014pac,amit2018meta,denevi2019learning,balcan2019provable}. Moreover, modern meta-learning approaches have incorporated  over-parameterized deep neural networks, where conducting the theoretical analysis becomes even more challenging.

In this paper, we introduce a novel theoretical understanding of the generalization property of meta-learning through an information-theoretical perspective \citep{xu2017information}. Compared with previous theoretical results, the highlights of our contributions are as follows:

\textbf{Unified Approach} We analyze two popular scenarios. 1) The conventional LTL \citep{baxter2000model}, where the meta-parameters and task-specific parameters are updated within the same data set (referred as \emph{joint training}). 2) The modern MAML-based approaches where the meta-parameters and task specific parameters are updated on distinct data sets (referred as \emph{alternate training}), and for which the existing theoretical analysis is rare.

% \textbf{Highly flexible and intuitive bounds} The proposed meta-generalization error bounds are highly flexible, which are algorithm-dependant, data-dependant, and valid for the non-convex loss. Specifically, the error bound for joint-training (Theorem \ref{thm:mi-jt}) is controlled by the mutual information between the output of any randomized algorithm and the dataset. It can cover the typical results of \citep{pentina2014pac,amit2018meta}, which can be interpreted with the environment-level error and the task-level error. In addition, it reveals the superiority of meta learning comparing to single task learning. 

% Meanwhile, the error bound for alternate-training (Theorem \ref{thm:cmi-al}) is characterized by conditional mutual information between the output of the randomized algorithm and meta-validation dataset, conditioned on the meta-train dataset. Intuitively, Theorem \ref{thm:cmi-al} can be further interpreted as: a meta learning algorithm is stable or well generalized if its output given meta-train data is close to its output given the whole data(the union of meta-train and meta-validation). This result is coherent with recent proposed Chaser loss in Bayes MAML \citep{yoon2018bayesian}, which aims to minimize the distance between leader particle(posterior distribution w.r.t. whole data) and the chaser particle(posterior distribution w.r.t. meta train).

\textbf{Flexible Bounds} The proposed meta-generalization error bounds are highly flexible: they are algorithm-dependant, data-dependant, and are valid for non-convex loss functions. \textbf{1)} Specifically, the generalization error bound for joint-training (Theorem \ref{thm:mi-jt}) is controlled by the mutual information between the \emph{output of the randomized algorithm} and \emph{the whole data set}. It can cover the typical results of \citep{pentina2014pac,amit2018meta}, which can be interpreted with an environment-level and a task-level error. In addition, it reveals the benefit of meta learning compared to single task learning.  \textbf{2)} Moreover, the generalization error bound for alternate-training (Theorem \ref{thm:cmi-al}) is characterized by the conditional mutual information between the \emph{output of the randomized algorithm} and the \emph{meta-validation dataset}, conditioned on the \emph{meta-train dataset}. Intuitively, when the outputs of a meta learning algorithm w.r.t. different input data-sets are similar (\ie~the algorithm is stable w.r.t. the data), the meta-generalization error bound will be small. This theoretical result is coherent with the recently-proposed Chaser loss in Bayes MAML \citep{yoon2018bayesian}. %which aims to minimize the distance between leader particle (posterior distribution w.r.t. whole data) and the chaser particle (posterior distribution w.r.t. meta train).

%  \textbf{Non-vacuous bound in gradient-based few-shot learning}~The conventional gradient-based meta-learning theories heavily rely on the assumption of Lipschitz loss. However, \citep{scaman2018lipschitz} pointed out the the Lipschitz constant in the simple neural-network can be extremely large. Thus the conventional gradient-based upper bounds are quite \emph{vacuous} for the deep few-shot scenarios.  

% % Based on the derived theories, we adopt the Stochastic Gradient Langevin Dynamics (SGLD) \cite{welling2011bayesian} and estimate the generalization bound in Meta-SGLD, which can be viewed as the conventional MAML with properly injected noise.

% In contrast, based on our theoretical results, we propose a tighter data-depend bound through measuring the expected \emph{gradient-incoherence} rather than the expected gradient norm (the approximation of Lipschitz constant)~\citep{li2019generalization} for the Meta-SGLD (a stochastic variant of MAML) algorithm. We finally validate our theory in few-shot learning and obtain orders of magnitude tighter bound in most conditions, compared to conventional gradient-based bound. 

\textbf{Non-vacuous bounds for gradient-based few-shot learning}~Conventional gradient-based meta-learning theories heavily rely on the assumption of a Lipschitz loss. However, \citep{scaman2018lipschitz} pointed out that this Lipschitz constant for simple neural networks can be extremely large. Thus, conventional gradient-based upper bounds are often \emph{vacuous} for deep few-shot scenarios.  
% Based on the derived theories, we adopt the Stochastic Gradient Langevin Dynamics (SGLD) \cite{welling2011bayesian} and estimate the generalization bound in Meta-SGLD, which can be viewed as the conventional MAML with properly injected noise.
In contrast, we propose a tighter data-depend bound that depends on the expected \emph{gradient-incoherence} rather than the gradient norm (the approximation of the Lipschitz constant)~\citep{li2019generalization} for the Meta-SGLD algorithm, which is a stochastic variant of MAML that uses the Stochastic Gradient Langevin Dynamics (SGLD) \cite{welling2011bayesian}. We finally validate our theory in few-shot learning scenarios and obtain orders of magnitude tighter bounds in most situations, compared to conventional gradient-based bounds.

\section{Related Work}
\textbf{Conventional LTL} The early theoretic framework, introduced by \citet{baxter2000model}, proposed the notion of \emph{task environment} and derived uniform convergence bounds based on the capacity and covering numbers of function classes. \citet{pentina2014pac} proposed PAC-Bayes risk bounds that depend on  environment-level and task-level errors. \citet{amit2018meta} extended this approach and provided a tighter risk bound. However, their theory applies to stochastic neural networks and used factorized Gaussians to approximate the parameters' distributions, which is computationally expensive to use in practice. \citet{jose2020information} first analyzed meta-learning through information-theoretic tools, while they applied the assumptions that hide some probabilistic relations and obtained theoretical results substantially different from those presented here. Limited to the space, a more detailed discussion is provided in Appendix \ref{jose_related}.
% \green{K is different from N, N is a subset of K}.
% \blue{Please rewrite this confusing statement. The proposed bounds here also depend on the number $n$ of tasks (fortunately!) and become better as $n$ increases.} 

\textbf{Gradient based meta-learning} In recent years, gradient-based meta-learning such as MAML \citep{finn2017model} have drawn increasing attention since they are model-agnostic and are easily deployed for complex tasks like reinforcement learning, computer vision, and federate learning \citep{jiang2019improving,liu2019taming,fallah2020personalized,gui2018few}. Then, Reptile \citep{nichol2018reptile} provided a general first-order gradient calculation method. Other methods combine MAML and Bayesian methods through structured variational inference \citep{finn2018probabilistic} and empirical Bayes \citep{grant2018recasting}. In Bayes MAML\citep{yoon2018bayesian}, they propose a fast Bayesian adaption method using Stein variational gradient descent and conceived a Chaser loss which coincides with the proposed Theorem \ref{thm:cmi-al}.

On the theoretical side, \citet{denevi2019learning} analyzed the average excess risk for Stochastic Gradient Descent  (SGD) with Convex and Lipschitz loss. \citet{balcan2019provable} studied meta-learning through the lens of online convex optimization, and has provided a guarantee with a regret bound. \citet{khodak2019adaptive} extended to more general settings where the task-environment changes dynamically or the tasks share a certain geometric structure. Other guarantees for online meta-learning scenarios are provided by \citet{denevi2019online} and \citet{finn2019online}. Finally, \citep{fallah2020convergence,ji2020multi} also provided a convergence analysis for MAML-based methods.

\textbf{On meta train-validation split} Although the support query approaches are rather difficult to analyze,  some interesting works have appeared on the simplified linear models. \citet{denevi2018learning} first studied train-validation split for linear centroid meta-learning. They proved a generalization bound and concluded that there exists a trade-off for train-validation split, which is consistent with Theorem \ref{thm:cmi-al} in our paper. \citet{bai2021important} applied the random matrix theoretical analysis for a disentangled comparison between joint training and alternate training under the realizable assumption in linear centroid meta-learning. By calculating the closed-form concentration rates over the mean square error of parameter estimation for the two settings, they obtained a better rate constant with joint training. However, we aim to provide a generic analysis and do not make such a realizable assumption. We believe an additional excess risk analysis with more assumptions is needed for a similar comparison, which is out of the scope of this article. Moreover, \citet{saunshi2021representation} analyzed the train-validation split for linear representation learning. They showed that the train-validation split encourages learning a low-rank representation. More detailed discussion and comparison can be found in Appendix \ref{sq-related}.

\textbf{Information-theoretic learning for single tasks} We use here an information-theoretic approach, introduced by \citet{russo2019much} and \citet{xu2017information}, for characterizing single-task learning. Characterizing the generalization error of a learning algorithm in terms of the mutual information between its input and output brings the significant advantage of the ability to incorporate the dependence on the data distribution, the hypothesis space, and the learning algorithm. This is in sharp contrast with conventional VC-dimension bounds and uniform stability bounds. Tighter mutual information bounds between the parameters and a single data point are explored in \citep{bu2020tightening}.  \citet{pensia2018generalization} applied the mutual-information framework to a broad class of iterative algorithms, including SGLD and stochastic gradient Hamiltonian Monte Carlo (SGHMC).  \citet{negrea2019information} provided data-dependent estimates of information-theoretic bounds for SGLD. For a recent comprehensive study, see \citet{steinke2020reasoning}.

\section{Preliminaries}
\paragraph{Basic Notations}
We use upper case letters, e.g. $X,Y$, to denote random variables and corresponding calligraphic letters $\mathcal{X},\mathcal{Y}$ to denote the sets which they are defined on. We denote as $P_X$, the marginal probability distribution of $X$. Given the Markov chain $X \rightarrow Y$, $P_{Y|X}$ denotes the conditional distribution or the Markov transition kernel.  $X \indep Y$ means $X$ and $Y$ are independent. 

And let us recall some basic definitions:

\begin{definition}
 Let $\psi_X(\lambda) \eqdef \log\mathbb{E}[e^{\lambda(X - \mathbb{E}[X])}]$ denote the cumulant generating function(CGF) of random variable $X$. Then $X$ is said to be $\sigma$-subgaussian if we have
\begin{equation*}
    \psi_X(\lambda) \leq \frac{\lambda^2\sigma^2}{2}, \forall\lambda \in \mathbb{R}\, .
\end{equation*}
\end{definition} \label{def:def1}

\begin{definition}
Let $X$, $Y$ and $Z$ be arbitrary random variables, and let $D_{\text{KL}}$ denote the KL divergence. The mutual information between $X$ and $Y$ is defined as:
\[
I(X;Y) \eqdef D_{\text{KL}}(P_{X,Y}||P_XP_Y)\,.
\]
The disintegrated mutual information between $X$ and $Y$ given $Z$ is defined as:
\[
I^{Z}(X;Y)\eqdef D_{\text{KL}}(P_{X,Y|Z}||P_{X|Z}P_{Y|Z})\, .
\]
The corresponding conditional mutual information is defined as:
\[
I(X;Y|Z) \eqdef \mathbb{E}_{Z}[I^{Z}(X;Y)]\,.
\]
\end{definition} \label{def:def2}

\paragraph{Information theoretic bound for single task learning} We consider an unknown distribution $\mu$ on an instance space $\mathcal{Z} = \mathcal{X}\times \mathcal{Y}$, and a set of independent samples $S = \{Z_i\}_{i=1}^{m}$ drawn from $\mu$: $Z_i \sim \mu$ and $S \sim \mu^{m}$.
Given a parametrized hypothesis space $\mathcal{W}$ and a loss function $\ell:\mathcal{W}\times \mathcal{Z} \rightarrow R$, the true risk and the empirical risk of $w\in\Wcal$ are respectively defined as $ R_\mu(w) \eqdef \mathbb{E}_{Z\sim \mu}\ell(w, Z)$ and  $R_S(w) \eqdef (1/m) \sum_{i=1}^m \ell(w, Z_i)$.

Following the setting of information-theoretic learning \citep{russo2019much,xu2017information,bu2020tightening}, a learning algorithm $\mathcal{A}$ is a randomized mapping that takes a dataset $S$ as input and outputs a hypothesis $W$ according to a conditional distribution $P_{W|S}$, \ie, $W=\mathcal{A}(S) \sim P_{W|S}$.\footnote{Note that the conditional distribution $P_{W|S}$ is different from the posterior distribution in Bayes learning.} The (mean) generalization error $\text{gen}(\mu, \mathcal{A})\eqdef \mathbb{E}_{W,S} [R_{\mu}(W) - R_S(W)]$ of an algorithm $\mathcal{A}$ is then bounded according to:

\begin{theorem} (\citet{xu2017information}) Suppose that for each $w \in \mathcal{W}$, the prediction loss $\ell(w, Z)$ is $\sigma$-subgaussian with respect to $Z \sim \mu$. Then for any randomized learner $\mathcal{A}$ characterized by $P_{W|S}$, for $S\sim\mu^m$, we have 
\[|\text{gen}(\mu, \mathcal{A})| \leq \sqrt{\frac{2\sigma^2}{m}I(W;S)}\, .\]
\end{theorem}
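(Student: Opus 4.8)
The plan is to reduce the statement to the Donsker--Varadhan variational characterization of the KL divergence, combined with the subgaussian hypothesis on the loss. Write $g(w, S) \eqdef R_\mu(w) - R_S(w)$, so that $\text{gen}(\mu, \mathcal{A}) = \mathbb{E}_{P_{W,S}}[g(W,S)]$, where $P_{W,S} = P_{W|S}\,\mu^m$ is the joint law of the coupled output and dataset. The crucial observation is that if instead we draw $W$ and $S$ independently from their marginals $P_W$ and $\mu^m$, then the expectation of $g$ vanishes: for a fixed $w$, $\mathbb{E}_{S\sim\mu^m}[R_S(w)] = R_\mu(w)$ since $R_S(w)$ is an average of $m$ i.i.d.\ unbiased estimates of $R_\mu(w)$. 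Hence the generalization error is exactly the gap between the expectation of $g$ under the coupled measure $P_{W,S}$ and under the product measure $P_W \otimes \mu^m$, and it is this gap that the mutual information controls.

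To turn this gap into a bound I would establish the standard decoupling estimate. Fix $\lambda \in \mathbb{R}$ and apply the Donsker--Varadhan inequality with $P = P_{W,S}$ and $Q = P_W \otimes \mu^m$ to the test function $\lambda g$:
\begin{equation*}
\lambda\, \mathbb{E}_{P_{W,S}}[g] \;\le\; D_{\mathrm{KL}}\!\left(P_{W,S}\,\big\|\,P_W\otimes\mu^m\right) + \log \mathbb{E}_{P_W\otimes\mu^m}\!\left[e^{\lambda g}\right].
\end{equation*}
By Definition \ref{def:def2} the first term on the right equals $I(W;S)$. For the second term I need the subgaussianity of $g$ under the product measure: for each fixed $w$, the quantity $R_\mu(w) - R_S(w) = \tfrac{1}{m}\sum_{i=1}^m\big(\mathbb{E}_{Z}[\ell(w,Z)] - \ell(w, Z_i)\big)$ is an average of $m$ independent, centered, $\sigma$-subgaussian terms, and is therefore $(\sigma/\sqrt{m})$-subgaussian; taking the outer expectation over $W \sim P_W$ (independent of $S$) preserves the moment bound, so $\log \mathbb{E}_{P_W \otimes \mu^m}[e^{\lambda g}] \le \lambda^2\sigma^2/(2m)$.

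Substituting and rearranging yields $\mathbb{E}_{P_{W,S}}[g] \le I(W;S)/\lambda + \lambda \sigma^2/(2m)$ for every $\lambda > 0$; optimizing over $\lambda$ (the minimizer is $\lambda = \sqrt{2m\,I(W;S)}/\sigma$) gives $\text{gen}(\mu,\mathcal{A}) \le \sqrt{2\sigma^2 I(W;S)/m}$. Repeating the argument with $\lambda < 0$ bounds $-\text{gen}(\mu,\mathcal{A})$ by the same quantity, and the two estimates together give the claimed bound on $|\text{gen}(\mu,\mathcal{A})|$.

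The only delicate point — and the step I would treat most carefully — is the subgaussianity bookkeeping: verifying that averaging over $m$ independent samples sharpens the variance proxy from $\sigma$ to $\sigma/\sqrt{m}$, which is precisely where the $1/m$ factor in the final bound originates, and checking that the per-$w$ subgaussian moment estimate survives the outer expectation over the independent $W$. Everything else is the routine application of Donsker--Varadhan and the scalar optimization over $\lambda$.
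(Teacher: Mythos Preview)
Your proposal is correct and follows the same route as the paper (and as \citet{xu2017information}): apply the Donsker--Varadhan inequality (Lemma~\ref{Lemma_B.4}.4) to the pair $(W,S)$, invoke the subgaussianity of the empirical average to bound the cumulant generating function, and optimize over $\lambda$ of both signs to obtain the two-sided bound---this is exactly the Decoupling Estimate recorded as Lemma~\ref{Lemma_B.5}.5 and the template used in the proof of Theorem~\ref{thm:mi-jt}. One minor remark: by working with the already-centered $g(w,S)=R_\mu(w)-R_S(w)$ and noting $\mathbb{E}_S[g(w,S)]=0$ for each fixed $w$, your passage from the per-$w$ subgaussian hypothesis to the product-measure MGF bound is clean and sidesteps the Assumption~(a) vs.\ (b) subtlety the paper discusses in Appendix~\ref{sub}.
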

$I(W;S)$ is the mutual information between the input and output of algorithm $\Acal$ (see definition in Definition A.2). Theorem 3.1 reveals that the less the output hypothesis $W$ depends on the dataset $S$, the smaller the generalization error of the learning algorithm will be.

\section{Problem Setup}
% In this section, we first introduce a general setting for meta learning, where different tasks are sampled from the same task environment and the common information shared by different tasks is the object to learn. Then we derive two generalization error bounds for two previously-studied training schemes.

%\subsection*{Task Environment \& Data Set} 

%Following the setting of \cite{baxter2000model,maurer2016benefit}, we have a set of all probability distributions $\mathcal{M}_1(\mathcal{Z})$ defined on $\mathcal{Z}=\mathcal{X}\times\mathcal{Y}$. A \emph{task environment} $\tau$ is a probability measure on $\mathcal{M}_1(\mathcal{Z})$. A \emph{task} is characterized by a set of i.i.d. samples generated from an unknown probability measure $\mu\in\mathcal{M}_1(\mathcal{Z})$ that is sampled from the task environment $\tau$. 
Following \cite{baxter2000model}, we assume
that all tasks originate from a common \emph{environment} $\tau$, which is a probability
measure on the set of probability measures on $\mathcal{Z}=\mathcal{X}\times\mathcal{Y}$. The draw of $\mu\sim\tau$ represents encountering a learning task $\mu$ in the environment $\tau$. To run a learning algorithm for a task, we need to draw a set of data samples from $\mu$. In meta learning, there are multiple tasks, for simplicity, we assume that each task has the same sample size $m$. Based on \citet{maurer2016benefit}, the environment $\tau$ induces a mixture distribution $\mu_{m,\tau}$ on $\mathcal{Z}^m$ such that $\mu_{m, \tau}(A)=\mathbb{E}_{\mu \sim \tau}[\mu^m{(A)}], \forall A \subseteq \mathcal{Z}^m$. Thus the $m$ data points in $S$ that are independently sampled from a random task $\mu$ encountered in $\tau$ is denoted as $S \sim \mu_{m,\tau}$.
%We also assume each task has the same sample size $m$.

Consequently, for $n$ train tasks that are independently sampled from the environment $\tau$, each train data set is denoted as $S_i\sim \mu_{m, \tau}$ for $i\in [n]$. Analogously, for $k$ test tasks data sets, we denote $S^{\text{te}}_i\sim \mu_{m, \tau}$ for each $i\in [k]$.  We further denote the (full) training set as $S_{1:n}= (S_1,..., S_n)$ and the (full) testing set as $S^{\text{te}}_{1:k} = (S^{\text{te}}_1,...,S^{\text{te}}_k)$.

% \blue{I notice that you have changed your terminology. As defined above, a task is a training set of $m$ examples but a collection of $n$ tasks is the (full) training set. For me, it seems more natural to denote $\mu_i \sim\tau$ as task $i$, $S_{i}$ as the training set of task $i$, and $S_{1:n}$ as the collection of training sets for all $n$ tasks. Just a suggestion.}  

\subsection*{Meta Learner \& Base Learner} 

Since different tasks are assumed to be an i.i.d. sampling from $\tau$, they should share some common information. We use a \emph{meta parameter} $U\in\mathcal{U}$ to represent this shared knowledge. We also denote by $W_{1:n}= (W_1,\dots,W_n)$ the \emph{task specific parameters}, where each $W_i\in\mathcal{W}, \forall i\in[n]$. By exploring the relations between $U$ and $W$, we can design different meta learning algorithms. For example, \cite{pentina2014pac,amit2018meta} treated $U$ as the hyper-parameters of the base learner that produces $W$. In gradient based meta-learning such as MAML~\citep{finn2017model}, $U$ was chosen to be an initialization of $W$ (hence, $\mathcal{U}=\mathcal{W}$) for a gradient-descent base learner. 
% \blue{This very vague statement and should be more precise.}

We define the \emph{meta learner} $\mathcal{A}_{\text{meta}}$ as an algorithm that takes the data sets $S_{1:n}$ as input, and then outputs a random meta-parameter $U = \mathcal{A}_{\text{meta}}(S_{1:n}) \sim P_{U|S_{1:n}}$, which is a distribution that characterizes $\mathcal{A}_{\text{meta}}$. When learning a new task, the \textit{base learner} $\mathcal{A}_{\text{base}}$ uses a new data set $S\sim \mu_{m,\tau}$ and the estimated meta-parameter $U$ to output a stochastic predictor $W = \mathcal{A}_{\text{base}}(U,S) \sim P_{W|U,S}$.\footnote{Although the base learner is the same, $P_{W_i|U, S_i}$ is different for each task $i$ due to the different data set $S_i$.} 

% \begin{figure}[h!]
% \centering
%   \includegraphics[scale=0.6]{meta_diagram.png}
%   \caption{meta learning process}
%   \label{fig:toy}
% \end{figure}

To evaluate the quality of the meta information $U$ for learning a \emph{new task}, we define the \emph{true meta risk}, given the base learner $\mathcal{A}_{\text{base}}$, as

\hspace{4cm}$R_{\tau}(U) \eqdef \mathbb{E}_{S \sim \mu_{m,\tau}} \mathbb{E}_{W \sim P_{W|S, U}}[R_{\mu}(W)]\,.$ %\footnote{Equivalent to $\mathbb{E}_{\mu \sim \tau}\mathbb{E}_{S|\mu \sim \mu^m}\mathbb{E}_{W\sim P_{W|S,U}}[R_{\mu}(W)]$}

\subsection*{Joint Training \& Alternate Training}

Since $\tau$ and $\mu$ are unknown, we can only estimate $U$ and $W$ from the observed data. Generally, there are two different types of methods for evaluating meta and task parameters. 

\begin{figure}[ht]
\hspace{-1cm}
     \centering
     \begin{subfigure}[b]{0.45\textwidth}
         \centering
         \includegraphics[width=0.85\textwidth]{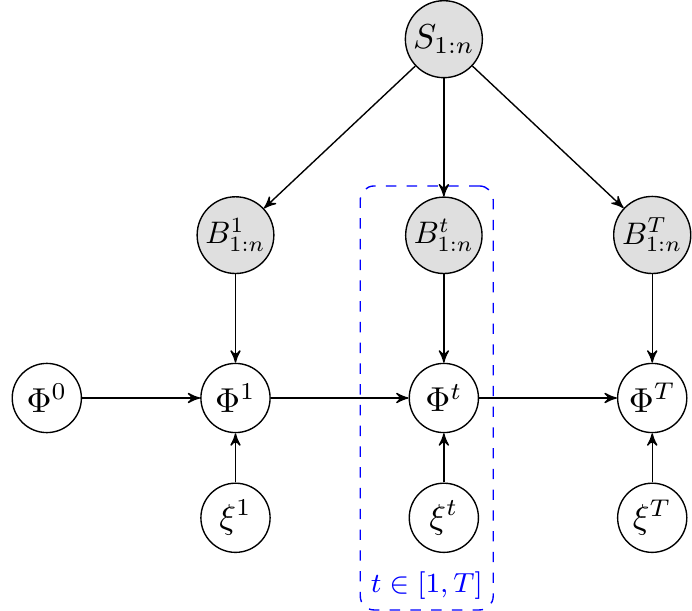}
         \caption{Joint Training}
         \label{fig:jlt}
     \end{subfigure}
     \quad\quad
     \begin{subfigure}[b]{0.4\textwidth}
         \centering
         \includegraphics[width=0.88\textwidth]{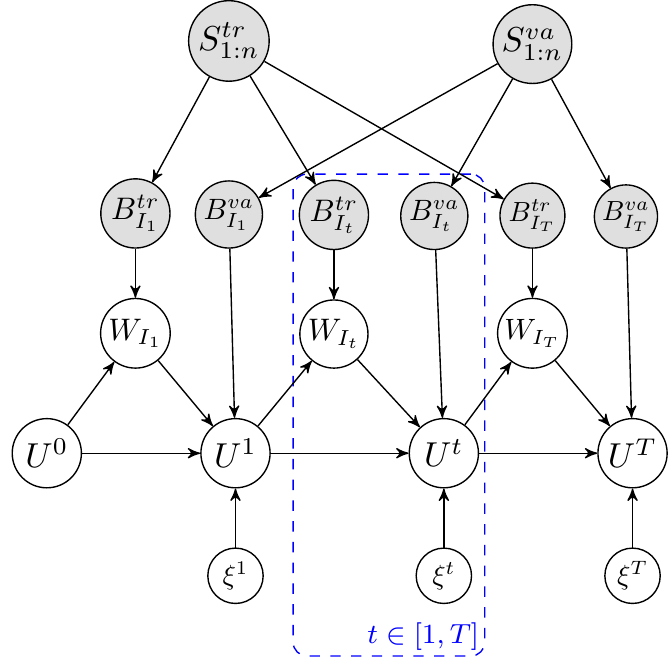}
         \caption{Alternate Training}
         \label{fig:alt}
     \end{subfigure}
     \caption{Parameter updating strategy through noisy iterative approach.}
\end{figure}

For \emph{Joint Training}~\citep{amit2018meta,pentina2014pac}, the whole dataset $S_{1:n}$ is used to jointly evaluate all the parameters $(U,W_{1:n})$ in parallel. A similar training protocol is illustrated in Fig.~\ref{fig:jlt}.
Then the corresponding \emph{empirical meta risk} w.r.t. $U$ is defined as:

\hspace{4cm}$R_{S_{1:n}}(U) \eqdef \frac{1}{n}\sum_{i=1}^n \mathbb{E}_{W_i \sim P_{W_i|S_i,U}}[R_{S_i}(W_i)].$

For \emph{Alternate training}, used in modern deep meta-learning algorithms~\citep{finn2017model}, $S_i$ is randomly split into two smaller datasets: a meta-train set $S^{\text{tr}}_i$ with $|S^{\text{tr}}_i|=m_{\text{tr}}$ and a meta-validation set $S^{\text{va}}_i$ with $|S^{\text{va}}_i| = m_{\text{va}}$ examples for each $i\in [n]$. In few-shot learning, $S_{1:n}^{\text{tr}}$ and $S_{1:n}^{\text{va}}$ are denoted as the support set and query set. Additionally, we have $m = m_{\text{tr}} + m_{\text{va}}$ and $S_i^{\text{tr}} \indep S_i^{\text{va}}$. An example of the training protocol is illustrated in Fig.~\ref{fig:alt}, where $(U,W_{1:n})$ are alternately updated through $S^{\text{va}}_{1:n}$ and $S^{\text{tr}}_{1:n}$, respectively. The corresponding \emph{empirical meta risk} w.r.t $U$ is defined as:

\hspace{4cm}$\tilde{R}_{S_{1:n}}(U)\eqdef\frac{1}{n}\sum_{i=1}^n \mathbb{E}_{W_i \sim P_{W_i|S^{\text{tr}}_i,U}}[R_{S^{\text{va}}_i}(W_i)]\,$

Then, the \emph{meta generalization error} within these two modes w.r.t. $\mathcal{A}_{\text{meta}}$ and $\mathcal{A}_{\text{base}}$ are respectively defined as 
% \begin{align*}
% \text{gen}^{\text{joi}}_{\text{meta}}(\tau, \mathcal{A}_{\text{meta}}, \mathcal{A}_{\text{base}})  =\  \mathbb{E}_{U,S_{1:n}}[R_{\tau}(U) - R_{S_{1:n}}(U)],\\
% \text{gen}^{\text{alt}}_{\text{meta}}(\tau,\mathcal{A}_{\text{meta}}, \mathcal{A}_{\text{base}})=\mathbb{E}_{U,S_{1:n}}[R_{\tau}(U) - \tilde{R}_{S_{1:n}}(U)]\,.
% \end{align*}

\hspace{3cm} $\text{gen}^{\text{joi}}_{\text{meta}}(\tau, \mathcal{A}_{\text{meta}}, \mathcal{A}_{\text{base}})\ \eqdef\  \mathbb{E}_{U,S_{1:n}}[R_{\tau}(U) - R_{S_{1:n}}(U)],$

\hspace{3cm} $\text{gen}^{\text{alt}}_{\text{meta}}(\tau,\mathcal{A}_{\text{meta}}, \mathcal{A}_{\text{base}})\ \eqdef\ \mathbb{E}_{U,S_{1:n}}[R_{\tau}(U) - \tilde{R}_{S_{1:n}}(U)]\,.$

\section{Information-Theoretic Generalization Bounds}

We provide here novel generalization bounds for joint and alternate training, which are respectively characterized by mutual information (MI) and conditional mutual information (CMI). These theoretical results are valid for any \emph{randomized} algorithm $\mathcal{A}_{\text{meta}}$ and $\mathcal{A}_{\text{base}}$. But for some deterministic algorithms producing deterministic predictors, the mutual information bound can be vacuous.

% It is important to keep in mind that generalization error bounds based on mutual information can become very loose for deterministic learning algorithms and, consequently, tend to suggest stochastic learners such as the proposed Meta-SGLD in Section~6.2.  
% Note these theoretical results are valid for any \emph{randomized} algorithms $\mathcal{A}_{\text{meta}}$ and $\mathcal{A}_{\text{base}}$. In contrast, for non-deterministic algorithms, they are not always valid. For example, in the conventional Empirical Risk Minimization (ERM) with infinite hypothesis space, the output is fully dependent on the data set. The bound will be vacuous since the mutual information will be infinite.

\subsection{Mutual Information (MI) Bound in Joint Training}
\begin{theorem}\label{thm:mi-jt}
Suppose all tasks use the same loss $\ell(Z,w)$, which is $\sigma$-subgaussian for each $w \in \mathcal{W}$, where $Z \sim \mu, \mu \sim \tau$.  Then, the meta generalization error for joint training is upper bounded by 
\begin{equation*}\label{eq:eq_mi_joi}
 |\text{gen}^{\text{joi}}_{\text{meta}}(\tau,\mathcal{A}_{\text{meta}}, \mathcal{A}_{\text{base}})| \leq  \sqrt{\frac{2\sigma^2}{nm} I(U,W_{1:n};S_{1:n})}\, .
\end{equation*}
\end{theorem}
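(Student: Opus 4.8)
The plan is to deduce the meta bound from the single-task information-theoretic estimate stated above by bundling the meta-parameter and all task-specific parameters into one ``output'' $V \eqdef (U, W_{1:n})$ and treating the concatenation $S_{1:n}$ of the $n$ training sets (a total of $nm$ examples drawn from the mixture $\mu_{m,\tau}$) as the ``input''. First I would rewrite the empirical meta risk, after taking the expectation over $W_{1:n}$, as the average of the $nm$ per-example losses $R_{S_{1:n}}(U) = \frac{1}{nm}\sum_{i=1}^n\sum_{j=1}^m \ell(W_i, Z_{i,j})$, so that $\mathbb{E}_{V,S_{1:n}}$ of this quantity equals $\mathbb{E}_{U,S_{1:n}}[R_{S_{1:n}}(U)]$. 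The core tool is the decoupling estimate underlying the single-task theorem: for any $f$ such that $f(v, S_{1:n})$ is $\rho$-subgaussian in $S_{1:n}$ for each fixed $v$, the Donsker--Varadhan variational formula for the KL divergence gives $\lvert \mathbb{E}_{P_{V,S_{1:n}}}[f] - \mathbb{E}_{P_V\otimes P_{S_{1:n}}}[f]\rvert \le \sqrt{2\rho^2\, I(V;S_{1:n})}$, with $I(V;S_{1:n}) = I(U,W_{1:n};S_{1:n})$.

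The subgaussian bookkeeping is what supplies the $nm$ in the denominator: since each $\ell(w,Z)$ is $\sigma$-subgaussian, the $n$ tasks are drawn independently, and conditionally on a task the $m$ examples are i.i.d., a cumulant-generating-function computation over the $nm$ examples gives the averaged loss $f(v,\cdot)$ an effective subgaussian parameter $\rho = \sigma/\sqrt{nm}$, so the right-hand side becomes $\sqrt{(2\sigma^2/nm)\,I(U,W_{1:n};S_{1:n})}$. To connect this single quantity with the environment-level and task-level reading advertised in the introduction, I would invoke the chain rule $I(U,W_{1:n};S_{1:n}) = I(U;S_{1:n}) + I(W_{1:n};S_{1:n}\mid U)$, the first term controlling how strongly the meta-parameter memorizes the observed tasks and the second how strongly each base predictor memorizes its own training set.

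The hard part will be verifying that the decoupled expectation $\mathbb{E}_{P_V\otimes P_{S_{1:n}}}[f]$ actually reproduces the true meta risk $\mathbb{E}_U[R_\tau(U)]$, rather than an ``average-task, no-adaptation'' surrogate. The delicate point is that $R_\tau(U)$ re-runs the base learner on a \emph{fresh} set $S\sim\mu_{m,\tau}$ drawn from a fresh task and evaluates $R_\mu(W)$ on that same task, whereas naively decoupling $V$ from $S_{1:n}$ freezes the already-trained $W_i$ and evaluates it on independent data. I would reconcile these by introducing per-task ghost samples $\widehat S_i\sim\mu_i^m$, independent of $S_i$ given the task, using $R_{\mu_i}(W_i)=\mathbb{E}_{\widehat S_i}[R_{\widehat S_i}(W_i)]$ to turn the population risk into an empirical risk on fresh data, and exploiting the i.i.d.\ structure of the environment so that evaluating on a held-out task with $U$ independent of its data has the same law as the fresh-task expectation defining $R_\tau$. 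Carrying out this substitution exactly while retaining the single mutual-information term $I(U,W_{1:n};S_{1:n})$ together with the uniform $1/(nm)$ normalization---rather than paying separate $1/n$ and $1/(nm)$ rates for the two chain-rule pieces---is the crux of the argument and the step that most repays careful checking.
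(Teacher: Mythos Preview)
Your approach is essentially the paper's: bundle $\Phi=(U,W_{1:n})$, apply the Donsker--Varadhan decoupling estimate to $f(\Phi,S_{1:n})=\frac{1}{nm}\sum_{i,j}\ell(W_i,Z_{i,j})$, use $\sigma/\sqrt{nm}$-subgaussianity of the averaged loss, and optimize over $\lambda$. The chain-rule reading you propose afterward also matches the paper's discussion following the theorem.

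You have correctly isolated the delicate step, namely showing that the decoupled expectation $\mathbb{E}_{P_\Phi\otimes P_{S_{1:n}}}[f]$ equals $\mathbb{E}_U[R_\tau(U)]$. The paper does \emph{not} use ghost samples; instead it argues that since $\tilde\Phi\indep S_{1:n}$ one has $P_{\tilde W|S,\tilde U}=P_{\tilde W|\tilde U}$, writes $R_\tau(\tilde U)=\mathbb{E}_{\mu\sim\tau}\,\mathbb{E}_{\tilde W\sim P_{\tilde W|\tilde U}}[R_\mu(\tilde W)]$, and then collapses the decoupled expectation to $\mathbb{E}_{\tilde U}R_\tau(\tilde U)=\mathbb{E}_{U,S_{1:n}}R_\tau(U)$. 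Your worry is well founded: the definition of $R_\tau(u)$ runs the base learner $P_{W|S,u}$ on fresh $S\sim\mu^m$ and evaluates on the \emph{same} task $\mu$, whereas the decoupled side effectively trains $\tilde W_i$ on one (marginalized) task and evaluates on an independent $\mu'$. The paper's substitution of $P_{\tilde W|S,\tilde U}$ for the base-learner kernel $P_{W|S,\tilde U}$ in the displayed expression for $R_\tau(\tilde U)$ is precisely where this train-task/test-task coupling is silently dropped, and it is the line you should scrutinize. Note too that your ghost-sample device addresses fresh test \emph{data} from a fixed task but not the coupling between the training task and the evaluation task inside $R_\tau$, so on its own it will not close the gap you have identified either.
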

The proof of Theorem \ref{eq:eq_mi_joi} is presented in Appendix \ref{proof_thm_mi}. Moreover, according to the chain rule of mutual-information, the error bound in Theorem \ref{thm:mi-jt} can be further decomposed as

\hspace{0.5cm}$\sqrt{\frac{2\sigma^2}{mn} \left(I(U;S_{1:n}) + \sum_{i=1}^n I(W_{i}; S_{i}|U)\right)} \leq \sqrt{\frac{2\sigma^2}{mn} I(U;S_{1:n})} + \sqrt{\frac{2\sigma^2}{mn} \sum_{i=1}^n I(W_i;S_i|U)}$\, . 

\textbf{Discussions}  The first and second terms reflect, respectively, the environmental and task-level uncertainty. \textbf{1)} In the limit of a very large number of tasks ($n\to\infty$) and a finite number $m$ of samples per task, the first term converges to zero, while the second term remains non-zero. This is consistent with Theorem 1 of \citet{bai2021important}, where they proved that joint training has a bias in general. However, this non-zero term will be smaller than the mutual information of single-task learning. Indeed, let $I(W;S)$ denotes the mutual information of single-task learning, we have, as shown in Appendix \ref{Proof_ben}, that   
$I(W;S) \geq I(W;S|U) \approx \frac{1}{n} \sum_{i=1}^n I(W_i;S_i|U)$, which illustrates the benefits of learning the meta-parameter $U$. \textbf{2)} When we have a constant number $n$ of tasks, while the number $m$ of samples per task goes to infinity, the whole bound will converge to zero. Note that the meta generalization error bound reflects how the meta-information assists a new task to learn. If the new task has a sufficiently large number $m$ of samples, the generalization error will be small, and the meta-information $U$ does not significantly help learning the new task.

\textbf{Relation with previous work} Since mutual information implicitly depends on the unknown distribution $\tau$, it is hard to estimate and minimize~\citep{mcallester2020formal}. By introducing an arbitrary distribution-free prior $Q$ on $\mathcal{U}\times\mathcal{W}^n$,  we can upper bound
$I(U,W_{1:n};S_{1:n}) \leq I(U,W_{1:n};S_{1:n}) + D_{\text{KL}}(P_{U,W_{1:n}}||Q) = \mathbb{E}_{S_{1:n}}D_{\text{KL}}(P_{U,W_{1:n}|S_{1:n}}||Q)$ 
(see Lemma \ref{Lemma_B.1}.1). If we set a joint prior $Q = \mathcal{P}\times\prod_{i=1}^n P$, then the bound of Theorem~\ref{thm:mi-jt} becomes similar to the one proposed by~\cite{amit2018meta}, where $\mathcal{P}$ is the hyper-prior and $P$ is the task-prior in their settings. Finally, note that the bound of Theorem 5.1 is tighter than the one proposed by~\citep{pentina2014pac}, where the $\text{KL}$ divergence is outside of the square root function.

\subsection{Conditional Mutual Information (CMI) Bound for Alternate Training}
\begin{theorem}\label{thm:cmi-al}
Assume that all the tasks use the same loss function $\ell(Z,w)$, which is $\sigma$-subgaussian for each $w \in \mathcal{W}$, where $Z\sim\mu, \mu\sim\tau$.Then we have
\[
 |\text{gen}^{\text{alt}}_{\text{meta}}(\tau, \mathcal{A}_{\text{meta}}, \mathcal{A}_{\text{base}})| \leq \mathbb{E}_{S^{\text{tr}}_{1:n}}\sqrt{\frac{2\sigma^2I^{S^{\text{tr}}_{1:n}}(U, W_{1:n};S^{\text{va}}_{1:n})}{nm_{\text{va}}}}
\leq\sqrt{\frac{2\sigma^2I(U,W_{1:n};S^{va}_{1:n}|S^{tr}_{1:n})}{nm_{\text{va}}}}\, .
\]
\end{theorem}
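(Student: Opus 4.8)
The plan is to prove the first (tighter) inequality and then deduce the second by Jensen's inequality. The whole argument is a conditional version of the single-task bound of Theorem 3.1: I would fix the meta-train sets $S^{\text{tr}}_{1:n}=s^{\text{tr}}$, work under the conditional law $P_{\,\cdot\,\mid s^{\text{tr}}}$, and treat the pair $(U,W_{1:n})$ as the output hypothesis and the meta-validation sets $S^{\text{va}}_{1:n}$ as the data. The two structural facts that drive the reduction are that, in alternate training, each $W_i$ is produced from $S^{\text{tr}}_i$ and $U$ alone, so that $W_i \indep S^{\text{va}}_{1:n}\mid (U,S^{\text{tr}}_i)$, and that $S^{\text{tr}}_i \indep S^{\text{va}}_i$. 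Thus, once $s^{\text{tr}}$ is frozen, all remaining dependence of the learned objects on the validation data flows through $U$, which is exactly what the disintegrated quantity $I^{s^{\text{tr}}}(U,W_{1:n};S^{\text{va}}_{1:n})$ measures.

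Next I would set up the decoupling. Consider the average validation loss $f(U,W_{1:n},S^{\text{va}}_{1:n})=\tfrac1n\sum_{i=1}^n R_{S^{\text{va}}_i}(W_i)$. Its expectation under the true conditional joint law is precisely $\mathbb{E}[\tilde{R}_{S_{1:n}}(U)\mid s^{\text{tr}}]$. I would then compare this against its expectation under the product law $P_{U,W_{1:n}\mid s^{\text{tr}}}\otimes P_{S^{\text{va}}_{1:n}\mid s^{\text{tr}}}$, i.e.\ I replace the realized validation sets by an independent ghost copy drawn from the same tasks. Using that $R_{S^{\text{va}}_i}(W_i)$ is an unbiased estimate of $R_{\mu_i}(W_i)$ whenever the validation set is independent of $W_i$ given the task, together with the i.i.d.\ task structure and the fact that the base learner only sees $S^{\text{tr}}_i$, I would identify this product-law expectation with the (conditional) true meta risk $\mathbb{E}[R_\tau(U)\mid s^{\text{tr}}]$. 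The conditional generalization gap is then exactly the ``joint minus product'' difference of the single fixed function $f$.

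The final analytic ingredient is the subgaussian decoupling (Donsker--Varadhan) estimate used to prove Theorem 3.1. Holding the parameters fixed, $f$ is an average of $n\,m_{\text{va}}$ loss terms; because the validation sets are independent across the $n$ tasks given $s^{\text{tr}}$ and each $\ell(w,Z)$ is $\sigma$-subgaussian, one shows $f$ is $\sigma/\sqrt{n\,m_{\text{va}}}$-subgaussian as a function of $S^{\text{va}}_{1:n}$. The variational bound then gives
\[
\bigl|\,\mathbb{E}[R_\tau(U)-\tilde{R}_{S_{1:n}}(U)\mid s^{\text{tr}}]\,\bigr|\ \le\ \sqrt{\frac{2\sigma^2}{n\,m_{\text{va}}}\,I^{s^{\text{tr}}}(U,W_{1:n};S^{\text{va}}_{1:n})}\,.
\]
Taking $\mathbb{E}_{S^{\text{tr}}_{1:n}}$ and using $|\mathbb{E}[\cdot]|\le\mathbb{E}|\cdot|$ yields the first inequality; then Jensen's inequality $\mathbb{E}\sqrt{X}\le\sqrt{\mathbb{E}X}$ with the identity $I(U,W_{1:n};S^{\text{va}}_{1:n}\mid S^{\text{tr}}_{1:n})=\mathbb{E}_{S^{\text{tr}}_{1:n}}[I^{S^{\text{tr}}_{1:n}}(U,W_{1:n};S^{\text{va}}_{1:n})]$ gives the second.

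I expect the identification step to be the main obstacle. Since $U$ is meta-updated from $S^{\text{va}}_{1:n}$, decoupling the validation sets from $(U,W_{1:n})$ alters the effective predictor, and one must argue with care --- leaning on $S^{\text{tr}}_i\indep S^{\text{va}}_i$, the conditional independence $W_i\indep S^{\text{va}}_{1:n}\mid(U,S^{\text{tr}}_i)$, and the i.i.d.\ tasks --- that the decoupled validation risk reproduces $R_\tau(U)$ and not merely an empirical risk on the realized tasks. A secondary delicacy is the subgaussian aggregation: the $m_{\text{va}}$ points within a task are only conditionally independent given their task distribution, so obtaining the normalization $n\,m_{\text{va}}$ (rather than $n$) requires the per-$w$ subgaussian hypothesis to be applied at the level of each task and then combined across the independent tasks.
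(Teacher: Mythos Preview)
Your proposal is correct and follows essentially the same route as the paper: fix $S^{\text{tr}}_{1:n}$, apply the Donsker--Varadhan decoupling to the function $f(\Phi,S^{\text{va}}_{1:n})=\tfrac1n\sum_i R_{S^{\text{va}}_i}(W_i)$ with $\Phi=(U,W_{1:n})$, identify the coupled expectation with $\tilde{R}_{S_{1:n}}(U)$ and the decoupled one with $R_\tau(U)$, use $\sigma/\sqrt{nm_{\text{va}}}$-subgaussianity of $f$ to optimize over $\lambda$, and finish by averaging over $S^{\text{tr}}_{1:n}$ and Jensen. You have also correctly flagged the two places where care is needed --- the identification of the decoupled validation risk with $R_\tau(U)$ (which the paper handles by exploiting $\tilde{\Phi}\indep S^{\text{va}}_{1:n}$ and the factorization $p(w_{1:n}\mid s^{\text{tr}}_{1:n},u)=\prod_i p(w_i\mid s^{\text{tr}}_i,u)$) and the $nm_{\text{va}}$ subgaussian normalization.
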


See the proof in Appendix \ref{proof_thm_cmi}.
The second inequality is obtained with the Jensen's inequality for the concave square root function and Lemma \ref{Lemma_B.3}.3. Additionally, we can apply the chain rule on the conditional mutual information, to obtain the following decomposition:

$I(U,W_{1:n};S^{\text{va}}_{1:n}|S^{\text{tr}}_{1:n}) = I(U;S^{\text{va}}_{1:n}|S^{\text{tr}}_{1:n}) + \sum_{i=1}^n I(W_{i};S^{\text{va}}_{i}|U,S^{tr}_{i})$

\hspace{3.1cm} $= \mathbb{E}_{S_{1:n}}D_{\text{KL}}(P_{U|S_{1:n}}||P_{U|S^{\text{tr}}_{1:n}})
+ \mathbb{E}_{U,S_{1:n}}\sum_{i=1}^{n}D_{\text{KL}}(P_{W_i|S_i,U}||P_{W_i|S^{\text{tr}}_i,U}))$.

\textbf{Discussions}~The aforementioned decomposition reveals the following intuition: Suppose the outputs of the base learner and meta learner w.r.t. different input data-sets are similar (\ie~the learning algorithms are stable w.r.t. the data). In that case, the meta-generalization error bound will be small.
Moreover, since the bound of Theorem~\ref{thm:cmi-al} is data-dependent w.r.t. $S_{1:n}^{tr}$, we can obtain tighter theoretical results through these data-dependent estimates. This is to be contrasted with the mutual information bound of Theorem~\ref{thm:mi-jt}, which depends on the unknown distribution and can thus be inflated through the variational form. In Sec~\ref{algo}, we analyze noisy iterative algorithms in deep few-shot learning to obtain tighter estimates. Besides, there exists an inherent trade-off in  choosing $m_{\text{va}}$. If $m_{\text{va}}$ is large, then the denominator in the bound is large. However, since $m_{\text{va}} = m - m_{\text{tr}}$, $D_{\text{KL}}(P_{U|S_{1:n}}||P_{U|S^{\text{tr}}_{1:n}})$ and $D_{\text{KL}}(P_{W_i|S_i,U}||P_{W_i|S^{\text{tr}}_i,U})$ will also become large since smaller $m_{\text{tr}}$ will lead to less reliable outputs.

\section{Generalization Bounds for Noisy Iterative Algorithms}\label{algo}
We will now exploit Theorem~\ref{thm:mi-jt} and \ref{thm:cmi-al}. to analyze concrete algorithms. Specifically, in noisy iterative algorithms, all the iterations are related through a Markov structure, which can naturally apply the information chain rule. Our theoretical results focus on one popular instance: SGLD \citep{welling2011bayesian}, which is a variant of Stochastic Gradient Descent (SGD) with the addition of a scaled isotropic Gaussian noise to each gradient step. The injected noise allows SGLD to escape the local minima and asymptotically converge to global minimum for sufficiently regular non-convex objectives~\citep{raginsky2017non}. 
It is worth mentioning that other types of iterative algorithms such as SG-HMC can be also analyzed within our theoretical framework, which is left as the future work.

Since the algorithms to be analyzed require sampling mini-batches of sample at each iteration, we make the following independence assumption:

\textbf{Assumption 1} \textit{The sampling strategy is independent of the parameters and the previous samplings.} 

\subsection{Bound for Joint Training with Bounded Gradient}
% \begin{wrapfigure}[16]{l}{0.5\textwidth}
%   \centering
% \scalebox{0.8}{
% \tikz{
%   \node[latent](U0) {$\Phi^{0}$};
%   \node[latent, right = of U0, xshift=0.2cm](U1) {$\Phi^{1}$};
%   \node[latent, below = of U1, scale=1, yshift=0.5cm](N1) {$\xi^{1}$};
%   \node[obs,above = of U1, yshift = -0.1cm,scale=0.9](St1){$B^{1}_{1:n}$};
%   \node[latent, right = of U1,xshift=0.4cm](Ut) {$\Phi^{t}$};
%   \node[obs,above = of Ut,yshift =-0.1cm, scale=0.9](Stt){$B^{t}_{1:n}$};
%   \node[latent, below = of Ut, scale=1, yshift=0.5cm](Nt) {$\xi^{t}$};
%   \node[latent, right = of Ut,xshift=0.4cm](UT) {$\Phi^{T}$};
%   \node[obs,above = of UT,yshift =-0.1cm, scale=0.9](StT){$B^{T}_{1:n}$};
%   \node[latent, below = of UT, scale=1, yshift=0.5cm](NT) {$\xi^{T}$};
%   \node[obs, above = of Stt, yshift =0.2cm](S) {$S_{1:n}$};
%   % plate
%   \plate [inner sep=0.1cm,color=blue,dashed] {plate1} {(Stt)(Ut)(Nt)}{\textcolor{blue}{$t\in[1,T]$}}; %
%   % edges
%   \edge{U0, N1, St1} {U1}
%   \edge{U1, Nt, Stt} {Ut}
%   \edge{Ut, NT, StT} {UT}
%   \edge{S} {St1}
%   \edge{S} {Stt}
%   \edge{S} {StT}
%   %\edge{St1}{Stt}
%   %\edge{Stt}{StT}
% }
% }
% \caption{Training Strategy in joint training}
% \label{fig:sgld_joint}
% \end{wrapfigure}
In joint training, the meta and base parameters are updated simultaneously. We denote $\Phi \eqdef (U,W_{1:n}) \in \mathcal{U}\times\mathcal{W}^n$, $\mathcal{U}\subseteq \Reals^k, \mathcal{W}\subseteq \Reals^d$. The training strategy is illustrated in Fig~\ref{fig:jlt}.
Concretely, the learning algorithm executes $T$ iterations. We further denote $\Phi^t$ as the updated parameter at iteration $t\in[T]$, with $\Phi^0$ being a random initialization. 

At iteration $t\in[T]$ and for task $i \in [n]$, we randomly sample a batch $B_i^t \subseteq S_i$ of size $b$ and an isotropic Gaussian noise $\xi^t\sim N(0,\sigma_t^2\mathbb{I}_{(nd +k)})$. Let $\xi^t = (\xi^t_0, \dots, \xi^t_n)$, where $\xi^t_0 \in \mathcal{U}$, and $\xi^t_{i}\in\Wcal,~\forall i\in[n]$. Then the updating rule at iteration $t$ can be expressed as 

\hspace{4cm} $\Phi^t = \Phi^{t-1} -\eta_t G(\Phi^{t-1}, B^t_{1:n}) + \xi^t\, ,$

where $G$ is the gradient of the empirical meta-risk on $B^t_{1:n}$ w.r.t. all the parameters $\Phi$, and where $\eta_t$ is the learning rate. In addition, we assume bounded gradients:

\textbf{Assumption 2} 
\textit{The gradients are bounded, \ie, $\sup\limits_{\Phi\in R^{(nd +k)}, s\in\mathcal{Z}^{bn}}||G(\Phi,s)||_2\leq L,$ with $L > 0.$}

Then the mutual information in Theorem~\ref{thm:mi-jt} can be upper-bounded as follows.
\begin{theorem}
Based on Theorem \ref{thm:mi-jt}, for the SGLD algorithm that satisfies Assumptions 1 \& 2, the mutual information for joint training satisfies

\hspace{4cm}$I(\Phi;S_{1:n}) \leq \sum_{t=1}^T \frac{nd+k}{2} \log(1 + \frac{\eta_t^2L^2}{(nd + k)\sigma_t^2})\, .$

Specifically, if $\sigma_t=\sqrt{\eta_t}$, and $\eta_t = \frac{c}{t}$ for $c>0$, we have:

\hspace{4cm}$|\text{gen}_{\text{meta}}^{\text{joi}}(\tau, \mathcal{A}_{meta}, \mathcal{A}_{base})| \leq \frac{\sigma L}{\sqrt{nm}}\sqrt{c\log T + c}\, .$

\end{theorem}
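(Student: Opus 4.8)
The plan is to bound the mutual information $I(\Phi; S_{1:n})$ by exploiting the Markov structure of the SGLD iterates and then optimize the resulting bound over the given learning-rate schedule. First I would observe that since $\Phi = \Phi^T$ is a deterministic function of the full trajectory $(\Phi^0, \Phi^1, \dots, \Phi^T)$, the data-processing inequality gives $I(\Phi; S_{1:n}) \le I(\Phi^{0:T}; S_{1:n})$. Using the chain rule for mutual information together with the fact that the iterates form a Markov chain conditioned on the data (each $\Phi^t$ depends on the past only through $\Phi^{t-1}$ and the freshly sampled batches), I would decompose this into a telescoping sum $\sum_{t=1}^T I(\Phi^t; S_{1:n} \mid \Phi^{t-1})$, invoking Assumption 1 so that the batch selection is independent of the parameters and previous draws.

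The heart of the argument is to bound each per-iteration term $I(\Phi^t; S_{1:n} \mid \Phi^{t-1})$. Conditioned on $\Phi^{t-1}$, the update $\Phi^t = \Phi^{t-1} - \eta_t G(\Phi^{t-1}, B^t_{1:n}) + \xi^t$ is obtained by adding isotropic Gaussian noise $\xi^t \sim N(0, \sigma_t^2 \mathbb{I}_{(nd+k)})$ to a signal whose dependence on the data is carried entirely by the gradient term $-\eta_t G(\Phi^{t-1}, B^t_{1:n})$. I would upper bound the mutual information by the capacity of an additive white Gaussian noise channel: the key inequality is that for $Y = X + \xi$ with $\xi$ independent isotropic Gaussian, $I(X; \cdot) \le \frac{nd+k}{2}\log\!\bigl(1 + \tfrac{\mathbb{E}\|X - \mathbb{E}X\|^2}{(nd+k)\sigma_t^2}\bigr)$, which follows from the maximum-entropy property of the Gaussian under a second-moment constraint. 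Here Assumption 2 supplies $\|\eta_t G\|_2 \le \eta_t L$, so the signal power is at most $\eta_t^2 L^2$, yielding the per-step bound $\frac{nd+k}{2}\log\!\bigl(1 + \tfrac{\eta_t^2 L^2}{(nd+k)\sigma_t^2}\bigr)$ and hence the summed bound claimed in the theorem.

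For the explicit rate, I would substitute $\sigma_t = \sqrt{\eta_t}$ and $\eta_t = c/t$, so the logarithmic argument becomes $1 + \tfrac{\eta_t L^2}{(nd+k)} = 1 + \tfrac{cL^2}{(nd+k)t}$. Using $\log(1+x) \le x$, the sum telescopes into $\sum_{t=1}^T \tfrac{cL^2}{2t} \le \tfrac{cL^2}{2}(\log T + 1)$, the last step using the standard harmonic-sum estimate $\sum_{t=1}^T \tfrac1t \le \log T + 1$. Plugging this into the bound of Theorem~\ref{thm:mi-jt} gives $|\text{gen}^{\text{joi}}_{\text{meta}}| \le \sqrt{\tfrac{2\sigma^2}{nm}\cdot\tfrac{cL^2}{2}(\log T + 1)} = \tfrac{\sigma L}{\sqrt{nm}}\sqrt{c\log T + c}$, as desired.

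I expect the main obstacle to be the per-iteration channel-capacity bound: one must argue carefully that, conditioned on $\Phi^{t-1}$, the only data-dependence in $\Phi^t$ enters through a bounded-norm signal corrupted by independent Gaussian noise, and that the mutual information with $S_{1:n}$ is therefore controlled by $I(\Phi^t; \text{signal} \mid \Phi^{t-1})$ via data-processing, before applying the Gaussian maximum-entropy estimate. The subtlety is handling the batch randomness (integrating out $B^t_{1:n}$ correctly under Assumption 1) and ensuring the dimensionality factor $nd+k$ and the per-coordinate noise variance $\sigma_t^2$ are matched so that the concavity of $\log$ gives the stated form rather than a looser coordinatewise bound.
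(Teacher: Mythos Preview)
Your proposal is correct and follows essentially the same approach as the paper: data processing to pass from $\Phi$ to the full trajectory, chain rule over iterations, and the Gaussian maximum-entropy (channel-capacity) argument to bound each per-step term, followed by $\log(1+x)\le x$ and the harmonic-sum estimate. The only cosmetic difference is ordering: the paper first applies data processing along the Markov chain $S_{1:n}\to B_{1:n}^{[T]}\to\Phi^{[T]}\to\Phi$ to replace $S_{1:n}$ by the batch sequence, and then does the chain rule to obtain $I(\Phi^t;B_{1:n}^t\mid\Phi^{t-1})=h(\Phi^t\mid\Phi^{t-1})-h(\xi^t)$ directly, whereas you chain-rule first against $S_{1:n}$ and defer the data-processing step to each iteration; both routes yield the same per-step bound.
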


See the proof in Appendix \ref{proof_thm_joint_sgld}. It is worth mentioning that \citet{amit2018meta} used $\mathcal{U} \subseteq \Reals^{2d}, \mathcal{W} \subseteq \Reals^{d}$. They adopt a similar variational form of our mutual information bound, where they use a factorized Gaussian $Q_{\theta} = \mathcal{N}(\theta, \mathbb{I}_{2d})$ to approximate $P_{U|S_{1:n}}$ and $Q_{\phi_i} = \mathcal{N}(\mu_i, \sigma^2_i)$ to approximate $P_{W_i|U,S_i}$. They set $P_U = \mathcal{N}(0, \mathbb{I}_{2d})$, and $P_{W_i|U} = \mathcal{N}(\mu_P, \sigma^2_P)$, where $(\mu_P, \sigma^2_P) \sim Q_{\theta}$. Then they optimize the meta empirical risk plus the bound w.r.t. the parameters $\theta \in \Reals^{2d}$ and $(\mu_i, \sigma^2_i) \in \Reals^{2d}$ by SGD. Our method is different since we do not use parametric approximations. Instead, we simulate the joint distribution with SGLD.

% \blue{My feeling is that each $W_i$ is distributed according to a Gaussian with a mean and variance specified by $U$?} 

\subsection{Bound for Alternate Training with Gradients Incoherence}

The updating strategy for alternate training is illustrated in Fig.~\ref{fig:alt}. We also use SGLD for the meta learner $\mathcal{A}_{\text{meta}}$ and base learner $\mathcal{A}_{\text{base}}$, and denote this algorithm by Meta-SGLD. To build a connection with MAML, we consider the scenarios with $\mathcal{U} = \mathcal{W} \subseteq \Reals^d$, where meta-parameter $U$ is a common initialization for the task parameters $W_{1:n}$ to achieve a fast adaptation.

The Meta-SGLD algorithm has a nested loop structure: the outer loop includes $T$ iterations of SGLD for updating the meta-parameters $U$; at each outer loop iteration $t\in [T]$, there exists several parallel inner loops, where each loop is a $K$-iteration SGLD to update different task-specific parameters $W_{i}$.

\textbf{Outer Loop Updates}  

It is computationally expensive to learn meta information from all the tasks when the number $n$ of tasks is large---a common situation in few-shot learning. Thus, for each $t\in[T]$, we sample a mini-batch of tasks that are indexed by $I_{t} \subseteq [n]$.  Then the corresponding meta-train and meta-validation data sets are denoted as $B^{\text{tr}}_{I_{t}}$ and $B^{\text{va}}_{I_{t}}$, respectively. The task specific parameters are denoted as $W_{I_t}=\{W_i: i\in I_t\}$.
In addition, an isotropic Gaussian noise $\xi^t\sim \mathcal{N}(\mathbf{0},\sigma_t^2\mathbb{I}_{d})$ is also injected during the update. Then, the update rule w.r.t. $U$ is expressed as:

\hspace{4cm} $U^t = U^{t-1} - \eta_t \nabla \tilde{R}_{B^{\text{va}}_{I_t}}(U^{t-1}) + \xi^t\, ,$

where $\tilde{R}_{B^{\text{va}}_{I_t}}(U^{t-1}) = \frac{1}{|I_t|}\sum_{i\in I_t} \mathbb{E}_{W_i \sim P_{W_i|B^{\text{tr}}_{i,t}, U^{t-1}}}[R_{B^{\text{va}}_{i,t}}(W_i)]$ is the empirical meta risk evaluated on $B^{\text{va}}_{I_t}$, and $\eta_{t}$ is the meta learning rate at $t$.
In addition, we denote the \emph{gradient incoherence} of meta parameter $U$ at iteration $t$ as $\epsilon_t^u \eqdef \nabla\tilde{R}_{B_{I_t}}(U^{t-1}) - \nabla\tilde{R}_{B^{\text{tr}}_{I_t}}(U^{t-1})$.

\textbf{Inner Loop Updates} 

% \blue{Why not placing figures 1 and 2 besides each other? I am not sure that the way you have placed these figures is allowed.}

Given the outer loop iteration $t$, for each inner iteration $k\in[K]$, we randomly sample a batch of data for task $i \in I_t$ from $B^{\text{tr}}_{i,t}$ (the $i$-th task in $B^{\text{tr}}_{I_t}$), which is denoted as $B^{\text{tr}}_{i,t,k}$. Then the update rules for the task parameters can be formulated as:

\hspace{3cm} $W^{0}_{i,t} = U^{t-1}\,, W_{i,t}^k = W^{k-1}_{i,t} - \beta_{t,k} \nabla R_{B^{\text{tr}}_{i,t,k}}( W^{k-1}_{i,t}) + \zeta^{t,k}\, ,$

where $\beta_{t,k}$ is the learning rate for task parameter, $\zeta^{t,k} \sim \mathcal{N}(\mathbf{0},\sigma_{t,k}^2\mathbb{I}_{d})$ is the injected isotropic Gaussian noise (not shown in the figure) at inner iteration $k$. Analogously, we can compute the gradient incoherence w.r.t. the task parameters $W_{i,t}^k$. We first sample a batch $B_{i,t,k}$ from $B^{\text{va}}_{i,t}\bigcup B^{\text{tr}}_{i,t}$ (the union of training and validation task batches). Then the gradient incoherence of task specific parameters at the $k$-th inner update, task $i$, and outer iteration $t$ is defined as: $\epsilon_{t,i,k}^w \eqdef \nabla R_{B_{i,t,k}}(W^{k-1}_{i,t}) - \nabla R_{B^{\text{tr}}_{i,t,k}}(W^{k-1}_{i,t})$.

\emph{Relation to MAML} \hspace{0.4cm} Without the noise injection, the whole updating protocol described above is exactly MAML. Specifically, if we set $K=1$, the empirical meta loss can be expressed as:\hspace{0.8cm}$\tilde{R}_{B^{\text{va}}_{I_t}}(U^{t-1}) = \frac{1}{|I_t|}\sum_{i\in I_t} R_{B^{\text{va}}_{i,t}}(W^{K}_{i,t}) = \frac{1}{|I_t|}\sum_{i\in I_t} R_{B^{\text{va}}_{i,t}}(U^{t-1} -\beta_{t,1} \nabla R_{B^{\text{tr}}_{i,t,1}}(U^{t-1}))\,,$
and $\frac{1}{|I_t|}\sum_{i\in I_t}\nabla R_{B^{\text{va}}_{i,t}}(W_{i,t}^K)$ is the first-order MAML gradient.
%  \blue{Need to discuss that. It looks to me that you could say the same more succinctly.} 

Based on the nested loop structure and the independent sampling strategy, we have the following data-dependent generalization-error bound for Meta-SGLD.

\begin{theorem}
Based on Theorem \ref{thm:cmi-al}, for the Meta-SGLD that satisfies Assumption 1, if we set $\sigma_t=\sqrt{2\eta_t/\gamma_t}$, $\sigma_{t,k}=\sqrt{2\beta_{t,k}/\gamma_{t,k}}$, where $\gamma_t$ and $\gamma_{t,k}$ are the inverse temperatures. The meta generalization error for alternate training satisfies 
\[
|\text{gen}^{\text{alt}}_{\text{meta}}(\tau, \text{SGLD}, \text{SGLD})| \leq 
\sqrt{\frac{2\sigma^2I(U,W_{1:n};S^{va}_{1:n}|S^{tr}_{1:n})}{nm_{\text{va}}}}\ 
 \leq\ \frac{\sigma}{\sqrt{nm_{\text{va}}}}\sqrt{ \epsilon_U + \epsilon_W}\, ,
\]
where 
\[\epsilon_U = \sum_{t=1}^T \mathbb{E}_{B^{va}_{I_t},B^{tr}_{I_t},W_{I_t},U^{t-1}} \frac{\eta_t\gamma_t\|\epsilon_t^u\|^2_2}{2},~~~~\epsilon_W=\sum_{t=1}^T\sum_{i=1}^{|I_t|} \sum_{k=1}^K \mathbb{E}_{{B^{va}_{i,t,k},B^{tr}_{i,t,k}, W_{i,t}^{k-1}}} \frac{\beta_{t,k}\gamma_{t,k}\|\epsilon_{t,i,k}^w\|^2_2}{2}\, .\]
\end{theorem}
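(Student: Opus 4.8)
The plan is to reduce the theorem to the conditional-mutual-information bound of Theorem~\ref{thm:cmi-al} by showing that $I(U,W_{1:n};S^{\text{va}}_{1:n}\mid S^{\text{tr}}_{1:n})\le \tfrac12(\epsilon_U+\epsilon_W)$; substituting this into $\sqrt{2\sigma^2 I/(nm_{\text{va}})}$ produces exactly $\tfrac{\sigma}{\sqrt{nm_{\text{va}}}}\sqrt{\epsilon_U+\epsilon_W}$, which is the asserted inequality. I would begin from the chain-rule decomposition already displayed after Theorem~\ref{thm:cmi-al}, which splits the conditional mutual information into an outer-loop term $I(U;S^{\text{va}}_{1:n}\mid S^{\text{tr}}_{1:n})$ and inner-loop terms $\sum_i I(W_i;S^{\text{va}}_i\mid U,S^{\text{tr}}_i)$, and bound each group separately by $\tfrac12\epsilon_U$ and $\tfrac12\epsilon_W$.

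For the outer-loop term I would use $I(U;S^{\text{va}}_{1:n}\mid S^{\text{tr}}_{1:n})=\mathbb{E}_{S_{1:n}}D_{\text{KL}}(P_{U\mid S_{1:n}}\Vert P_{U\mid S^{\text{tr}}_{1:n}})$. The crucial device is a comparator argument: for any reference kernel $Q_{U\mid S^{\text{tr}}_{1:n}}$ that is a function of the meta-train data alone, the compensation identity $\mathbb{E}_{S^{\text{va}}\mid S^{\text{tr}}}D_{\text{KL}}(P_{U\mid S_{1:n}}\Vert Q)=\mathbb{E}_{S^{\text{va}}\mid S^{\text{tr}}}D_{\text{KL}}(P_{U\mid S_{1:n}}\Vert P_{U\mid S^{\text{tr}}_{1:n}})+D_{\text{KL}}(P_{U\mid S^{\text{tr}}_{1:n}}\Vert Q)$ shows that the true conditional is the KL-projection, so replacing $P_{U\mid S^{\text{tr}}_{1:n}}$ by $Q$ can only enlarge the bound. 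I would take $Q$ to be the law of an \emph{auxiliary} SGLD trajectory whose outer update uses the meta-train gradient $\nabla\tilde R_{B^{\text{tr}}_{I_t}}(U^{t-1})$ in place of the actual gradient $\nabla\tilde R_{B^{\text{va}}_{I_t}}(U^{t-1})$; by construction $Q$ is measurable with respect to $S^{\text{tr}}_{1:n}$, where Assumption~1 is used to treat the batch and task sampling as exogenous randomness shared by both processes.

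Next I would tensorize over the $T$ outer iterations. Because the real and auxiliary chains are both Markov with the same isotropic Gaussian transition noise and differ only in their drift, the data-processing inequality on the final iterate followed by the chain rule for relative entropy along the trajectory gives $D_{\text{KL}}(P_{U\mid S_{1:n}}\Vert Q)\le\sum_{t=1}^T \mathbb{E}\,D_{\text{KL}}\big(\mathcal{N}(U^{t-1}-\eta_t\nabla\tilde R_{B^{\text{va}}_{I_t}},\sigma_t^2\mathbb{I}_d)\,\Vert\,\mathcal{N}(U^{t-1}-\eta_t\nabla\tilde R_{B^{\text{tr}}_{I_t}},\sigma_t^2\mathbb{I}_d)\big)$. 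Each summand is a KL between two Gaussians with identical covariance, equal to $\eta_t^2\|\epsilon_t^u\|_2^2/(2\sigma_t^2)$; substituting $\sigma_t^2=2\eta_t/\gamma_t$ collapses it to $\eta_t\gamma_t\|\epsilon_t^u\|_2^2/4$, and summing under the expectation gives exactly $\tfrac12\epsilon_U$. The inner-loop terms are handled identically: conditioned on $U$ and $S^{\text{tr}}_i$, each task runs a $K$-step SGLD chain, the comparator replaces the actual batch gradient by $\nabla R_{B^{\text{tr}}_{i,t,k}}$, and $\sigma_{t,k}^2=2\beta_{t,k}/\gamma_{t,k}$ turns the per-step Gaussian KL into $\beta_{t,k}\gamma_{t,k}\|\epsilon_{t,i,k}^w\|_2^2/4$, which sums to $\tfrac12\epsilon_W$. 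Adding the two groups and inserting the result into Theorem~\ref{thm:cmi-al} yields the claim.

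I expect the main obstacle to be the rigorous set-up of the comparator/auxiliary process rather than any individual computation: one must verify that the reference measure $Q$ is genuinely measurable with respect to $S^{\text{tr}}_{1:n}$ alone so that the compensation identity applies, that the nested inner/outer Markov structure permits the outer comparison to condition cleanly on the inner task outputs, and that Assumption~1 legitimately decouples the minibatch indices from the parameters so that the per-step drift is the \emph{only} source of discrepancy between the two trajectories. Once this scaffolding is in place, the Gaussian-KL evaluations and the telescoping over iterations are routine.
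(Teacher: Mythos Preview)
Your proposal is correct and arrives at the same per-step Gaussian-KL computations as the paper, but the organization differs. You start from the $I(U;\cdot)+\sum_i I(W_i;\cdot\mid U)$ decomposition displayed after Theorem~\ref{thm:cmi-al} and then tensorize each piece over time via an auxiliary-process argument and the chain rule for \emph{relative entropy}. The paper instead first lifts everything to trajectory- and batch-level objects by data-processing inequalities and a graphical-model lemma, obtaining $I(U,W_{1:n};S^{\text{va}}_{1:n}\mid S^{\text{tr}}_{1:n})\le I(U^{[T]},W_{I_{[T]}};B^{\text{va}}_{I_{[T]}}\mid B^{\text{tr}}_{I_{[T]}})$, and then applies the chain rule for \emph{mutual information} over $t$, splitting each summand as $I(W_{I_t};B^{\text{va}}_{I_t}\mid B^{\text{tr}}_{I_t},U^{t-1})+I(U^t;B^{\text{va}}_{I_t}\mid B^{\text{tr}}_{I_t},W_{I_t},U^{t-1})$; only at the last step does it invoke the variational form (their Lemma~A.3) to get KL divergences between Gaussians. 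Your compensation-identity/comparator device is precisely that variational step, so the two routes are equivalent; the paper's ordering spares you the verification that the auxiliary law $Q$ is $S^{\text{tr}}_{1:n}$-measurable and that the inner outputs can be conditioned on cleanly, at the price of the extra DPI scaffolding. One subtlety worth flagging: with your ordering, since the stated inner update uses only $B^{\text{tr}}_{i,t,k}$, the term $I(W_i;S^{\text{va}}_i\mid U,S^{\text{tr}}_i)$ vanishes and your argument would in fact deliver $\tfrac12\epsilon_U$ alone; the paper obtains the additional $\tfrac12\epsilon_W$ by treating the ``full-information'' inner transition as driven by $\nabla R_{B^{\text{tr}}_{i,t,k},B^{\text{va}}_{i,t,k}}$, which you should mirror if you want to reproduce the stated bound verbatim rather than a tighter one.
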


The proof is provided in Appendix~\ref{proof_thm_alt_sgld}. The bound of Theorem 6.2 consists of two parts: $\epsilon_U$, which reflects the generalization error bound of the meta learner, and $\epsilon_W$, which reflects the generalization bound of the base learner. Moreover, $\epsilon_U$ and $\epsilon_W$ are characterized by the accumulated \emph{gradient incoherence} and predefined constants such as learning rates, inverse temperatures, and number of iterations. Compared with previous works such as \citep{denevi2019learning,finn2019online}, Theorem 6.2 exploits the gradient difference between two batches rather than the Lipschitz constant of the loss function (and, consequently, its tighter estimation, the gradient norm of the empirical meta-risk and the individual task risks). This can give a more realistic generalization error bound since the Lipschitz constant for neural networks is often very large~\citep{scaman2018lipschitz}. In contrast, our empirical results reveal (see Sec~\ref{emp_valid}) that the gradient incoherence can be much smaller than the gradient norm on average.
Note that we have chosen fixed and large inverse temperatures to ensure small injected noise variance from the beginning of training. In addition, the step sizes also affect the bound w.r.t. training iteration numbers $T,K$. For example, assuming that the gradient incoherence is bounded, if we choose $\eta_t = \frac{1}{t}, \beta_{t,k} = \frac{1}{tk}$, the meta generalization error bound is in $\mathcal{O}(\sqrt{c_1\log T + c_2\log K})$, where $c_1, c_2$ are some constants. In contrast, when learning rates are fixed, the bound is in $\mathcal{O}(\sqrt{c_1 T + c_2 TK})$. 

% \blue{Check my changes and let's discuss} %For few-shot learning, $K$ is small, so for the above two conditions we have $\mathcal{O}(\sqrt{\log T})$ and $\mathcal{O}(\sqrt{T})$.

\section{Empirical Validations}\label{emp_valid}
We validate Theorem 6.2 on both synthetic and real data. The numerical results demonstrate that, in most situations, the \emph{gradient incoherence} based bound is orders of magnitude tighter than the conventional meta learning bounds with the Lipschitz assumption, which is estimated with gradient norms.\footnote{Code is available at: \url{https://github.com/livreQ/meta-sgld}.}

\subsection{Synthetic Data}
We consider a simple example of 2D mean estimation to illustrate the meta-learning setup. We assume that the environment $\tau$ is a truncated 2D Gaussian distribution $\mathcal{N}((-4,-4)^T, 5\mathbb{I}_{2})$. A new task is also defined as a 2D Gaussian $\mathcal{N}(\mu, 0.1\mathbb{I}_{2})$ with $\mu \sim \tau$. To generate few-shot tasks, we sample $n=20000$ tasks from the environment with $\mu_i \sim \tau, \forall i\in[n]$. After sampling $\mu_i$ for each task, we further sample $m=16$ data points from $\mathcal{N}(\mu_i, 0.1\mathbb{I}_{2})$. At each iteration $t$, we randomly choose a subset of $5$ tasks ($|I_t|=5$) from the whole data set. We evaluate on three different few-shot settings with $m_{\text{va}}=\{1,8,15\}$ and the corresponding train size $m_{\text{tr}}=\{15,8,1\}$.  The detailed experiment setting is in Appendix~\ref{toy_exp}. 

The estimated meta-generalization upper bounds are shown in Fig.~\ref{fig:toy_data}. For a better understanding of the generalization behaviour w.r.t. the meta learner and base learner, we separately show the estimated bounds of $\sigma\sqrt{\frac{\epsilon_U}{nm_{\text{va}}}}$ (Fig.~\ref{fig:toy_data}(a)) and $\sigma\sqrt{\frac{\epsilon_W}{nm_{\text{va}}}}$ (Fig.~\ref{fig:toy_data}(b)). The expectation terms within the bound are estimated via Monte-Carlo sampling. To compare the conventional Lipschiz bound with ours, we approximately calculated a tighter estimation of the bound with the expected gradient norm at each iteration \cite{li2019generalization} instead of using a fixed Lipschitz constant, which is extremely vacuous in deep learning. The other components remain the same as the gradient incoherence bound.

The results on the synthetic data set reveal a substantial theoretical benefit compared with the conventional Lipschitz bound. Specifically, the magnitude of the bound is improved by a factor of 10 to 100. Interestingly, the gap between the gradient-norm and the gradient-incoherence bound is smallest when $m_{\text{va}}=15$. These theoretical results reveal that the generalization bound is unavoidably large if the base learner is trained on extremely few data (\eg, a 1-shot scenario). Since too few train data (small $m_{\text{tr}}$) induces high randomness and large instability in each training task. %See the visualized predictions in Appendix.

To further validate Theorem 6.2, We calculated the actual generalization gap by evaluating the expected difference between the train loss and test loss for the above mentioned tree settings. The actual generalization gap of $m_{\text{tr}}=1$ is also much larger compared to the other two setting, which also demonstrated the instability for extreme few shot learning (See Table \ref{tab:toy_8_8}, \ref{tab:toy_15_1} and \ref{tab:toy_1_15} in Appendix \ref{toy_add}).

\begin{figure}[ht]
    \centering
    \begin{subfigure}{0.45\textwidth}
        \centering
        \includegraphics[scale=0.45]{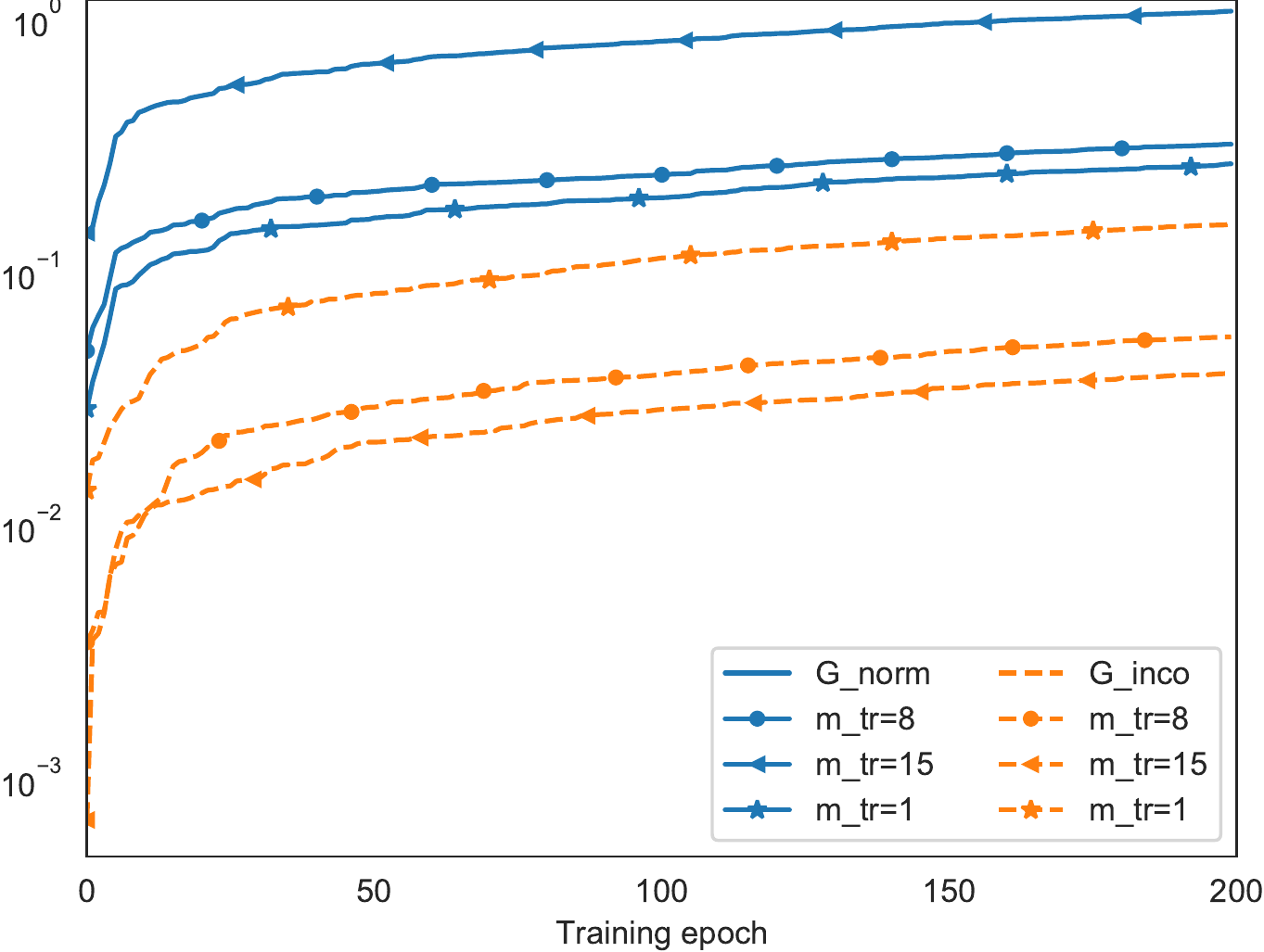}
        \caption{Bound of U}
    \end{subfigure}
    \hfill
    \begin{subfigure}{0.45\textwidth}
        \centering
        \includegraphics[scale=0.45]{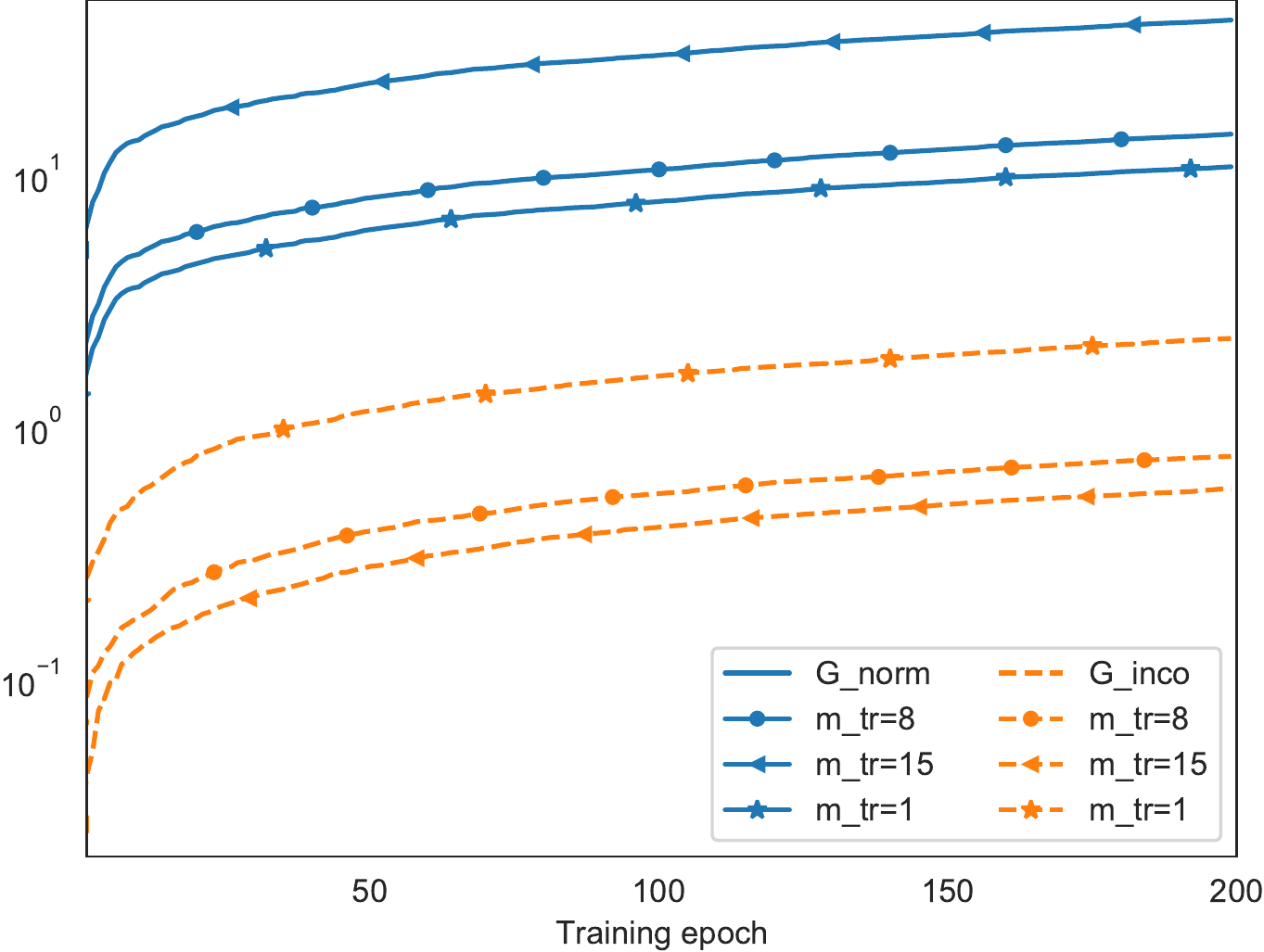}
        \caption{Bound of W}
    \end{subfigure}
    \caption{Synthetic data: Estimation of the generalization-error bound during the training ($T=200$). \emph{Left} Estimated error bound w.r.t. meta-learner. \emph{Right} The estimated error bound w.r.t. the base-learner. The curves in \textcolor{mBlue}{Blue} solid line and \textcolor{mYellow}{Orange} dashed line represent the estimated bound through gradient-norm (G\_Norm) and gradient-incoherence (G\_Inco) in different few-shot settings.}
    \label{fig:toy_data}
\end{figure}

\subsection{Few-Shot Benchmark}

To evaluate the proposed bound in modern deep few-shot learning scenarios, we have tested the Meta-SGLD algorithm on the Omniglot dataset \citep{lake2011one}. 
The Omniglot dataset contains 1623 characters for 50 different alphabets, and each character is present in 20 instances. 
We followed the experimental protocol of \citep{vinyals2016matching,finn2017model}, which aims to learn a N-way classification task for 1-shot or 5-shot learning. In our experiment, we conducted a 5-way classification learning. A train task consists of five classes (characters) randomly chosen from the first 1200 characters, each class has $m=16$ samples selected from the 20 instances. Similarly, a test task contains five classes randomly sampled from the rest 423 characters. Therefore, the meta train set has $n=\tbinom{1200}{5}$ tasks. At each epoch, we have trained the model with $|I_t|=32$ tasks. Analogous to the simulated data, we have conducted our experiment with $m_{\text{tr}}=\{15,8,1\}$ and $m_{\text{va}}=\{1,8,15\}$ and separately visualized the two components of the bound. The detailed experimental setting is provided in Appendix \ref{omniglot}.

The estimated bounds are shown in Fig.~\ref{fig:real_data}. Analogous to the results on synthetic data, the estimated error bound trough gradient-incoherence is \emph{tighter} than the gradient-norm based bound when $m_{\text{tr}}=8,15$. In particular, the gradient-incoherence bound w.r.t. $U$ is much tighter than the gradient-norm bound when $m_{\text{tr}}=15$, which illustrates the benefits of the proposed theory. Simultaneously, the gradient-incoherence bound is similar to the gradient-norm bound when $m_{\text{tr}}=1$, illustrating a theoretical limitation of learning with very few meta-train samples. Moreover, we observe that the optimal values for $m_{\rm{va}}$ depends on the environment since the tightest bound for Omniglot is achieved with $m_{\text{va}}=8$, which is different from what we have found for the synthetic data. 

Finally, we observed that the component of the generalization error bound that originates from task-specific parameters is numerically larger than the one the originates from the meta parameter, has compared to the results for simulated data. This perhaps illustrates an inherent difficulty in learning few-shot tasks with high-dimensional and complex data sets, where estimating the generalization error bound is apparently more challenging. Additional experimental results for test accuracy comparison with MAML on the aforementioned tree settings are presented in Appendix \ref{omniglot_add} Table \ref{tab:acc}. Comparison of bound values with the observed generalization error is also included (See Table \ref{tab:omg_8_8},\ref{tab:omg_15_1} and \ref{tab:omg_1_15}). We believe the less evident improvement with gradient incoherence bound compared to Synthetic data can be ascribed to the utilization of Batch Normalization. 

\begin{figure}[t]
    \centering
    \begin{subfigure}{0.48\textwidth}
        \centering
        \includegraphics[scale=0.45]{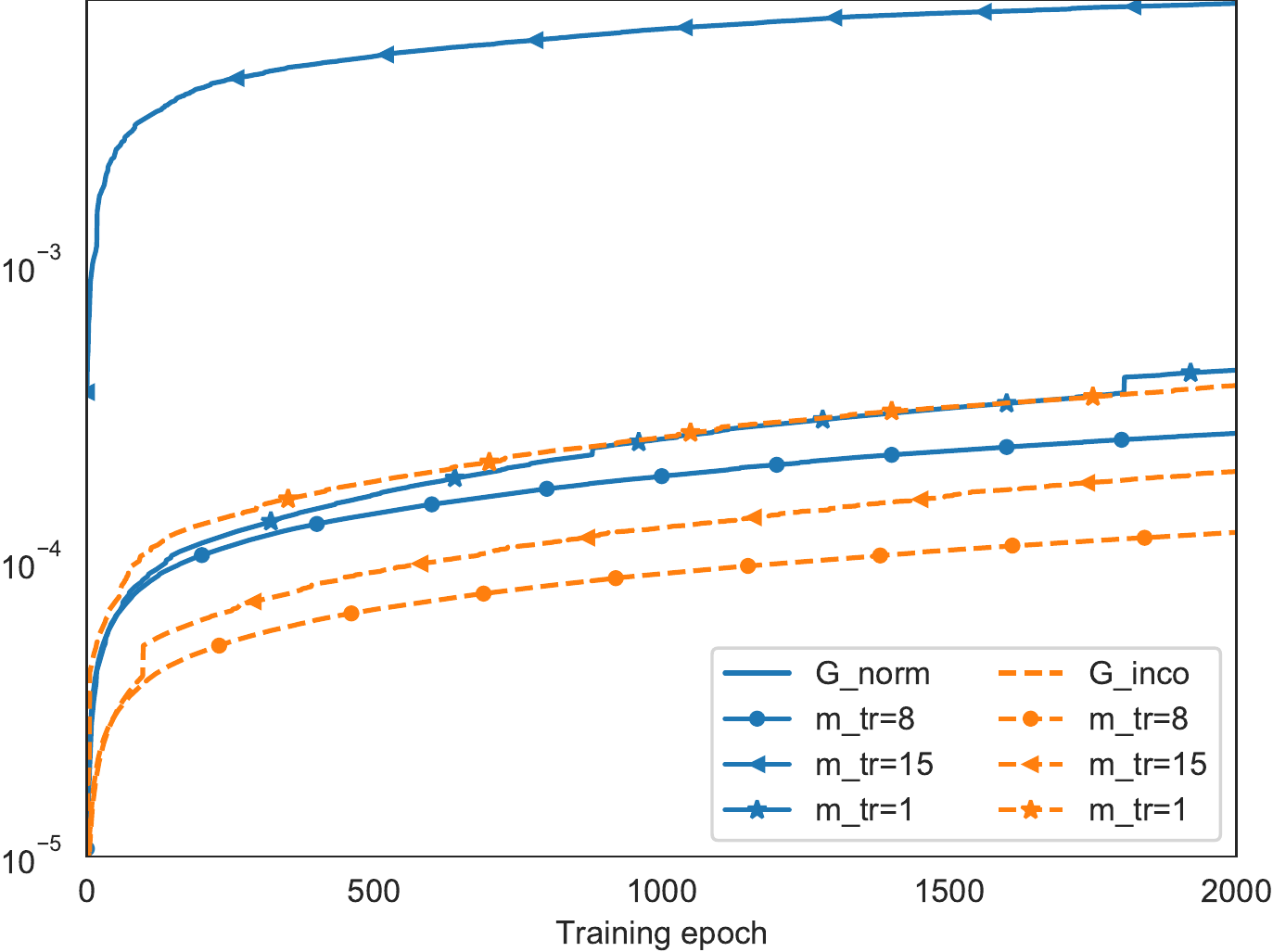}
        \caption{Bound of U}
    \end{subfigure}
    \hfill
    \begin{subfigure}{0.48\textwidth}
        \centering
        \includegraphics[scale=0.45]{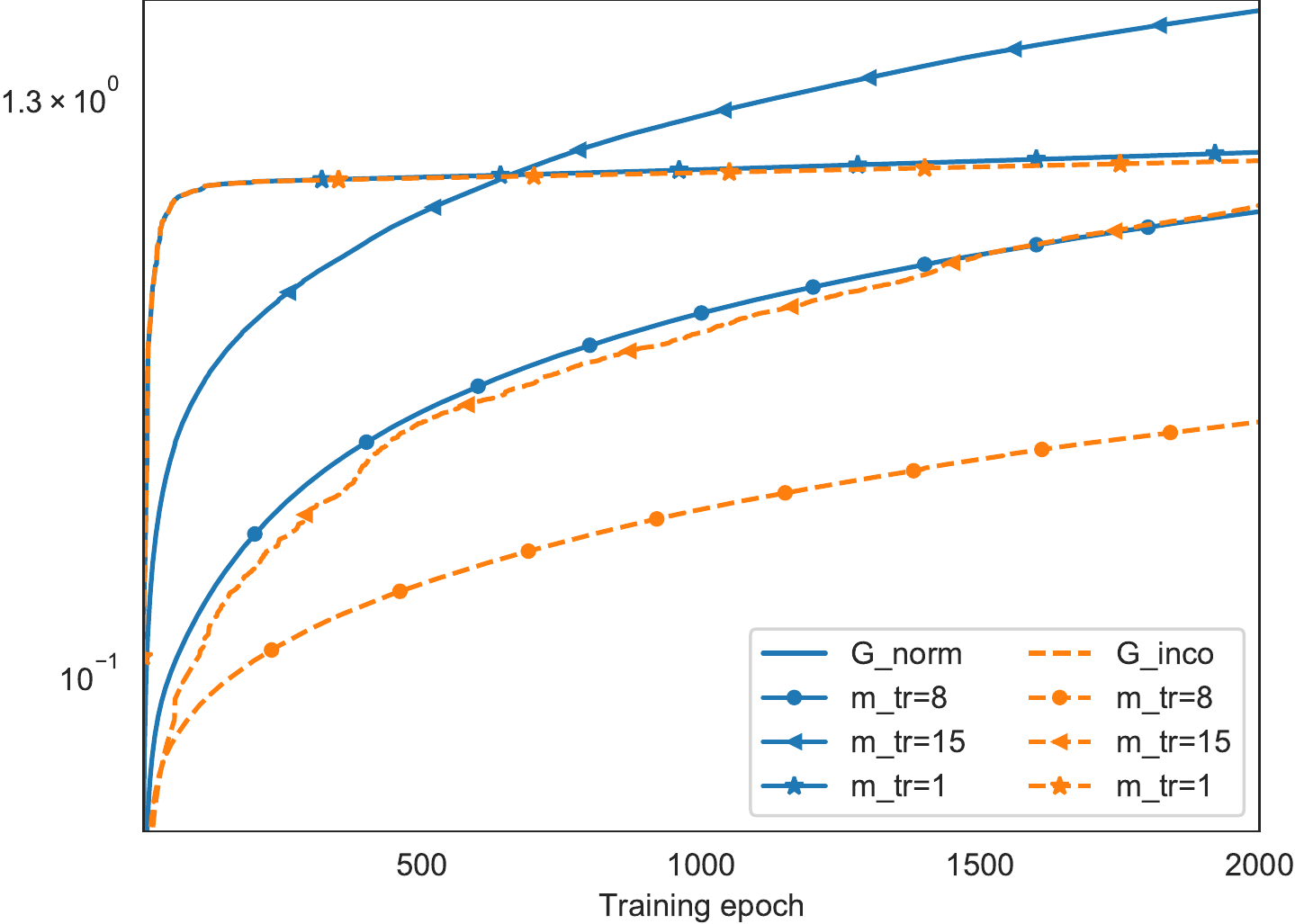}
        \caption{Bound of W}
    \end{subfigure}
    \caption{Omniglot: Estimation of the generalization-error bound during the training ($T=2000$). The curves in \textcolor{mBlue}{Blue} solid line and \textcolor{mYellow}{Orange} dashed line represent the estimated bound through gradient-norm (G\_Norm) and gradient-incoherence (G\_Inco) in different few-shot settings}
    \label{fig:real_data}
\end{figure}

\section{Conclusion}
We derived a novel information-theoretic analysis of the generalization property of meta-learning and provided algorithm-dependent generalization error bounds for both joint training and alternate training. Compared to previous gradient-based bounds that depend on the square norm of gradients, empirical validations on both simulated data and a few-shot benchmark show that the proposed bound is orders of magnitude tighter in most situations. Finally, we think that these theoretical results can inspire new algorithms through a deeper exploration of the relation between meta-parameters and task-parameters.  

%We derived a novel information-theoretic analysis of the generalization property of meta-learning and provided algorithm-dependant generalization error bounds for both joint training and alternate training. Compared to previous gradient-based bounds that depend on the square norm of gradients, empirical validations on both simulated data and a few-shot benchmark show that the proposed bound is orders of magnitude tighter in most situations. Although the theoretical results are quite promising, we did not aimed to obtain state-of-the-art results in this paper. However, we think that our theoretical results can inspire new algorithms, through a better exploration of the relation between meta-parameters and task-parameters.  

% The goal of the present analysis was not to obtain state-of-the-art results. Rather, it was to provide a theoretical basis to inspire the design of new deep meta-learning algorithms. For that purpose, we think that a more intensive exploration of the relation between the meta and task parameters within the present theory constitutes a reasonable first step.  

%We also gave a simple analysis of how the train/validation split affects the tightness of the bound.

%Even though our theoretical results are intriguing, we do not extend too much to obtain state-of-the-art experiment results. While we believe new algorithms can be derived from our work, this will be left as future work. Furthermore, exploring more relations between meta parameters and task parameters within this framework can induce some interesting sub-areas.  

\acksection
Work partly supported by NSERC Discovery Grant RGPIN-2016-05942 and the China Scholarship Council. We also thank SSQ Assurances and NSERC for their financial support through the Collaborative Research and Development Grant CRDPJ 529584 - 18.

\bibliographystyle{unsrtnat}
\bibliography{meta}
\newpage
\appendix

\section{Technical Lemmas} \label{Lemmas}
\begin{lemma}[Variational Form of Mutual Information]
Let $X$ and $Y$ be two random variables. For all probability measures $Q$ defined on the space of $X$, we have
\[
I(X;Y) \leq E_{Y}[D_{\text{KL}}(P_{X|Y}||Q)],
\]
with equality for $Q=P_X$.
\end{lemma}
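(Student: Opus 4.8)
The plan is to start from the definition $I(X;Y) = D_{\text{KL}}(P_{X,Y}\,\|\,P_X P_Y)$ and rewrite the mutual information as an expected conditional KL divergence, then compare it against the claimed upper bound by a single direct subtraction. First I would establish the standard identity
\[
I(X;Y) = \mathbb{E}_{Y}\bigl[D_{\text{KL}}(P_{X|Y}\,\|\,P_X)\bigr],
\]
which follows by writing the Radon--Nikodym derivative inside the defining KL divergence as $\tfrac{dP_{X,Y}}{d(P_X P_Y)} = \tfrac{dP_{X|Y}}{dP_X}$ (using the disintegration $P_{X,Y}=P_{X|Y}\,P_Y$) and then splitting the joint expectation $\mathbb{E}_{X,Y}$ as the iterated expectation $\mathbb{E}_Y\mathbb{E}_{X|Y}$.

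The key step is to compute the gap between the right-hand side of the lemma and $I(X;Y)$ for an arbitrary reference measure $Q$. For each fixed value of $Y$, the two integrands differ only through their reference measure, so
\[
D_{\text{KL}}(P_{X|Y}\,\|\,Q) - D_{\text{KL}}(P_{X|Y}\,\|\,P_X) = \mathbb{E}_{X|Y}\Bigl[\log\tfrac{P_X}{Q}\Bigr].
\]
Since this integrand no longer depends on the conditioning variable, taking the expectation over $Y$ collapses $\mathbb{E}_Y\mathbb{E}_{X|Y}$ back to $\mathbb{E}_X$, yielding exactly $\mathbb{E}_X[\log\tfrac{P_X}{Q}] = D_{\text{KL}}(P_X\,\|\,Q)$. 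Combining with the identity above gives the clean gap formula
\[
\mathbb{E}_{Y}\bigl[D_{\text{KL}}(P_{X|Y}\,\|\,Q)\bigr] - I(X;Y) = D_{\text{KL}}(P_X\,\|\,Q).
\]

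To finish, I would invoke the non-negativity of KL divergence (Gibbs' inequality): $D_{\text{KL}}(P_X\,\|\,Q)\geq 0$ for every $Q$, which immediately yields the stated inequality, with equality holding precisely when $P_X = Q$ almost everywhere, establishing the tightness claim. I do not anticipate a genuine obstacle, since the argument is a telescoping of KL divergences; the only points demanding care are measure-theoretic: the bound is informative only when $P_{X|Y}\ll Q$ (otherwise the right-hand side is $+\infty$ and the inequality is trivial), and the gap identity tacitly requires $P_X\ll Q$ so that $\log\tfrac{P_X}{Q}$ is well defined.
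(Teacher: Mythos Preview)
Your proposal is correct and follows essentially the same approach as the paper: both establish the identity $\mathbb{E}_{Y}[D_{\text{KL}}(P_{X|Y}\|Q)] = I(X;Y) + D_{\text{KL}}(P_X\|Q)$ and then invoke non-negativity of the KL divergence. The paper simply writes out $I(X;Y) + D_{\text{KL}}(P_X\|Q)$ as a single chain of integrals and collapses it to $\mathbb{E}_{Y}[D_{\text{KL}}(P_{X|Y}\|Q)]$, whereas you first isolate the case $Q=P_X$ and then compute the gap, but the content is the same.
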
\label{Lemma_B.1}
\begin{proof}
\begin{equation*}
\begin{aligned}
    &I(X;Y) + D_{\text{KL}}(P_X||Q) \\
    &= \iint p(x,y)\log\frac{p(x,y)}{p(x)p(y)}dxdy
    + \int p(x)\log\frac{p(x)}{q(x)}dx\\
    &= \iint p(x,y)\log\frac{p(x,y)}{p(x)p(y)}dxdy + \iint p(x,y)\log\frac{p(x)}{q(x)}dxdy\\
    &= \iint p(x,y)\log\frac{p(x|y)}{q(x)}dxdy\\
    &= \mathbb{E}_{Y}[D_{\text{KL}}(P(X|Y)||Q)]\, .
\end{aligned}
\end{equation*}       
Since $D_{\text{KL}}(P_X||Q) \geq 0$, the equality exists only when $Q=P_X$, which concludes the proof.
\end{proof}

\begin{lemma} Let $X, Y, Z$ be random variables. For all $\mathcal{Z}$-measurable probability measures $Q$ on the space of $X$, $I^Z(X;Y) \leq E_{Y|Z}[D_{\text{KL}}(P_{X|Y,Z}||Q)]$, with equality for $Q=P_{X|Z}$.
\end{lemma}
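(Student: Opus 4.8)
The plan is to mirror the computation in Lemma~\ref{Lemma_B.1}, but with every distribution conditioned on $Z$ and with the prior $Q$ now allowed to depend on the realization of $Z$ --- which is precisely what the $\mathcal{Z}$-measurability of $Q$ provides. Conceptually, for a fixed value $Z=z$ the disintegrated mutual information $I^Z(X;Y)=D_{\text{KL}}(P_{X,Y|Z}\|P_{X|Z}P_{Y|Z})$ is nothing but the ordinary mutual information between $X$ and $Y$ computed under the conditional law $P_{X,Y|Z=z}$. Hence Lemma~\ref{Lemma_B.1} applies verbatim at each $z$, with the conditional reference measure $Q(\cdot)$ in the role of the free measure; the pointwise statements then hold for (almost) every $z$, which is exactly the disintegrated inequality claimed.

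To make this explicit, I would add and subtract the nonnegative term $D_{\text{KL}}(P_{X|Z}\|Q)$ and compute
\begin{align*}
I^Z(X;Y) + D_{\text{KL}}(P_{X|Z}\|Q)
&= \iint p(x,y|z)\log\frac{p(x,y|z)}{p(x|z)p(y|z)}\,dx\,dy + \int p(x|z)\log\frac{p(x|z)}{q(x)}\,dx\\
&= \iint p(x,y|z)\log\frac{p(x,y|z)}{p(x|z)p(y|z)}\,dx\,dy + \iint p(x,y|z)\log\frac{p(x|z)}{q(x)}\,dx\,dy\\
&= \iint p(x,y|z)\log\frac{p(x|y,z)}{q(x)}\,dx\,dy
= \mathbb{E}_{Y|Z}[D_{\text{KL}}(P_{X|Y,Z}\|Q)]\, ,
\end{align*}
where the second line uses $p(x|z)=\int p(x,y|z)\,dy$ to promote the single integral into a double integral, and the third line uses $p(x,y|z)/p(y|z)=p(x|y,z)$. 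Since $D_{\text{KL}}(P_{X|Z}\|Q)\ge 0$ with equality if and only if $Q=P_{X|Z}$, rearranging yields $I^Z(X;Y)\le \mathbb{E}_{Y|Z}[D_{\text{KL}}(P_{X|Y,Z}\|Q)]$ together with the stated equality condition.

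The only real subtlety --- and the step I would be most careful about --- is the bookkeeping of the role of $Z$: the measure $Q$ is permitted to depend on $z$ but must not depend on $y$, so that it factors out of the $Y$-conditional expectation as a legitimate reference measure on the space of $X$. Provided the relevant conditional densities exist and the disintegration $P_{X,Y|Z}$ is well defined, the manipulation above is valid for each $z$ in the support of $Z$, and no regularity beyond that already used in Lemma~\ref{Lemma_B.1} is required. I therefore expect no analytic difficulty here; the proof is essentially the conditional analogue of the unconditional variational identity, obtained by carrying out every step inside the $Z$-conditional probability space.
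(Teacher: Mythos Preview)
Your proposal is correct and follows essentially the same approach as the paper: you add the nonnegative term $D_{\text{KL}}(P_{X|Z}\|Q)$ to $I^Z(X;Y)$, collapse the logarithms using $p(x,y|z)/p(y|z)=p(x|y,z)$, and conclude via nonnegativity of the KL divergence. The computation and the equality condition match the paper's proof line for line.
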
\label{Lemma_B.2}
\begin{proof}
\begin{equation*}
\begin{aligned}
    &I^Z(X;Y) + D_{\text{KL}}(P_{X|Z}||Q)\\
    &= \iint p(x,y|z)\log\frac{p(x,y|z)}{p(x|z)p(y|z)}dxdy
    + \iint p(x|z)\log\frac{p(x|z)}{q(x)}dx\\
    &= \iint p(x,y|z)\log\frac{p(x,y|z)}{p(x|z)p(y|z)}dxdy\\
    &\quad\quad\quad\quad + \iint p(x,y|z)\log\frac{p(x|z)}{q(x)}dxdy\\
    &= \iint p(x,y|z)\log\frac{p(x|y,z)}{q(x)}dxdy\\
    &= \mathbb{E}_{Y|Z}[D_{\text{KL}}(P(X|Y,Z)||Q)]
\end{aligned}
\end{equation*}
Since $D_{\text{KL}}(P_{X|Z}||Q) \geq 0$, the equality exists only when $Q=P_{X|Z}$, which concludes the proof.
\end{proof}

\begin{lemma} Let $X, Y, Z$ be random variables. For all $\mathcal{Z}$-measurable probability measures $Q$ defined on the space of $X$, $I(X;Y|Z) = \mathbb{E}_{Z}[I^{Z}(X;Y)] \leq E_{Y,Z}[D_{\text{KL}}(P_{X|Y,Z}||Q)]$, with equality for $Q=P_{X|Z}$.
\end{lemma}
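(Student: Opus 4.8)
The plan is to obtain this statement as an immediate consequence of the preceding lemma (Lemma \ref{Lemma_B.2}) by integrating over the conditioning variable $Z$. First I would observe that the claimed equality $I(X;Y|Z) = \mathbb{E}_{Z}[I^{Z}(X;Y)]$ is nothing but the definition of conditional mutual information recorded in Definition 2, so no work is needed for that half of the statement; it is written out here only to set up the integration argument.

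For the inequality, I would apply the preceding lemma pointwise in $Z$. For each fixed realization $z$ of $Z$, and for the value at $z$ of the $\mathcal{Z}$-measurable kernel $Q$, Lemma \ref{Lemma_B.2} gives $I^{Z}(X;Y) \leq \mathbb{E}_{Y|Z}[D_{\text{KL}}(P_{X|Y,Z}||Q)]$. Since this holds for (almost) every $z$, I may take the expectation over $Z$ on both sides while preserving the direction of the inequality. The left-hand side becomes $\mathbb{E}_{Z}[I^{Z}(X;Y)] = I(X;Y|Z)$, and on the right-hand side the tower property of expectation collapses $\mathbb{E}_{Z}\mathbb{E}_{Y|Z}[\cdot]$ into the joint expectation $\mathbb{E}_{Y,Z}[\cdot]$, yielding exactly $I(X;Y|Z) \leq \mathbb{E}_{Y,Z}[D_{\text{KL}}(P_{X|Y,Z}||Q)]$. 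The equality case is inherited directly: Lemma \ref{Lemma_B.2} attains equality with $Q = P_{X|Z}$ for each $z$ separately, and integrating these pointwise equalities over $Z$ gives equality in the conditional statement. This is precisely why the lemma is stated for $\mathcal{Z}$-measurable $Q$ rather than for a fixed measure—the optimal choice $P_{X|Z}$ genuinely depends on $Z$, and permitting $Q$ to be a $\mathcal{Z}$-measurable kernel is what makes that optimum reachable.

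There is essentially no hard step: the result is a routine ``take expectation over the conditioning variable'' extension of Lemma \ref{Lemma_B.2}. The only point meriting care is a measurability and integrability check—one should confirm that the map $z \mapsto \mathbb{E}_{Y|Z=z}[D_{\text{KL}}(P_{X|Y,Z}||Q)]$ is measurable so that the outer expectation over $Z$ is well-defined, and that interchanging the pointwise application of Lemma \ref{Lemma_B.2} with the integration in $Z$ is legitimate. Under the standing assumptions these hold automatically, so the entire proof reduces to invoking the previous lemma inside $\mathbb{E}_{Z}[\cdot]$ and applying the law of total expectation.
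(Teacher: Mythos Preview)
Your proposal is correct and matches the paper's own proof exactly: the paper simply says ``Take the expectation on the inequality of Lemma~\ref{Lemma_B.2}.2 to obtain the result.'' Your writeup is just a more explicit version of that one-line argument, spelling out the tower property and the equality case.
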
\label{Lemma_B.3}
\begin{proof}
Take the expectation on the inequality of Lemma B.2 to obtain the result.
\end{proof}

\begin{lemma}\textbf{(Donsker-Varadhan representation[Corollary 4.15\cite{boucheron2013concentration}])} Let $P$ and $Q$ be two probability measures defined on a set $\mathcal{X}$. Let $g : \mathcal{X} \rightarrow R$ be a measurable function, and let $\mathbb{E}_{x \sim Q}[\exp{g(x)}] \leq \infty$. Then \[D_{\text{KL}}(P||Q) = \sup\limits_{g}\{\mathbb{E}_{x\sim P}[g(x)] - \log\mathbb{E}_{x \sim Q}[\exp{g(x)}]\}.\]
\end{lemma}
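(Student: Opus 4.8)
The plan is to prove the two inequalities whose combination yields the claimed equality, via the standard change-of-measure (Gibbs variational) argument. Throughout I assume $P \ll Q$, so that the Radon--Nikodym derivative $dP/dQ$ exists and $D_{\text{KL}}(P\|Q) = \mathbb{E}_{x\sim P}[\log\tfrac{dP}{dQ}(x)]$ is well defined; I call a measurable $g:\mathcal{X}\to\mathbb{R}$ \emph{admissible} when $\mathbb{E}_{x\sim Q}[e^{g(x)}] < \infty$, and I treat the degenerate cases ($P\not\ll Q$, or $D_{\text{KL}}=+\infty$) at the end.

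First I would establish the easy inequality $\sup_g\{\mathbb{E}_P[g]-\log\mathbb{E}_Q[e^g]\}\le D_{\text{KL}}(P\|Q)$. For a fixed admissible $g$, I introduce the tilted (Gibbs) measure $Q_g$ defined by $\tfrac{dQ_g}{dQ}=\tfrac{e^g}{\mathbb{E}_Q[e^g]}$, which is a genuine probability measure precisely because $\mathbb{E}_Q[e^g]<\infty$. Expanding $\tfrac{dP}{dQ_g}=\tfrac{dP}{dQ}\cdot\tfrac{\mathbb{E}_Q[e^g]}{e^g}$ and integrating $\log(dP/dQ_g)$ against $P$ gives the key identity
\[
D_{\text{KL}}(P\|Q) = D_{\text{KL}}(P\|Q_g) + \Big(\mathbb{E}_P[g] - \log\mathbb{E}_Q[e^g]\Big).
\]
Since $D_{\text{KL}}(P\|Q_g)\ge 0$ by non-negativity of relative entropy (Gibbs' inequality, itself a consequence of Jensen applied to $-\log$), the bracketed functional never exceeds $D_{\text{KL}}(P\|Q)$; taking the supremum over admissible $g$ yields the first inequality.

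Next I would obtain the reverse inequality by exhibiting a maximizer. The natural candidate is $g^{\star}=\log\tfrac{dP}{dQ}$. For this choice $\mathbb{E}_Q[e^{g^{\star}}]=\mathbb{E}_Q[\tfrac{dP}{dQ}]=1$, so $\log\mathbb{E}_Q[e^{g^{\star}}]=0$ while $\mathbb{E}_P[g^{\star}]=D_{\text{KL}}(P\|Q)$; hence the functional attains exactly $D_{\text{KL}}(P\|Q)$ at $g^{\star}$, matching the upper bound and forcing equality.

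The main obstacle is integrability and the degenerate cases, where care is needed rather than in the algebra above. If $D_{\text{KL}}(P\|Q)=+\infty$, then $g^{\star}$ need not be directly usable, so I would instead work with the truncations $g_n=\min\{g^{\star},n\}$, check that each is admissible (since $\mathbb{E}_Q[e^{g_n}]\le 1$), and invoke monotone convergence to send $\mathbb{E}_P[g_n]-\log\mathbb{E}_Q[e^{g_n}]\to+\infty$, so the supremum is $+\infty$ and again matches. Finally, if $P\not\ll Q$ then $D_{\text{KL}}(P\|Q)=+\infty$ by convention, and I would pick $g$ large on a set $A$ with $P(A)>0=Q(A)$, keeping $\mathbb{E}_Q[e^g]$ controlled while driving $\mathbb{E}_P[g]$ unbounded, so the supremum is $+\infty$ as well. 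This completes the proof.
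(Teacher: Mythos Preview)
The paper does not actually prove this lemma; it is stated without proof and attributed to Corollary~4.15 of \cite{boucheron2013concentration}. Your argument is correct and is essentially the standard Gibbs-variational proof found in that reference: the tilting identity $D_{\text{KL}}(P\|Q) = D_{\text{KL}}(P\|Q_g) + \mathbb{E}_P[g] - \log\mathbb{E}_Q[e^g]$ together with nonnegativity of relative entropy gives the upper bound, and the choice $g^\star=\log(dP/dQ)$ (with the truncation fix when $D_{\text{KL}}=+\infty$) gives attainment. One small point worth tightening in the $D_{\text{KL}}=+\infty$ case: to apply monotone convergence to $\mathbb{E}_P[g_n]$ with $g_n=\min\{g^\star,n\}$, you should note that the negative part $\mathbb{E}_P[(g^\star)^-]$ is automatically finite (indeed bounded by $1/e$, via $x\log x\ge -1/e$), so that $g_1$ has an integrable minorant and the limit is genuinely $+\infty$.
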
\label{Lemma_B.4}
\begin{lemma}
\textbf{(Decoupling Estimate[\citet{xu2017information}])} 
Consider a pair of random variables $X$ and $Y$ with joint distribution $P_{X,Y}$, let $\tilde{X}$ be an independent copy of $X$, and $\tilde{Y}$ an independent copy of $Y$, such that $P_{\tilde{X},\tilde{Y}} = P_{X}P_{Y}$. For arbitrary real-valued function $f:\mathcal{X}\times\mathcal{Y}\rightarrow \Reals$, if $f(\tilde{X}, \tilde{Y})$ is $\sigma$-subgaussian under $P_{\tilde{X},\tilde{Y}}$, then:
\[
|\mathbb{E}[f(X,Y)] - \mathbb{E}[f(\tilde{X},\tilde{Y})]| \leq \sqrt{2\sigma^2I(X;Y)}
\]
\end{lemma}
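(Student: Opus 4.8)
The plan is to combine the Donsker--Varadhan variational representation (Lemma~\ref{Lemma_B.4}) with the subgaussian hypothesis, applied to the measures $P=P_{X,Y}$ and $Q=P_{\tilde X,\tilde Y}=P_XP_Y$. The key initial observation is that for exactly this pair the KL divergence coincides with the mutual information, $D_{\text{KL}}(P_{X,Y}||P_XP_Y)=I(X;Y)$, straight from the definition of $I(X;Y)$. So the whole task reduces to relating the quantity $\Delta\eqdef\mathbb{E}[f(X,Y)]-\mathbb{E}[f(\tilde X,\tilde Y)]$ to this divergence.

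First I would fix an arbitrary scalar $\lambda\in\Reals$ and instantiate Lemma~\ref{Lemma_B.4} with the particular test function $g=\lambda f$. Since the supremum over all measurable $g$ dominates any single choice, this yields
\[
\lambda\,\mathbb{E}_{P_{X,Y}}[f(X,Y)] - \log\mathbb{E}_{P_XP_Y}[\exp(\lambda f(\tilde X,\tilde Y))] \leq I(X;Y)\,.
\]
Next I would bound the log-moment-generating term using that $f(\tilde X,\tilde Y)$ is $\sigma$-subgaussian under $P_{\tilde X,\tilde Y}$: by the definition of subgaussianity its cumulant generating function obeys $\log\mathbb{E}[\exp(\lambda f(\tilde X,\tilde Y))]\leq \lambda\,\mathbb{E}[f(\tilde X,\tilde Y)]+\tfrac{\lambda^2\sigma^2}{2}$. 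Substituting this into the displayed inequality, the two linear-in-expectation terms combine into $\Delta$, and I obtain the quadratic bound $\lambda\Delta-\tfrac{\lambda^2\sigma^2}{2}\leq I(X;Y)$, valid for every $\lambda\in\Reals$.

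Finally I would convert this one-parameter family of inequalities into the stated estimate. Rearranged it reads $\tfrac{\sigma^2}{2}\lambda^2-\Delta\lambda+I(X;Y)\geq 0$ for all $\lambda$; a quadratic in $\lambda$ that is nonnegative everywhere must have nonpositive discriminant, so $\Delta^2-2\sigma^2 I(X;Y)\leq 0$, which is precisely $|\Delta|\leq\sqrt{2\sigma^2 I(X;Y)}$. Equivalently, one may optimize the left-hand side over $\lambda$ at $\lambda=\Delta/\sigma^2$ to get $\Delta^2/(2\sigma^2)\leq I(X;Y)$; allowing negative $\lambda$ as well handles both signs of $\Delta$ and recovers the absolute value. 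The proof is essentially routine once the variational identity and the CGF bound are in hand; the only real care required is bookkeeping, namely ensuring the subgaussianity is invoked under the product measure $P_XP_Y$ and that $P$ and $Q$ enter the Donsker--Varadhan identity in the correct roles so that the divergence collapses to $I(X;Y)$. I expect this alignment to be the one place where a misstep is easy, rather than any genuine analytic difficulty.
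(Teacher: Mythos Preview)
Your proof is correct and follows the standard route: Donsker--Varadhan with $g=\lambda f$, the subgaussian CGF bound, then optimization (or discriminant) in $\lambda$. The paper does not give a separate proof of this lemma (it is cited from \citet{xu2017information}), but the identical sequence of steps---apply Lemma~\ref{Lemma_B.4}, bound the CGF by $\lambda^2\sigma^2/2$, and minimize over positive and negative $\lambda$---is exactly what the paper carries out in-line within the proofs of Theorems~\ref{thm:mi-jt} and~\ref{thm:cmi-al}, so your argument matches the paper's approach.
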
\label{Lemma_B.5}

\begin{lemma}
Let Q be an arbitrary distribution on $\mathcal{W}$, and let $S$ be an arbitrary sample of examples. The solution to the optimization problem
\begin{equation*}
P^*\ =\ \arg\inf\limits_{P}\left\{ \mathbb{E}_{W\sim P}[R_S(W)]\ +\  \frac{1}{\beta}D_{\text{KL}}(P||Q) \right\}\, . 
\end{equation*}
is given by the Gibbs distribution 
\[
dP^*(w)\ =\ \frac{e^{-\beta R_S(w)}dQ(w)}{\mathbb{E}_{W\sim Q}e^{-\beta R_S(W)}}\, .
\]
\end{lemma}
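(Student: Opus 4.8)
The plan is to use the Donsker–Varadhan representation (Lemma B.4) to produce a $P$-independent lower bound on the objective, and then verify that the proposed Gibbs distribution attains it. First I would multiply the objective by $\beta > 0$, which does not change the minimizer, reducing the problem to minimizing $\beta\,\mathbb{E}_{W\sim P}[R_S(W)] + D_{\text{KL}}(P\|Q)$ over $P$. Applying Lemma B.4 with the specific choice $g(w) = -\beta R_S(w)$ gives
\[
D_{\text{KL}}(P\|Q) \ \geq\ \mathbb{E}_{W\sim P}[-\beta R_S(W)] - \log \mathbb{E}_{W\sim Q}[e^{-\beta R_S(W)}],
\]
and rearranging shows that for every $P$,
\[
\beta\,\mathbb{E}_{W\sim P}[R_S(W)] + D_{\text{KL}}(P\|Q) \ \geq\ -\log \mathbb{E}_{W\sim Q}[e^{-\beta R_S(W)}],
\]
a lower bound that does not depend on $P$.

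The second step is to check that equality holds at $P = P^*$. I would substitute the Gibbs density $dP^*/dQ = e^{-\beta R_S(w)}/Z$, with $Z = \mathbb{E}_{W\sim Q}[e^{-\beta R_S(W)}]$, directly into $D_{\text{KL}}(P^*\|Q) = \mathbb{E}_{W\sim P^*}[\log (dP^*/dQ)]$; the $R_S$ terms then cancel against $\beta\,\mathbb{E}_{W\sim P^*}[R_S(W)]$ and leave exactly $-\log Z$, matching the lower bound. Hence $P^*$ attains the infimum and is a minimizer.

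Alternatively — and this is the route I would actually prefer, because it delivers uniqueness for free — I would sidestep the variational inequality and instead factor the Radon–Nikodym derivative as $dP/dQ = (dP/dP^*)(dP^*/dQ)$. This rewrites the objective exactly as
\[
\mathbb{E}_{W\sim P}[R_S(W)] + \tfrac{1}{\beta}D_{\text{KL}}(P\|Q) \ =\ \tfrac{1}{\beta}\bigl(D_{\text{KL}}(P\|P^*) - \log Z\bigr),
\]
since the risk term is absorbed into the cross term arising from $\log(dP^*/dQ)$. Because $D_{\text{KL}}(P\|P^*)\ge 0$ with equality iff $P = P^*$, the infimum equals $-\tfrac{1}{\beta}\log Z$ and is attained uniquely at the Gibbs distribution.

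The main point requiring care — rather than a genuine obstacle — is the measure-theoretic bookkeeping: ensuring $Z < \infty$ so that $P^*$ is well defined, noting that $P \ll Q$ is forced (otherwise the $D_{\text{KL}}(P\|Q)$ term is $+\infty$ and the bound is vacuously satisfied), and justifying the chain-rule factorization of the densities used in the last step. None of these is deep, but they are what make the argument rigorous, and they are the only places where the proof could go wrong.
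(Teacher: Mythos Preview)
Your proposal is correct; both routes you outline (the Donsker--Varadhan lower bound plus verification, and the direct decomposition $D_{\text{KL}}(P\|Q) = D_{\text{KL}}(P\|P^*) - \beta\,\mathbb{E}_{W\sim P}[R_S(W)] - \log Z$) are standard and valid. Note, however, that the paper states this lemma without proof---it is listed among the technical lemmas as a known result (it is the classical Gibbs variational principle, cf.\ the reference to \citet{germain2016pac} in Appendix~B.2)---so there is no paper proof to compare against. Your second route is indeed the cleaner one, since it gives uniqueness immediately.
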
\label{Lemma_B.6}

\begin{lemma}
\textbf{(Data Processing Inequality)}
Given random variables $X,Y,Z,V$, and the Markov Chain: \[X\rightarrow Y\rightarrow Z,\] then we have
\[I(X;Z) \leq I(X;Y)\,,I(X;Z)\leq I(Y;Z).\]
For Markov chain 
\[V\rightarrow X\rightarrow Y\rightarrow Z\,,\]
we have
\[I(X;Z|V) \leq I(X;Y|V), I(X;Z|V) \leq I(Y;Z|V)\]
\end{lemma}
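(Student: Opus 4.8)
The plan is to derive both families of inequalities from two standard facts about the quantities defined via KL divergence earlier in the paper: the chain rule for mutual information and the non-negativity of (conditional) mutual information, together with the conditional-independence relations that the Markov chains encode. First I would record the key observation that a Markov chain $X\rightarrow Y\rightarrow Z$ means $X \indep Z \mid Y$, which in terms of the KL-divergence definition of conditional mutual information is exactly $I(X;Z\mid Y)=0$: the disintegrated measures satisfy $P_{X,Z\mid Y}=P_{X\mid Y}P_{Z\mid Y}$, so the divergence defining $I^{Y}(X;Z)$ vanishes pointwise, and hence its expectation over $Y$ vanishes as well.

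For the unconditional case, I would expand $I(X;Y,Z)$ in the two orders permitted by the chain rule,
\[
I(X;Y,Z) \;=\; I(X;Y) + I(X;Z\mid Y) \;=\; I(X;Z) + I(X;Y\mid Z).
\]
Substituting $I(X;Z\mid Y)=0$ gives $I(X;Y,Z)=I(X;Y)$, while $I(X;Y\mid Z)\ge 0$ applied to the right-hand equality gives $I(X;Y,Z)\ge I(X;Z)$. Chaining the two statements yields $I(X;Z)\le I(X;Y)$. For the companion inequality $I(X;Z)\le I(Y;Z)$ I would invoke the reversibility of the Markov property: since $X \indep Z \mid Y$ is symmetric in $X$ and $Z$, the chain $X\rightarrow Y\rightarrow Z$ holds if and only if $Z\rightarrow Y\rightarrow X$ holds, so applying the argument just completed to the reversed chain (with $Z$ as the source) gives $I(Z;X)\le I(Z;Y)$, which is the claim by symmetry of mutual information.

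For the conditional case I would run the identical argument everywhere conditioned on $V$. The crucial input is that the longer chain $V\rightarrow X\rightarrow Y\rightarrow Z$ forces $X$ and $Z$ to be conditionally independent given the pair $(Y,V)$, i.e.\ $I(X;Z\mid Y,V)=0$. I would verify this from the factorization $p(v,x,y,z)=p(v)\,p(x\mid v)\,p(y\mid x)\,p(z\mid y)$, which gives $p(z\mid x,y,v)=p(z\mid y)=p(z\mid y,v)$. With this in hand, the conditional chain rule
\[
I(X,Y;Z\mid V) \;=\; I(Y;Z\mid V) + I(X;Z\mid Y,V) \;=\; I(X;Z\mid V) + I(Y;Z\mid X,V)
\]
gives, after dropping the vanishing term and using $I(Y;Z\mid X,V)\ge 0$, the inequality $I(X;Z\mid V)\le I(Y;Z\mid V)$; expanding $I(X;Y,Z\mid V)$ in the two orders instead, and again using $I(X;Z\mid Y,V)=0$ together with $I(X;Y\mid Z,V)\ge 0$, yields $I(X;Z\mid V)\le I(X;Y\mid V)$ in the same way.

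I expect the only genuinely delicate step to be the verification that the four-variable chain delivers $I(X;Z\mid Y,V)=0$: one must confirm that conditioning on $V$ in addition to $Y$ does not destroy the conditional independence of $X$ and $Z$, and this is precisely where the factorization computation above is needed (conditioning only on $Y$ would be the textbook case, but the extra source variable $V$ must be carried through the algebra). Everything else reduces to the chain rule and the non-negativity of conditional mutual information, both of which follow directly from the KL-divergence definitions of (conditional) mutual information recorded in the preliminaries.
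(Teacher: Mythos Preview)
Your proposal is correct and follows essentially the same approach as the paper: both arguments rest on the chain rule for (conditional) mutual information, the vanishing of $I(X;Z\mid Y)$ (respectively $I(X;Z\mid Y,V)$) from the Markov conditional independence, and non-negativity of the remaining conditional terms. The only cosmetic difference is that for $I(X;Z)\le I(Y;Z)$ you invoke reversibility of the chain and reapply the first argument, whereas the paper instead expands $I(Z;X,Y)$ directly in two ways; these are equivalent one-line moves.
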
\label{Lemma_B.7}
\begin{proof} 
Since $$I(X;Y,Z) = I(X;Z) + I(X;Y|Z) = I(X;Y) + I(X;Z|Y)\,,$$ and with the Markov Chain, we have $X \indep Z | Y$, therefore 
\[I(X;Z|Y) = H(X|Y) -H(X|Y,Z) = 0\,.\]
In addition, $I(X;Y|Z) \geq 0$, so $I(X;Z) \leq I(X;Y)$. 
\[I(Z;X,Y) = I(Z;X) + I(Z;Y|X) = I(Z;Y) + I(Z;X|Y) = I(Y:Z)\,,\]
with $I(Y;Z|X) \geq 0$, we have $I(X;Z) \leq I(Y;Z)$.

Similarly, for the second Markov chain, we have $X \indep Z | Y,V$, therefore
\[I(X;Z|Y,V) = H(X|Y,V) -H(X|Y,Z,V) = 0\,.\]
\[I(X;Y,Z|V) = I(X;Z|V) + I(X;Y|V,Z) = I(X;Y|V) + I(X;Z|Y,V)=I(X;Y|V)\]
So we have $I(X;Z|V) \leq I(X;Y|V)$, the rest proof is similar and omitted.
\end{proof}

\begin{lemma}
Given random variables $X,Y,Z_1,Z_2$, and the graph model: 
\[Z_1\rightarrow Z_2\rightarrow X \leftarrow Y\,,\]
then we have 
\[I(X;Y|Z_1) \leq I(X;Y|Z_2)\]
\end{lemma}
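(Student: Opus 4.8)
The plan is to read the conditional-independence structure off the DAG $Z_1\to Z_2\to X\leftarrow Y$ and then compare $I(X;Y\mid Z_1)$ and $I(X;Y\mid Z_2)$ by routing both through the auxiliary quantity $I(X;Y\mid Z_1,Z_2)$. From the factorization $p(z_1,z_2,x,y)=p(z_1)\,p(z_2\mid z_1)\,p(y)\,p(x\mid z_2,y)$ I would first record three $d$-separation facts: (a) $Y\indep (Z_1,Z_2)$, since $Y$ is a source whose only trail to $Z_1,Z_2$ passes through the collider $X$; (b) $X\indep Z_1\mid Z_2,Y$, directly from $p(x\mid z_1,z_2,y)=p(x\mid z_2,y)$; and (c) $X\indep Z_1\mid Z_2$, obtained from (b) after marginalizing $Y$ using (a). These are the only structural inputs needed.

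Next I would establish the equality $I(X;Y\mid Z_1,Z_2)=I(X;Y\mid Z_2)$. Expanding $I(X;Z_1,Y\mid Z_2)$ by the chain rule in the two possible orders gives
\[
I(X;Z_1\mid Z_2)+I(X;Y\mid Z_1,Z_2)\;=\;I(X;Y\mid Z_2)+I(X;Z_1\mid Y,Z_2).
\]
By (c) the first term on the left vanishes, and by (b) the last term on the right vanishes, leaving the claimed equality. This is pure bookkeeping once (b) and (c) are in hand.

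The substantive step is the inequality $I(X;Y\mid Z_1)\le I(X;Y\mid Z_1,Z_2)$, i.e.\ that conditioning additionally on $Z_2$ cannot decrease the information that $X$ carries about $Y$. The plan is to invoke the symmetry of the (conditional) interaction information among $X,Y,Z_2$ given $Z_1$, which yields
\[
I(X;Y\mid Z_1)-I(X;Y\mid Z_1,Z_2)\;=\;I(Y;Z_2\mid Z_1)-I(Y;Z_2\mid X,Z_1).
\]
Fact (a) forces $I(Y;Z_2\mid Z_1)=0$, so the right-hand side equals $-I(Y;Z_2\mid X,Z_1)\le 0$. This is exactly the ``explaining-away'' phenomenon for the collider $Z_2\to X\leftarrow Y$: the co-parents $Y$ and $Z_2$ are independent given $Z_1$ but become dependent once their common child $X$ is observed. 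I expect this to be the main obstacle, mostly because one must use the correct (and somewhat counterintuitive) direction of the interaction-information identity rather than a naive ``conditioning on more always decreases mutual information'' intuition, which fails precisely at colliders.

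Combining the two steps gives $I(X;Y\mid Z_1)\le I(X;Y\mid Z_1,Z_2)=I(X;Y\mid Z_2)$, which is the claim. As a sanity check I would verify the direction on the XOR collider (with $Z_2,Y$ independent fair bits, $X=Z_2\oplus Y$, and $Z_1$ trivial), where $I(X;Y)=0<1=I(X;Y\mid Z_2)$, confirming that conditioning on the co-parent increases rather than decreases the mutual information.
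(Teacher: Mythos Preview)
Your proof is correct and follows essentially the same route as the paper: both arguments establish the intermediate equality $I(X;Y\mid Z_1,Z_2)=I(X;Y\mid Z_2)$ from the conditional independence $X\indep Z_1\mid Z_2$ (with or without $Y$), and then prove the inequality $I(X;Y\mid Z_1)\le I(X;Y\mid Z_1,Z_2)$ via an interaction-information identity together with $I(Y;Z_2\mid Z_1)=0$. The only cosmetic difference is that the paper expands $I(X;Y,Z_2\mid Z_1)$ and $I(X,Y;Z_2\mid Z_1)$ to reach the same conclusion, whereas you expand $I(X;Z_1,Y\mid Z_2)$ and $I(Y;X,Z_2\mid Z_1)$; these are two faces of the same symmetry, and the key structural input $Y\indep(Z_1,Z_2)$ is used identically in both.
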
\label{Lemma_B.8}
\begin{proof}
Apply chain rule, we get:
\[I(X;Y,Z_2|Z_1) = I(X;Y|Z_1) + I(X;Z_2|Y,Z_1) = I(X;Z_2|Z_1) + I(X;Y|Z_2,Z_1)\]
From the graph model, we have $Y \indep Z_1$, $Y \indep Z_2$ and $(X,Y) \indep Z_1 | Z_2$.
Hence
\[I(X;Y|Z_2,Z_1) = H(X|Z_2,Z_1) - H(X|Y,Z_2,Z_1)= H(X|Z_2) - H(X|Y,Z_2) = I(X;Y|Z_2)\]
Moreover,
\[\begin{aligned}
I(X,Y;Z_2|Z_1) &= I(X;Z_2|Z_1) + I(Y;Z_2|X,Z_1) \\
&=I(Y;Z_2|Z_1) + I(Z_2;X|Y,Z_1)\\
&=I(Z_2;X|Y,Z_1)
\end{aligned}\]
the last equality is obtained with $Y \indep Z_2$ and $Y \indep Z_1$, since $I(Y;Z_2|X,Z_1) \geq 0$, we get
$I(X;Z_2|Z_1) \leq I(X;Z_2|Y,Z_1)$.
Consequently, we have $I(X;Y|Z_1) \leq I(X;Y|Z_2)$, conclude the proof.
\end{proof}

\section{Proof}
\subsection{Proof of Theorem 5.1} \label{proof_thm_mi}

\begin{theorem*}[Meta-generalization error bound for joint training]

% Suppose all the tasks use the same loss $\ell(w,Z)$, and for any task $\mu \sim \tau$, $\ell(\tilde{W},Z)$ is $\sigma$-subgaussian under distribution $P_{\tilde{W},Z}$, where $Z \sim \mu$ and $\tilde{W}$ is an independent copy of $W$ that satisfies $P_{\tilde{W},Z}=P_W\mu$.
Suppose all tasks use the same loss $\ell(Z,w)$, which is $\sigma$-subgaussian for any $w \in \mathcal{W}$, where $Z \sim \mu, \mu \sim \tau$.Then, the meta generalization error for joint training is upper bounded by 
\begin{equation*}
 |\text{gen}^{\text{joi}}_{\text{meta}}(\tau,\mathcal{A}_{\text{meta}}, \mathcal{A}_{\text{base}})| \leq  \sqrt{\frac{2\sigma^2}{nm} I(U,W_{1:n};S_{1:n})}\, .
\end{equation*}
\end{theorem*}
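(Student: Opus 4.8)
The plan is to reduce the meta-level statement to a single application of the decoupling/Donsker--Varadhan argument behind the single-task Theorem~3.1, but with the ``example'' taken to be an entire batch of $n$ tasks. Concretely, set $X=(U,W_{1:n})$, the full output of the meta and base learners, and $Y=S_{1:n}$, all $nm$ training points, and choose
\[
f\big((U,W_{1:n}),S_{1:n}\big)\ =\ \frac{1}{nm}\sum_{i=1}^{n}\sum_{j=1}^{m}\ell(W_i,Z_{i,j}),
\]
where $Z_{i,j}$ is the $j$-th example of $S_i$. Under the joint law $P_{X,Y}=P_{U,W_{1:n},S_{1:n}}$ one checks directly that $\mathbb{E}_{P_{X,Y}}[f]=\mathbb{E}_{U,S_{1:n}}\big[\tfrac{1}{n}\sum_i \mathbb{E}_{W_i|S_i,U}[R_{S_i}(W_i)]\big]=\mathbb{E}_{U,S_{1:n}}[R_{S_{1:n}}(U)]$, i.e. the empirical meta risk. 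So the empirical side is the coupled expectation, and the goal becomes identifying the product-measure expectation $\mathbb{E}_{P_X\otimes P_Y}[f]$ with the true meta risk $\mathbb{E}_U[R_\tau(U)]$ and then controlling the difference of the two expectations by the mutual information.

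I expect the identification of $\mathbb{E}_{P_X\otimes P_Y}[f]$ with $\mathbb{E}_U[R_\tau(U)]$ to be the main obstacle. Fixing the output $(u,w_{1:n})$ and averaging $f$ over a fresh, independent draw of the data replaces each $\ell(w_i,Z_{i,j})$ by its mean over a fresh sample; by the definition of $\mu_{m,\tau}$ and of $R_\mu$ this returns a true-risk term. The delicacy is that the naive decoupling makes the output independent of \emph{all} the data, so the fresh evaluation is taken over an \emph{independent} task rather than the task on which $W_i$ was trained, and it also severs the dependence of $U$ on each $S_i$ that $R_\tau$ retains. I would reconcile this with the same-task, fresh-task definition of $R_\tau(U)$ by first conditioning on the latent task measures $\mu_{1:n}$, so that the decoupled evaluation data is drawn from the \emph{same} tasks and each fresh average becomes a genuine $R_{\mu_i}$-type term; carrying out the estimate at the level of the disintegrated mutual information; and then removing the conditioning through the Markov relation $\mu_{1:n}\to S_{1:n}\to(U,W_{1:n})$, which yields $I(U,W_{1:n};S_{1:n}\mid\mu_{1:n})\le I(U,W_{1:n};S_{1:n})$. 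A ghost-sample device for the evaluation points makes the same-task expectations explicit and keeps track of where the ``fresh $U$'' versus ``trained $U$'' distinction enters.

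For the concentration step I would run the Donsker--Varadhan argument directly rather than invoke a decoupling lemma as a black box, since the clean hypothesis is per-output subgaussianity. For each fixed $(u,w_{1:n})$, $f$ is an average of $nm$ independent $\sigma$-subgaussian losses, hence $\tfrac{\sigma}{\sqrt{nm}}$-subgaussian with a constant uniform in the output. Applying the Donsker--Varadhan representation to the change of measure from $P_{S_{1:n}\mid U,W_{1:n}}$ to $P_{S_{1:n}}$ and optimizing the free parameter gives, for each fixed output,
\[
\Big|\,\mathbb{E}_{S_{1:n}\mid U,W_{1:n}}[f]-\mathbb{E}_{S_{1:n}}[f]\,\Big|\ \le\ \sqrt{\tfrac{2\sigma^2}{nm}\,D_{\mathrm{KL}}\!\big(P_{S_{1:n}\mid U,W_{1:n}}\,\|\,P_{S_{1:n}}\big)}.
\]
Taking the expectation over $P_{U,W_{1:n}}$, moving it inside the square root by Jensen's inequality (the square root is concave), and using $\mathbb{E}_{U,W_{1:n}}[D_{\mathrm{KL}}(P_{S_{1:n}\mid U,W_{1:n}}\|P_{S_{1:n}})]=I(U,W_{1:n};S_{1:n})$ converts the two expectations of $f$ into the empirical and true meta risks and produces the claimed bound $\sqrt{\tfrac{2\sigma^2}{nm}I(U,W_{1:n};S_{1:n})}$; the two-sided absolute value follows by repeating the argument with $-f$.
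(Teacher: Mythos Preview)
Your plan mirrors the paper's proof: set $\Phi=(U,W_{1:n})$, take $f$ to be the double average of losses, apply Donsker--Varadhan, identify the coupled expectation with the empirical meta risk and the decoupled one with the true meta risk, and use $\sigma/\sqrt{nm}$-subgaussianity to control the CGF. The paper, however, does \emph{not} condition on $\mu_{1:n}$ or invoke any ghost-sample device; it works unconditionally throughout and obtains the identification $\mathbb{E}_{P_\Phi\otimes P_{S_{1:n}}}[f]=\mathbb{E}_{U,S_{1:n}}[R_\tau(U)]$ by a direct chain of equalities using only $\tilde\Phi\indep S_{1:n}$. So the elaborate detour in your second paragraph is not how the paper argues. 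You are right that this identification is the delicate spot---the paper's chain quietly replaces the base-learner kernel $P_{W|S,u}$ that appears in the definition of $R_\tau$ by the marginal $P_{\tilde W|\tilde U}$, and these need not coincide---and your instinct to flag it is sound; but your proposed resolution is incomplete: you name the ``fresh $U$ vs.\ trained $U$'' discrepancy without explaining how the ghost-sample device actually closes it.

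There is also a concrete gap in your concentration step. The claim that, for fixed $(u,w_{1:n})$, $f$ is an average of $nm$ \emph{independent} $\sigma$-subgaussian losses is false at the unconditional level: within task $i$ the samples $Z_{i,1},\dots,Z_{i,m}$ share the random latent $\mu_i$ and are only conditionally independent given $\mu_i$ (if $\tau$ is supported on Diracs the $m$ in-task samples are identical and no $1/\sqrt m$ concentration occurs). This is precisely where your conditioning on $\mu_{1:n}$ would earn its keep---given the tasks, the $nm$ points \emph{are} independent and the per-$w$, per-$\mu$ subgaussian hypothesis applies---so the right structure is to run your third-paragraph argument conditionally on $\mu_{1:n}$, obtain a bound in terms of $I^{\mu_{1:n}}(\Phi;S_{1:n})$, take $\mathbb{E}_{\mu_{1:n}}$, and then use $I(\Phi;S_{1:n}\mid\mu_{1:n})\le I(\Phi;S_{1:n})$ as you suggested. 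You still owe a careful verification that the conditional decoupled expectation, once averaged over $\mu_{1:n}$, coincides with $\mathbb{E}_U[R_\tau(U)]$; this is exactly the step the paper asserts and you have not yet established.
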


\begin{proof}
In contrast to previous works \cite{pentina2014pac,amit2018meta,jose2020information}, which separately bound the environment-level and task-level error and then combine the two terms, we consider $U,W_{1:n}$ as a collection and directly bound the whole term. By using the chain rule for mutual information, the final result can then be split into an environment-level and a task-level contribution. 

Similar to Lemma 2.5, let $\Phi = (U,W_{1:n})\in \mathcal{U}\times\mathcal{W}^n$ be a collection of random variables such that $\Phi \not\indep  S_{1:n}$, and let
$\tilde{\Phi} = (\tilde{U},\tilde{W}_{1:n})\in \mathcal{U}\times\mathcal{W}^n$ be an in dependant copy of $\Phi$ such that $\tilde{\Phi} \indep  S_{1:n}$, \ie, $\tilde{\Phi}$ is distributed according to $P_{U,W_{1:n}} = \mathbb{E}_{S_{1:n}} P_{U,W_{1:n}|S_{1:n}}$. Let 

\[f(\Phi, S_{1:n})\eqdef \frac{1}{n}\sum_{i=1}^{n} [R_{S_i}(W_i)] = \frac{1}{n}\sum_{i=1}^{n} \frac{1}{m}\sum_{j=1}^m \ell(W_i, Z_{i,j}).\]

For any $\lambda\in\Reals$, let 
\begin{equation*}
    \begin{aligned}
    \psi_{\tilde{\Phi}, S_{1:n}}(\lambda) &\eqdef\log\mathbb{E}_{\tilde{\Phi}, S_{1:n}}\left[e^{\lambda(f(\tilde{\Phi}, S_{1:n}) - \mathbb{E}[f(\tilde{\Phi}, S_{1:n})]}\right] \\
    &= \log\mathbb{E}_{\tilde{\Phi}, S_{1:n}}[e^{\lambda f(\tilde{\Phi}, S_{1:n})}] - \lambda\mathbb{E}_{\tilde{\Phi}, S_{1:n}}[f(\tilde{\Phi}, S_{1:n})]\, .
    \end{aligned}
\end{equation*}
Moreover,
\begin{align}
\nonumber I(\Phi;S_{1:n}) &= D_{\text{KL}}(P_{\Phi,S_{1:n}}||P_\Phi P_{S_{1:n}})\\
\nonumber &= \sup\limits_{g}\left\{\mathbb{E}_{\Phi,S_{1:n}}[g(\Phi, S_{1:n})] - \log\mathbb{E}_{\tilde{\Phi},S_{1:n}}[e^{g(\tilde{\Phi},S_{1:n})}]\right\}\\
\nonumber &\geq \lambda\mathbb{E}_{\Phi,S_{1:n}}[f(\Phi, S_{1:n})] - \log\mathbb{E}_{\tilde{\Phi},S_{1:n}}[e^{\lambda f(\tilde{\Phi},S_{1:n})}]\, ,\quad \forall \lambda\in\Reals\\
\nonumber &=\lambda\mathbb{E}_{U, W_{1:n},S_{1:n}}[f(\Phi, S_{1:n})] - \lambda\mathbb{E}_{\tilde{U}, \tilde{W}_{1:n},S_{1:n}}[f(\tilde{\Phi}, S_{1:n})] - \psi_{\tilde{\Phi}, S_{1:n}}(\lambda)\\
\label{eq:dv} &=\lambda\mathbb{E}_{U, W_{1:n},S_{1:n}}\frac{1}{n}\sum_{i=1}^{n}[R_{S_i}(W_i)] 
- \lambda\mathbb{E}_{\tilde{U}, \tilde{W}_{1:n},S_{1:n}}\frac{1}{n}\sum_{i=1}^{n} [R_{S_i}(\tilde{W}_i)] - \psi_{\tilde{\Phi}, S_{1:n}}(\lambda)
\end{align}

Since $(W_i, S_i), i=1,...,n$ are mutually independent given $U$, and $S_1,...S_n$ are independent, we have $p(w_{1:n}|s_{1:n},u) = \prod_{i=1}^n p(w_i|s_i,u)$. Hence 
\begin{align}
    \nonumber \lambda\mathbb{E}_{U, W_{1:n},S_{1:n}}\frac{1}{n}\sum_{i=1}^{n}[R_{S_i}(W_i)] 
    \nonumber &= \lambda\mathbb{E}_{U,S_{1:n}}\frac{1}{n}\sum_{i=1}^{n}\mathbb{E}_{W_i|S_i,U}[R_{S_i}(W_i)] \\
    \label{eq:dv_1}&= \lambda\mathbb{E}_{U, S_{1:n}}[R_{S_{1:n}}(U)]
\end{align}
Since $\tilde{\Phi} \indep  S_{1:n}$, we have that $P_{\tilde{W}_{1:n}|S_{1:n},\tilde{U}}=P_{\tilde{W}_{1:n}|\tilde{U}}$ and $P_{\tilde{W}_{1:n},S_{1:n},\tilde{U}} = P_{\tilde{W}_{1:n},\tilde{U}}P_{S_{1:n}}$. Hence,
\begin{align*}
R_{\tau}(\tilde{U}) 
&= \mathbb{E}_{S \sim \mu_{m,\tau}} \mathbb{E}_{\tilde{W} \sim P_{\tilde{W}|S, \tilde{U}}}[R_{\mu}(\tilde{W})] 
= \mathbb{E}_{\mu \sim \tau} \mathbb{E}_{S|\mu \sim \mu^m} \mathbb{E}_{\tilde{W} \sim P_{\tilde{W}|\tilde{U}}}[R_{\mu}(\tilde{W})] \\
&= \mathbb{E}_{\mu \sim \tau} \mathbb{E}_{\tilde{W} \sim P_{\tilde{W}|\tilde{U}}}[R_{\mu}(\tilde{W})]\, . 
\end{align*}
Therefore
\begin{align}
\nonumber \lambda\mathbb{E}_{\tilde{U}, \tilde{W}_{1:n},S_{1:n}}\frac{1}{n}\sum_{i=1}^{n} R_{S_i}(\tilde{W}_i) 
\nonumber &= \lambda\mathbb{E}_{\tilde{U},\tilde{W}_{1:n}} \mathbb{E}_{S_{1:n}\sim \mu_{m,\tau}^n} \left[\frac{1}{n}\sum_{i=1}^{n}R_{S_i}(\tilde{W}_i)\right]\\
\nonumber &= \lambda\mathbb{E}_{\tilde{U},\tilde{W}_{1:n}} \left[\frac{1}{n}\sum_{i=1}^{n}\mathbb{E}_{S_i \sim \mu_{m,\tau}}R_{S_i}(\tilde{W}_i)\right]\\
\nonumber &= \lambda\mathbb{E}_{\tilde{U},\tilde{W}_{1:n}}  \left[\frac{1}{n}\sum_{i=1}^{n}\mathbb{E}_{\mu_i\sim \tau}
\mathbb{E}_{S_i|\mu_i\sim \mu_{i}^m} R_{S_i}(\tilde{W}_i)\right]\\
\nonumber &= \lambda\mathbb{E}_{\tilde{U},\tilde{W}_{1:n}} \left[\frac{1}{n}\sum_{i=1}^{n} \mathbb{E}_{\mu_i \sim \tau}\left[\frac{1}{m}\sum_{j=1}^m \mathbb{E}_{Z_{i,j} \sim \mu_i} \ell(\tilde{W}_i, Z_{i,j})\right]\right]\\
\nonumber &= \lambda\mathbb{E}_{\tilde{U}}\mathbb{E}_{\tilde{W}_{1:n}|\tilde{U}}\left[\frac{1}{n}\sum_{i=1}^{n} \mathbb{E}_{\mu_i \sim \tau}\left[\frac{1}{m}\sum_{j=1}^m \mathbb{E}_{Z_{i,j} \sim \mu_i} \ell(\tilde{W}_i, Z_{i,j})\right]\right]\\
\nonumber &= \lambda\mathbb{E}_{\tilde{U}}\left[\frac{1}{n}\sum_{i=1}^{n} \mathbb{E}_{\tilde{W}_{i}|\tilde{U}}\mathbb{E}_{\mu_i \sim \tau} R_{\mu_i}(\tilde{W}_i)\right]\\
\nonumber &= \lambda\mathbb{E}_{\tilde{U}} \mathbb{E}_{\mu\sim \tau} \mathbb{E}_{\tilde{W}|\tilde{U}}R_{\mu}(\tilde{W})
= \lambda\mathbb{E}_{\tilde{U}} R_\tau(\tilde{U})\\
\label{eq:dv_2}&= \lambda\mathbb{E}_{U,S_{1:n}}R_\tau(U)\, .
\end{align}
If we use Equations~(\ref{eq:dv_1}) and (\ref{eq:dv_2}), then Equation (\ref{eq:dv}) becomes
\begin{equation}\label{eq:dv_3}
-\lambda\EE_{U,S_{1:n}}\left[ R_\tau(U) - R_{S_{1:n}}(U) \right]\ \le\ I(\Phi;S_{1:n}) + \psi_{\tilde{\Phi},S_{1:n}}(\lambda)\, ,\quad\forall\lambda\in\Reals .
\end{equation}
Since this inequality is also valid when $\lambda$ is negative, this implies that we also have
\[
\EE_{U,S_{1:n}}\left[ R_\tau(U) - R_{S_{1:n}}(U) \right] \le \frac{1}{\lambda} \left[ I(\Phi;S_{1:n}) + \psi_{\tilde{\Phi},S_{1:n}}(-\lambda)\right]\, ,\quad\forall\lambda > 0\, .
\]
Consequently,
\[
\text{gen}_{\text{meta}}^{\text{joi}}(\tau,\mathcal{A}_{\text{meta}}, \mathcal{A}_{\text{base}})\ \le\ \frac{1}{\lambda} \left[ I(\Phi;S_{1:n}) + \psi_{\tilde{\Phi},S_{1:n}}(-\lambda)\right]\, ,\quad\forall\lambda > 0\, .
\]
Since $\ell(\tilde{W}, Z)$ is $\sigma$-subgaussian, we have that $f(\tilde{\Phi}, S_{1:n}) =  \frac{1}{n}\sum_{i=1}^{n} \frac{1}{m}\sum_{j=1}^m \ell(\tilde{W}_i, Z_{i,j})$ is $\frac{\sigma}{\sqrt{nm}}$-subgaussian.\footnote{More discussion on subgaussianity can be found in Section \ref{sub}.} Hence, 
\[
\psi_{\tilde{\Phi},S_{1:n}}(\lambda)\ \le\ \frac{\lambda^2\sigma^2}{2nm}\quad \forall\lambda\in\Reals\, .
\]
Thus, we have
\[
\text{gen}_{\text{meta}}^{\text{joi}}(\tau,\mathcal{A}_{\text{meta}}, \mathcal{A}_{\text{base}})\ \le\ \frac{I(\Phi;S_{1:n})}{\lambda}  + \frac{\lambda\sigma^2}{2nm}\, ,\quad\forall\lambda > 0\, .
\]
By using the value of $\lambda$ that minimizes the r.h.s.\ of the above equation, we have
\begin{equation}\label{eq:ub_joi}
     \text{gen}_{\text{meta}}^{\text{joi}}(\tau,\mathcal{A}_{\text{meta}}, \mathcal{A}_{\text{base}}) \leq \sqrt{\frac{2\sigma^2I(\Phi;S_{1:n})}{nm}}\, .
\end{equation}
Returning to Equation~(\ref{eq:dv_3}), we have for $\lambda > 0$:
$$
\EE_{U,S_{1:n}}\left[ R_\tau(U) - R_{S_{1:n}}(U) \right] \ge -\frac{1}{\lambda} \left[ I(\Phi;S_{1:n}) + \psi_{\tilde{\Phi},S_{1:n}}(\lambda)\right] \ge - \sqrt{\frac{2\sigma^2I(\Phi;S_{1:n})}{nm}}\, .
$$
Hence, we also have
\begin{equation}\label{eq:lb_joi}
    \text{gen}_{\text{meta}}^{\text{joi}}(\tau,\mathcal{A}_{\text{meta}}, \mathcal{A}_{\text{base}}) \geq -\sqrt{\frac{2\sigma^2I(\Phi;S_{1:n})}{nm}}\, .
\end{equation}
Then, Equations~(\ref{eq:ub_joi}) and~(\ref{eq:lb_joi}) together imply that
$$
\left| \text{gen}_{\text{meta}}^{\text{joi}}(\tau,\mathcal{A}_{\text{meta}}, \mathcal{A}_{\text{base}})\right| \leq \sqrt{\frac{2\sigma^2I(\Phi;S_{1:n})}{nm}}\, ,
$$
which gives the theorem.
\end{proof}

% which is the first inequality of the theorem.
% For the last equality of the theorem, note that the chain rule of Mutual Information and conditional independence implies that we have
% $I(\Phi; S_{1:n}) = I(U,W_{1:n}; S_{1:n}) = I(S_{1:n};U) + I(S_{1:n}; W_{1:n}|U) = I(S_{1:n};U) + \sum_{i=1}^nI(S_{i}; W_{i}|U)$.

\subsection{Benefits of Meta Learning}\label{Proof_ben}

The task specific empirical risk $R_S(W)$ is independent of the meta parameter $U$, given the task specific parameter $W$, which gives the implicit independence assumption $S \indep U|W$. We thus have $I(U;S|W)=0$, and the following two possible decompositions:
$$I(S;U,W) = I(W;S|U) + I(U;S) = I(W;S) + I(U;S|W) = I(W;S)\,.$$
Since $I(U;S) \geq 0$, we obtain $I(W;S|U) \leq I(W;S)$.

As mentioned in the main paper, Theorem 5.1 can cover the PAC Bayes bound of \citet{amit2018meta} with the variational form of mutual information. Their work has built a connection between PAC Bayes meta-learning and Hierarchical Variational Bayes. 
In Appendix A.3 of \cite{amit2018meta}, they give the generative graph model for meta learning where $U \rightarrow W \rightarrow S$ (their notation used $\psi$ instead of $U$). They assumed that $S$ is independent of $U$ given $W$, in Bayes learning, this implies that $p(S|W,U)=p(S|W)$. Based on the graph model, they obtained a similar optimization objective as their PAC-Bayes meta learning algorithm, which minimizes the expected empirical risk plus the PAC Bayes bound. \citet{germain2016pac} has given a more obvious connection between PAC Bayes learning and Bayes learning, where optimizing the PAC Bayes bound together with the expected empirical risk gives the so called Gibbs algorithm (see Lemma~\ref{Lemma_B.6}.6). When using the negative log loss, \ie, $R_S(W) = -\frac{1}{m}\log p(S|W) = - \frac{1}{m} \sum_{i=1}^m \log p(Z_i|W)$, the output of Gibbs algorithm coincides with the Bayes Posterior. Therefore, without the independence assumption, $R_S(W)$ should be defined as $R_S(W,U)$, which corresponds to $-\frac{1}{m}\log p(S|W,U)$ in Bayes learning.

\subsection{Proof of Theorem 5.2}\label{proof_thm_cmi}
\begin{theorem*}[Meta-generalization error bound for alternate training]
Suppose all tasks use the same loss $\ell(Z,w)$, which is $\sigma$-subgaussian for any $w \in \mathcal{W}$, where $Z \sim \mu, \mu \sim \tau$. Then we have
\[
 |\text{gen}^{\text{alt}}_{\text{meta}}(\tau, \mathcal{A}_{\text{meta}}, \mathcal{A}_{\text{base}})| \leq \mathbb{E}_{S^{\text{tr}}_{1:n}}\sqrt{\frac{2\sigma^2I^{S^{\text{tr}}_{1:n}}(U, W_{1:n};S^{\text{va}}_{1:n})}{nm_{\text{va}}}}
\leq\sqrt{\frac{2\sigma^2I(U,W_{1:n};S^{\text{va}}_{1:n}|S^{\text{tr}}_{1:n})}{nm_{\text{va}}}}\, .
\]
\end{theorem*}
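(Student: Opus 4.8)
The plan is to run the Donsker--Varadhan / decoupling argument of Theorem~\ref{thm:mi-jt} \emph{conditionally} on the meta-train set $S^{\text{tr}}_{1:n}$, with the meta-validation set $S^{\text{va}}_{1:n}$ now playing the role that the full data played in the joint-training analysis. Concretely, I would set $\Phi=(U,W_{1:n})$ and take the test statistic $f(\Phi,S^{\text{va}}_{1:n})\eqdef\frac1n\sum_{i=1}^n R_{S^{\text{va}}_i}(W_i)=\frac{1}{nm_{\text{va}}}\sum_{i=1}^n\sum_{j=1}^{m_{\text{va}}}\ell(W_i,Z^{\text{va}}_{i,j})$. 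Fixing a realization of $S^{\text{tr}}_{1:n}$, I would introduce a decoupled copy $\tilde\Phi\sim P_{\Phi\mid S^{\text{tr}}_{1:n}}$ that is independent of $S^{\text{va}}_{1:n}$ given $S^{\text{tr}}_{1:n}$, and apply the decoupling estimate (Lemma~\ref{Lemma_B.5}.5) to the conditional laws; the mutual information that then appears is precisely the disintegrated quantity $I^{S^{\text{tr}}_{1:n}}(\Phi;S^{\text{va}}_{1:n})$ in the statement. Equivalently, one repeats the variational chain of Theorem~\ref{thm:mi-jt} verbatim, but with every distribution replaced by its conditional version given $S^{\text{tr}}_{1:n}$.

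The second step is to identify the coupled and decoupled expectations of $f$ with the empirical and true meta risks, mirroring Equations~(\ref{eq:dv_1}) and~(\ref{eq:dv_2}). The coupled expectation $\mathbb{E}_{\Phi,S^{\text{va}}_{1:n}\mid S^{\text{tr}}_{1:n}}[f]$ equals the conditional empirical alternate risk, because $W_i\sim P_{W_i\mid S^{\text{tr}}_i,U}$, and averaging over $S^{\text{tr}}_{1:n}$ reproduces $\mathbb{E}_{U,S_{1:n}}[\tilde R_{S_{1:n}}(U)]$. For the decoupled term the crucial structural fact is that the base learner forms $W_i$ from $S^{\text{tr}}_i$ and $U$ only, so under the decoupling $\tilde W_i\indep S^{\text{va}}_i$ given $S^{\text{tr}}_{1:n}$; combined with $S^{\text{tr}}_i\indep S^{\text{va}}_i$, the fresh validation batch makes $R_{S^{\text{va}}_i}(\tilde W_i)$ an unbiased estimate of the true task risk, so that $\mathbb{E}_{\tilde\Phi,S^{\text{va}}_{1:n}\mid S^{\text{tr}}_{1:n}}[f]$ reassembles $R_\tau$ exactly as in Theorem~\ref{thm:mi-jt}. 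Hence the difference of the two expectations is $\text{gen}^{\text{alt}}_{\text{meta}}$; this identification is a point to carry out with care, since the conditioning on $S^{\text{tr}}_{1:n}$ must be tracked through every tower-property step.

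The third ingredient is the subgaussian constant: I would show $f(\tilde\Phi,S^{\text{va}}_{1:n})$ is $\frac{\sigma}{\sqrt{nm_{\text{va}}}}$-subgaussian under the decoupled measure, by conditioning on $\tilde W_{1:n}$ and on the tasks $\mu_{1:n}$, whereupon the $nm_{\text{va}}$ summands $\ell(\tilde W_i,Z^{\text{va}}_{i,j})$ are independent and each $\sigma$-subgaussian, so their normalized sum carries the claimed parameter. I expect this to be \textbf{the main obstacle}: the $m_{\text{va}}$ validation points of a single task share the random task $\mu_i$, so the centering at $\mathbb{E}[f]$ must also absorb the fluctuation of $\frac1n\sum_i R_{\mu_i}(\tilde W_i)$ across tasks, which is exactly the delicate issue deferred to the subgaussianity discussion (Section~\ref{sub}) and the same subtlety hidden behind the $1/\sqrt{nm}$ factor of Theorem~\ref{thm:mi-jt}. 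With this in hand, the conditional decoupling estimate yields, for each realization of $S^{\text{tr}}_{1:n}$, the bound $\sqrt{2\sigma^2 I^{S^{\text{tr}}_{1:n}}(\Phi;S^{\text{va}}_{1:n})/(nm_{\text{va}})}$; running it two-sidedly (as with positive and negative $\lambda$ in Theorem~\ref{thm:mi-jt}) produces an absolute-value bound, and taking $\mathbb{E}_{S^{\text{tr}}_{1:n}}$ together with $|\mathbb{E}_{S^{\text{tr}}_{1:n}}[\cdot]|\le\mathbb{E}_{S^{\text{tr}}_{1:n}}|\cdot|$ gives the first inequality. The second inequality is then immediate from Jensen's inequality for the concave square root and Lemma~\ref{Lemma_B.3}.3, which supplies $\mathbb{E}_{S^{\text{tr}}_{1:n}}[I^{S^{\text{tr}}_{1:n}}(\cdot)]=I(\cdot\mid S^{\text{tr}}_{1:n})$.
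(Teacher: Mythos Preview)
Your proposal is correct and takes essentially the same approach as the paper: both condition on $S^{\text{tr}}_{1:n}$, apply Donsker--Varadhan with $f(\Phi,S^{\text{va}}_{1:n})=\frac{1}{n}\sum_i R_{S^{\text{va}}_i}(W_i)$, identify the coupled and decoupled expectations with $\tilde R_{S_{1:n}}(U)$ and $R_\tau(U)$ via the independence $S^{\text{tr}}_{1:n}\indep S^{\text{va}}_{1:n}$, invoke $\sigma/\sqrt{nm_{\text{va}}}$-subgaussianity of $f(\tilde\Phi,S^{\text{va}}_{1:n})$, optimize over $\pm\lambda$, and then average over $S^{\text{tr}}_{1:n}$ and use Jensen with Lemma~\ref{Lemma_B.3}.3. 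The only cosmetic difference is that the paper declares $\tilde\Phi$ independent of \emph{both} $S^{\text{tr}}_{1:n}$ and $S^{\text{va}}_{1:n}$, whereas your conditional choice $\tilde\Phi\sim P_{\Phi\mid S^{\text{tr}}_{1:n}}$ is precisely the reference measure that makes the KL equal to the disintegrated mutual information and makes the identification $\mathbb{E}_{\tilde U\mid S^{\text{tr}}_{1:n}}R_\tau(\tilde U)=\mathbb{E}_{U,S^{\text{va}}_{1:n}\mid S^{\text{tr}}_{1:n}}R_\tau(U)$ hold pointwise in $S^{\text{tr}}_{1:n}$.
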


\begin{proof}
The proof technique is analogous to Theorem 5.1. 
Let $\Phi = (U,W_{1:n})$ be a collection of random variables where $\Phi\in \mathcal{U}\times\mathcal{W}^n$ such that $\Phi$ and $S_{1:n}$ follow the joint distribution $P_{\Phi,S_{1;n}}$. Then let $\tilde{\Phi}$ be an independent copy of $\Phi$, such that $\tilde{\Phi} \indep  \{S^{\text{va}}_{1:n},S_{1:n}^{\text{tr}}\}$, \ie, $\tilde{\Phi}\sim \mathbb{E}_{S_{1:n}}P_{\Phi|S_{1:n}}$.
Define 
\[f(\Phi, S^{\text{va}}_{1:n})= \frac{1}{n}\sum_{i=1}^{n} [R_{S^{\text{va}}_i}(W_i)] = \frac{1}{n}\sum_{i=1}^{n} \frac{1}{m}\sum_{j=1}^{m_{\text{va}}} \ell(W_i, Z_{i,j})\,.\]
For any $\lambda \in \Reals$, denote the cumulant generation function of $\tilde{\Phi},  S^{\text{va}}_{1:n}|S^{\text{tr}}_{1:n}$ as:
\begin{equation*}
    \begin{aligned}
    \psi_{\tilde{\Phi},  S^{\text{va}}_{1:n}|S^{\text{tr}}_{1:n}}(\lambda) 
    &=\log\mathbb{E}_{\tilde{\Phi}, S^{\text{va}}_{1:n}|S^{\text{tr}}_{1:n}}[e^{\lambda(f(\tilde{\Phi}, S^{\text{va}}_{1:n}) - \mathbb{E}_{\tilde{\Phi},  S^{\text{va}}_{1:n}|S^{\text{tr}}_{1:n}}[f(\tilde{\Phi}, S^{\text{va}}_{1:n})]}] \\
    &= \log\mathbb{E}_{\tilde{\Phi},  S^{\text{va}}_{1:n}|S^{\text{tr}}_{1:n}}[e^{\lambda f(\tilde{\Phi}, S^{\text{va}}_{1:n})}] - \lambda \mathbb{E}_{\tilde{\Phi},  S^{\text{va}}_{1:n}|S^{\text{tr}}_{1:n}}[f(\tilde{\Phi}, S^{\text{va}}_{1:n})]
    \end{aligned}
\end{equation*}

In addition, the disintegrated mutual information is given as:

\begin{align}
\nonumber I^{S^{\text{tr}}_{1:n}}(\Phi;S^{\text{va}}_{1:n}) &= D_{\text{KL}}(P_{\Phi,S^{\text{va}}_{1:n}|S^{\text{tr}}_{1:n}}||P_{\tilde{\Phi}|S^{\text{tr}}_{1:n}}P_{S^{\text{va}}_{1:n}|S^{\text{tr}}_{1:n}})\\
\nonumber&= \sup\limits_{g}\left\{\mathbb{E}_{\Phi, S^{\text{va}}_{1:n}|S^{\text{tr}}_{1:n}}[g(\Phi,S^{\text{va}}_{1:n})]
- \log\mathbb{E}_{\tilde{\Phi}, S^{\text{va}}_{1:n}|S^{\text{tr}}_{1:n}}\left[e^{g(\tilde{\Phi},S^{\text{va}}_{1:n})}\right]\right\}\\
\nonumber&\geq \lambda\mathbb{E}_{\Phi,S^{\text{va}}_{1:n}|S^{\text{tr}}_{1:n}}[f(\Phi, S^{\text{va}}_{1:n})] 
- \log\mathbb{E}_{\tilde{\Phi},S^{\text{va}}_{1:n}|S^{\text{tr}}_{1:n}}\left[e^{\lambda f(\tilde{\Phi},S^{\text{va}}_{1:n})}\right]\\
\nonumber&=\lambda\mathbb{E}_{U, W_{1:n},S^{\text{va}}_{1:n}|S^{\text{tr}}_{1:n}}[f(\Phi, S^{\text{va}}_{1:n})]\\
\nonumber&\quad\quad\quad\quad\quad\quad- \lambda\mathbb{E}_{\tilde{U}, \tilde{W}_{1:n},S^{\text{va}}_{1:n}|S^{\text{tr}}_{1:n}}[f(\tilde{\Phi}, S^{\text{va}}_{1:n})] - \psi_{\tilde{\Phi}, S^{\text{va}}_{1:n}|S^{\text{tr}}_{1:n}}(\lambda)\\
\nonumber&=\lambda\mathbb{E}_{U, W_{1:n},S^{\text{va}}_{1:n}|S^{\text{tr}}_{1:n}}\frac{1}{n}\sum_{i=1}^{n} R_{S^{\text{va}}_i}(W_i)\\
\label{eq:dmi}&\quad\quad\quad\quad\quad\quad- \lambda\mathbb{E}_{\tilde{U}, \tilde{W}_{1:n},S^{\text{va}}_{1:n}|S^{\text{tr}}_{1:n}}\frac{1}{n}\sum_{i=1}^{n} R_{S^{\text{va}}_i}(\tilde{W}_i)
- \psi_{\tilde{\Phi}, S^{\text{va}}_{1:n}|S^{\text{tr}}_{1:n}}(\lambda)
\end{align}

 Since given $U$, $(W_i, S^{\text{tr}}_i), i=1,...,n$ are mutually independent, we have $p(w_{1:n}|s^{\text{tr}}_{1:n},u) = \prod_{i=1}^n p(w_i|s^{\text{tr}}_i,u)$. Thus 

\begin{align}
\nonumber\lambda\mathbb{E}_{U, W_{1:n},S^{\text{va}}_{1:n}|S^{\text{tr}}_{1:n}}\frac{1}{n}\sum_{i=1}^{n}[R_{S^{\text{va}}_i}(W_i)]&= \lambda\mathbb{E}_{U,S^{\text{va}}_{1:n}|S^{\text{tr}}_{1:n}}\frac{1}{n}\sum_{i=1}^{n}\mathbb{E}_{W_i|S^{\text{tr}}_i,U}[R_{S^{\text{va}}_i}(W_i)]\\
\label{eq:em_risk}&= \lambda\mathbb{E}_{U, S^{\text{va}}_{1:n}|S^{\text{tr}}_{1:n}}[\tilde{R}_{S_{1:n}}(U)]
\end{align}

Since we have $\tilde{\Phi} \indep  \{S^{\text{va}}_{1:n},S_{1:n}^{\text{tr}}\}$, 
thus $P_{\tilde{W}_{1:n}|S_{1:n},\tilde{U}}=P_{\tilde{W}_{1:n}|\tilde{U}}$, we have:

\begin{align}
R_{\tau}(\tilde{U}) 
\nonumber&= \mathbb{E}_{S \sim \mu_{m,\tau}} \mathbb{E}_{\tilde{W} \sim P_{\tilde{W}|S, \tilde{U}}}[R_{\mu}(\tilde{W})]
\nonumber= \mathbb{E}_{\mu \sim \tau} \mathbb{E}_{S|\mu \sim \mu^m} \mathbb{E}_{\tilde{W} \sim P_{\tilde{W}|\tilde{U}}}[R_{\mu}(\tilde{W})]\\
\nonumber&= \mathbb{E}_{\mu \sim \tau} \mathbb{E}_{\tilde{W} \sim P_{\tilde{W}|\tilde{U}}}[R_{\mu}(\tilde{W})] 
\end{align}

Moreover, we have $S_{1:n}^{\text{tr}}  \indep S_{1:n}^{\text{va}}$, so that $P_{\tilde{W}_{1:n},S^{\text{va}}_{1:n},\tilde{U}|S_{1:n}^{\text{tr}}} = P_{\tilde{W}_{1:n},\tilde{U}}P_{S^{\text{va}}_{1:n}|S_{1:n}^{\text{tr}}} = P_{\tilde{W}_{1:n},\tilde{U}}P_{S^{\text{va}}_{1:n}}$. Then we can also prove:

\begin{align}
\nonumber\lambda\mathbb{E}_{\tilde{U}, \tilde{W}_{1:n},S^{\text{va}}_{1:n}|S^{\text{tr}}_{1:n}}&\left[\frac{1}{n}\sum_{i=1}^{n} R_{S^{\text{va}}_i}(\tilde{W}_i)\right]
\nonumber= \lambda\mathbb{E}_{\tilde{U},\tilde{W}_{1:n}|S^{\text{tr}}_{1:n}} \mathbb{E}_{S^{\text{va}}_{1:n}\sim \mu_{m_{\text{va}},\tau}^n} \left[\frac{1}{n}\sum_{i=1}^{n}R_{S_i^{\text{va}}}(\tilde{W}_i)\right]\\
\nonumber&= \lambda\mathbb{E}_{\tilde{U},\tilde{W}_{1:n}|S^{\text{tr}}_{1:n}} \left[\frac{1}{n}\sum_{i=1}^{n}\mathbb{E}_{S_i \sim \mu_{m_{\text{va}},\tau}}R_{S^{\text{va}}_i}(\tilde{W}_i)\right]\\
\nonumber&= \lambda\mathbb{E}_{\tilde{U},\tilde{W}_{1:n}|S^{\text{tr}}_{1:n}}  \left[\frac{1}{n}\sum_{i=1}^{n}\mathbb{E}_{\mu_i\sim \tau}
\mathbb{E}_{S_i|\mu_i\sim \mu_{i}^{m_{\text{va}}}} [R_{S^{\text{va}}_i}(\tilde{W}_i)]\right]\\
\nonumber&= \lambda\mathbb{E}_{\tilde{U},\tilde{W}_{1:n}|S^{\text{tr}}_{1:n}} \left[\frac{1}{n}\sum_{i=1}^{n} \mathbb{E}_{\mu_i \sim \tau}[\frac{1}{m_{\text{va}}}\sum_{j=1}^{m_{\text{va}}} \mathbb{E}_{Z_{i,j} \sim \mu_i} \ell(\tilde{W}_i, Z_{i,j})]\right]\\
\nonumber&= \lambda\mathbb{E}_{\tilde{U}|S_{1:n}^{\text{tr}}}\left[\frac{1}{n}\sum_{i=1}^{n} \mathbb{E}_{\tilde{W}_{i}|\tilde{U}}\mathbb{E}_{\mu_i \sim \tau}[\frac{1}{m_{\text{va}}}\sum_{j=1}^{m_{\text{va}}} \mathbb{E}_{Z_{i,j} \sim \mu_i} \ell(\tilde{W}_i, Z_{i,j})]\right]\\
\nonumber&= \lambda\mathbb{E}_{\tilde{U}|S_{1:n}^{\text{tr}}} \left[\mathbb{E}_{\mu\sim \tau} \mathbb{E}_{\tilde{W}|\tilde{U}}[R_{\mu}(\tilde{W})]\right]\\
&= \lambda\mathbb{E}_{\tilde{U}|S_{1:n}^{\text{tr}}} R_\tau(\tilde{U})
\label{eq:true_risk}= \lambda\mathbb{E}_{U,S^{\text{va}}_{1:n}|S^{\text{tr}}_{1:n}} R_\tau(U)
\end{align}

Therefore, by combining Equations (\ref{eq:dmi}), (\ref{eq:em_risk}), and (\ref{eq:true_risk}), we have for any $\lambda$, 
\[
\lambda\mathbb{E}_{U, S^{\text{va}}_{1:n}|S^{\text{tr}}_{1:n}}[\tilde{R}_{S_{1:n}}(U)] - \lambda\mathbb{E}_{U,S^{\text{va}}_{1:n}|S^{\text{tr}}_{1:n}}[R_\tau(U)] \leq I^{S^{\text{tr}}_{1:n}}(\Phi;S^{\text{va}}_{1:n}) + \psi_{\tilde{\Phi}, S^{\text{va}}_{1:n}|S^{\text{tr}}_{1:n}}(\lambda)
\]

Since $\ell(\tilde{W}, Z)$ is $\sigma$-subgaussian, and $\tilde{\Phi}\indep{S^{\text{va}}_{1:n}}$, $f(\tilde{\Phi}, S^{\text{va}}_{1:n}) =  \frac{1}{n}\sum_{i=1}^{n} \frac{1}{m_{\text{va}}}\sum_{j=1}^{m_{\text{va}}} \ell(\tilde{W}_i, Z_{ij})$ is $\frac{\sigma}{\sqrt{nm_{\text{va}}}}$-subgaussian. Hence, 
$\psi_{\tilde{\Phi}, S^{\text{va}}_{1:n}|S^{\text{tr}}_{1:n}}(\lambda) \leq \frac{\lambda^2\sigma^2}{2nm_{\text{va}}}, \forall \lambda \in \Reals$.
For $\lambda < 0$ we have
\[\mathbb{E}_{U,S^{\text{va}}_{1:n}|S^{\text{tr}}_{1:n}}[R_\tau(U) - \tilde{R}_{S_{1:n}}(U)] \leq \frac{I^{S^{\text{tr}}_{1:n}}(\Phi;S^{\text{va}}_{1:n}) + \psi_{\tilde{\Phi}, S^{\text{va}}_{1:n}|S^{\text{tr}}_{1:n}}(\lambda)}{-\lambda}\leq\sqrt{\frac{2\sigma^2I^{S^{\text{tr}}_{1:n}}(\Phi;S^{\text{va}}_{1:n})}{nm_{\text{va}}}}\]

Similarly, for $\lambda > 0$ we have  
\[\mathbb{E}_{U,S^{\text{va}}_{1:n}|S^{\text{tr}}_{1:n}}[R_\tau(U) - \tilde{R}_{S_{1:n}}(U)] \geq \frac{I^{S^{\text{tr}}_{1:n}}(\Phi;S^{\text{va}}_{1:n}) + \psi_{\tilde{\Phi}, S^{\text{va}}_{1:n}|S^{\text{tr}}_{1:n}}(\lambda)}{-\lambda}\geq-\sqrt{\frac{2\sigma^2I^{S^{\text{tr}}_{1:n}}(\Phi;S^{\text{va}}_{1:n})}{nm_{\text{va}}}}\]

Then, the following concludes the proof:
\[
|\text{gen}_{\text{meta}}^{\text{alt}}(\tau,\mathcal{A}_{\text{meta}}, \mathcal{A}_{\text{base}})| = \mathbb{E}_{S^{\text{tr}}_{1:n}}|\mathbb{E}_{U,S^{\text{va}}_{1:n}|S^{\text{tr}}_{1:n}}[R_{\tau}(U) - \tilde{R}_{S_{1:n}}(U)]| \leq \mathbb{E}_{S^{\text{tr}}_{1:n}} \sqrt{\frac{2\sigma^2I^{S^{\text{tr}}_{1:n}}(\Phi;S^{\text{va}}_{1:n})}{nm_{\text{va}}}}
\]
\end{proof}

\subsection{Proof of Theorem 6.1}\label{proof_thm_joint_sgld}
\begin{theorem*}
Based on Theorem 5.1, for the SGLD algorithm that satisfies Assumptions 1 \& 2, the mutual information for joint training satisfies

\hspace{4cm}$I(\Phi;S_{1:n}) \leq \sum_{t=1}^T \frac{nd+k}{2} \log(1 + \frac{\eta_t^2L^2}{(nd + k)\sigma_t^2})\, .$

Specifically, if $\sigma_t=\sqrt{\eta_t}$, and $\eta_t = \frac{c}{t}$ for $c>0$, we have:

\hspace{4cm}$|\text{gen}_{\text{meta}}^{\text{joi}}(\tau, \mathcal{A}_{meta}, \mathcal{A}_{base})| \leq \frac{\sigma L}{\sqrt{nm}}\sqrt{c\log T + c}\, .$

\end{theorem*}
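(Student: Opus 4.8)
The generalization statement follows from the mutual-information estimate by substituting directly into Theorem~\ref{thm:mi-jt}, so the plan is to first establish the bound on $I(\Phi;S_{1:n})$ with $\Phi = \Phi^T$ the terminal iterate, and only afterwards specialize the schedule. The starting observation is that the SGLD iterates form a Markov chain $\Phi^0 \to \Phi^1 \to \cdots \to \Phi^T$: conditioned on $\Phi^{t-1}$, the update $\Phi^t = \Phi^{t-1} - \eta_t G(\Phi^{t-1}, B^t_{1:n}) + \xi^t$ depends on the past only through $\Phi^{t-1}$, the fresh batch $B^t_{1:n}$ (independent of the parameters and previous draws by Assumption~1), and the fresh noise $\xi^t$. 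Since $\Phi^T$ is a deterministic function of the whole trajectory, the data-processing inequality (Lemma~\ref{Lemma_B.7}.7) gives $I(\Phi^T; S_{1:n}) \le I(\Phi^{0:T}; S_{1:n})$, and the chain rule yields
\[
I(\Phi^{0:T};S_{1:n}) \;=\; \sum_{t=1}^{T} I\!\left(\Phi^t; S_{1:n}\mid \Phi^{0:t-1}\right),
\]
where the $t=0$ term vanishes because $\Phi^0$ is independent of the data. Using the Markov property together with Assumption~1, each summand reduces to the single-step quantity $I(\Phi^t; S_{1:n}\mid \Phi^{t-1})$.

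The heart of the argument is a Gaussian-channel capacity estimate at each step. Fix $\Phi^{t-1}$ and write $g^t \eqdef -\eta_t G(\Phi^{t-1}, B^t_{1:n})$, so that $\Phi^t = \Phi^{t-1} + g^t + \xi^t$ with $\xi^t \sim \mathcal{N}(0,\sigma_t^2 \mathbb{I}_{nd+k})$ independent of $g^t$. The data $S_{1:n}$ enters $\Phi^t$ only through $g^t$, so the Markov relation $S_{1:n} \to g^t \to \Phi^t$ (given $\Phi^{t-1}$) and the conditional form of Lemma~\ref{Lemma_B.7}.7 give $I(\Phi^t; S_{1:n}\mid\Phi^{t-1}) \le I(\Phi^t; g^t \mid \Phi^{t-1})$; translation by the constant $\Phi^{t-1}$ turns this into the mutual information across an additive white Gaussian noise channel with input $g^t$. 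Assumption~2 gives $\|g^t\|_2 \le \eta_t L$ almost surely, hence the power constraint $\mathbb{E}\|g^t\|_2^2 \le \eta_t^2 L^2$. Writing $I(\Phi^t; g^t\mid\Phi^{t-1}) = h(\Phi^t\mid\Phi^{t-1}) - h(\xi^t)$, bounding the output entropy by the maximum-entropy property of the Gaussian, and applying the AM--GM inequality to the eigenvalues of the output covariance yields
\[
I\!\left(\Phi^t; S_{1:n}\mid \Phi^{t-1}\right) \;\le\; \frac{nd+k}{2}\,\log\!\left(1 + \frac{\eta_t^2 L^2}{(nd+k)\sigma_t^2}\right),
\]
and summing over $t$ gives the first displayed inequality of the theorem.

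For the explicit rate, substitute $\sigma_t^2 = \eta_t$ and $\eta_t = c/t$, so that $\eta_t^2 L^2/((nd+k)\sigma_t^2) = \eta_t L^2/(nd+k)$. Applying $\log(1+x)\le x$ term by term cancels the $(nd+k)$ factors and leaves
\[
I(\Phi;S_{1:n}) \;\le\; \frac{L^2}{2}\sum_{t=1}^{T}\eta_t \;=\; \frac{cL^2}{2}\sum_{t=1}^{T}\frac1t \;\le\; \frac{cL^2}{2}\,(\log T + 1),
\]
using $\sum_{t=1}^T 1/t \le 1 + \log T$. Feeding this into Theorem~\ref{thm:mi-jt} gives $|\text{gen}^{\text{joi}}_{\text{meta}}| \le \sqrt{\tfrac{2\sigma^2}{nm}\cdot\tfrac{cL^2}{2}(\log T+1)} = \tfrac{\sigma L}{\sqrt{nm}}\sqrt{c\log T + c}$, as claimed.

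The step I expect to require the most care is the reduction in the first paragraph: justifying that the full-history conditioning $I(\Phi^t; S_{1:n}\mid\Phi^{0:t-1})$ genuinely collapses to the single-step channel $I(\Phi^t; S_{1:n}\mid\Phi^{t-1})$. This rests on invoking Assumption~1 so that $B^t_{1:n}$ and $\xi^t$ are independent of $\Phi^{0:t-2}$ given $\Phi^{t-1}$, and on verifying that the differential-entropy manipulations behind the capacity bound are legitimate---namely that the relevant output densities exist and have finite second moments, which is guaranteed because the injected Gaussian noise regularizes each iterate. Everything downstream of the per-step estimate is routine.
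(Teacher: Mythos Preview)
Your proposal is correct and follows essentially the same route as the paper: data processing from the output to the full trajectory, the chain rule across iterations, reduction to a single-step conditional mutual information, and the Gaussian max-entropy/capacity bound at each step, followed by the identical $\log(1+x)\le x$ and harmonic-sum specialization. The only cosmetic differences are that the paper inserts the batch sequence $B_{1:n}^{[T]}$ as the intermediate variable in the data-processing chain (whereas you pass directly through $g^t$) and that the paper phrases the result for an arbitrary $\Phi=f(\Phi^{[T]})$ rather than just the terminal iterate; neither changes the argument.
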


\begin{proof}
Define the sequence of parameters for $T$ iterations as $\Phi^{[T]}\eqdef(\Phi^1,...,\Phi^T)$ and the corresponding sequence of samplings as $B_{1:n}^{[T]}\eqdef(B_{1:n}^1,...,B_{1:n}^T)$. The output of the algorithm is defined as $\Phi=f(\Phi^{[T]})$, which can be the last iterate $\Phi^T$ or the average output $\frac{1}{T}\sum_{t=1}^T \Phi^t$. From the figure about the parameter updating strategy for joint training illustrated in Section 6, we get the following Markov chain:
$$S_{1:n}\rightarrow B_{1:n}^{[T]}\rightarrow \Phi^{[T]}\rightarrow \Phi\,.$$  
Therefore, by applying Lemma \ref{Lemma_B.7}.7 to the above Markov chain, we have:
\[
I(\Phi;S_{1:n}) \leq I(\Phi^{[T]};S_{1:n}) \leq I(\Phi^{[T]};B^{[T]}_{1:n})= \sum_{t=1}^T I(\Phi^t; B^{[T]}_{1:n}|\Phi^{[t-1]})\,.
\]
The last equality comes from the mutual information chain rule.
Combing the sample strategy with Assumption 1 and the update rule, we obtain:
\[
\begin{aligned}
I(\Phi^t; B^{[T]}_{1:n}|\Phi^{[t-1]}) &=I(\Phi^t; B^t_{1:n}|\Phi^{t-1})\\
&= h(\Phi^t|\Phi^{t-1}) - h(\Phi^t|\Phi^{t-1},B^t_{1:n})\, .
\end{aligned}
\]

Conditioned on $\Phi^{t-1}=\phi^{t-1}$, we have $\Phi^t = \phi^{t-1} -\eta_t G(\phi^{t-1}, B^{t}_{1:n}) + \xi^t$. Then 
\[h(\Phi^t-\phi^{t-1}|\Phi^{t-1}=\phi^{t-1}) = h(\Phi^t|\Phi^{t-1}=\phi^{t-1})\, .\]
Note that $\xi^t$ and $\eta_t G(\phi^{t-1}, B^{t}_{1:n})$ are independent. So we have
\[
\mathbb{E}(||\Phi^t -\phi^{t-1}||_2^2) = \mathbb{E}(||\eta_t G(\phi^{t-1}, B^{t}_{1:n})||_2^2 + ||\xi^t||_2^2) \leq \eta_t^2L^2 + (nd+k)\sigma_t^2\, .
\]

The Gaussian distribution is the one having the largest entropy among the variables with the same second order moment. Hence, 
\[
h(\Phi^t|\Phi^{t-1}=\phi^{t-1}) \leq \frac{nd+k}{2}\log(2\pi e\frac{\eta_t^2L^2+(nd+k)\sigma_t^2}{(nd+k)})\]
for all $\phi^{t-1}$.

In addition,

\[\begin{aligned}
h(\Phi^t|\Phi^{t-1},B^t_{1:n}) &= h(\Phi^{t-1}-\eta_t G(\Phi^{t-1}, B^{t}_{1:n}) + \xi^t|\Phi^{t-1},B^t_{1:n}) \\
&= h(\xi^t) = \frac{nd+k}{2}\log2\pi e\sigma_t^2.
\end{aligned}\]

So we obtain 
\[I(\Phi;S_{1:n}) \leq \sum_{t=1}^T \frac{nd+k}{2} \log(1 + \frac{\eta_t^2L^2}{(nd+k)\sigma_t^2}) \leq \sum_{t=1}^T \frac{\eta_t^2L^2}{2\sigma_t^2}.\]

Hence, for the SGLD algorithm with $\sigma_t=\sqrt{\eta_t}$, constant $c>0$, $\eta_t = \frac{c}{t}$; since $\sum_{t=1}^T\frac{1}{t}\leq \log T + 1$, we have
\[
\begin{aligned}
|\text{gen}^{\text{joi}}_{\text{meta}}(\tau,\mathcal{A}_{\text{meta}}, \mathcal{A}_{\text{base}})| &\leq \sqrt{\frac{2\sigma^2(I(U,W_{1:n};S_{1:n}))}{nm}}\\
&\leq\sqrt{\frac{\sigma^2}{nm}\sum_{t=1}^T \frac{\eta_t^2L^2}{2\sigma_t^2}}\\
&\leq \frac{\sigma L}{\sqrt{nm}}\sqrt{c\log T + c}\, .
\end{aligned}
\]
\end{proof}

\subsection{Proof of Theorem 6.2}\label{proof_thm_alt_sgld}
\begin{theorem*}
Based on Theorem 5.2, for the Meta-SGLD that satisfies Assumption 1, if we set $\sigma_t=\sqrt{2\eta_t/\gamma_t}$, $\sigma_{t,k}=\sqrt{2\beta_{t,k}/\gamma_{t,k}}$, where $\gamma_t$ and $\gamma_{t,k}$ are the inverse temperatures. The meta generalization error for alternate training satisfies 
\[
|\text{gen}^{\text{alt}}_{\text{meta}}(\tau, \text{SGLD}, \text{SGLD})| \leq 
\sqrt{\frac{2\sigma^2I(U,W_{1:n};S^{\text{va}}_{1:n}|S^{\text{tr}}_{1:n})}{nm_{\text{va}}}}\ 
 \leq\ \frac{\sigma}{\sqrt{nm_{\text{va}}}}\sqrt{ \epsilon_U + \epsilon_W}\, ,
\]
where 
\[\epsilon_U = \sum_{t=1}^T \mathbb{E}_{B^{\text{va}}_{I_t},B^{\text{tr}}_{I_t},W_{I_t},U^{t-1}} \frac{\eta_t\gamma_t\|\epsilon_t^u\|^2_2}{2},~~~~\epsilon_W=\sum_{t=1}^T\sum_{i=1}^{|I_t|} \sum_{k=1}^K \mathbb{E}_{{B^{\text{va}}_{i,t,k},B^{\text{tr}}_{i,t,k}, W_{i,t}^{k-1}}} \frac{\beta_{t,k}\gamma_{t,k}\|\epsilon_{t,i,k}^w\|^2_2}{2}\, .\]
\end{theorem*}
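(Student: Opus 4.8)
The plan is to start from the conditional mutual information (CMI) bound of Theorem 5.2 and to show that, for Meta-SGLD, $I(U,W_{1:n};S^{\text{va}}_{1:n}|S^{\text{tr}}_{1:n}) \le \tfrac{1}{2}(\epsilon_U+\epsilon_W)$. Substituting this into $\sqrt{2\sigma^2 I/(nm_{\text{va}})}$ immediately yields the claimed $\tfrac{\sigma}{\sqrt{nm_{\text{va}}}}\sqrt{\epsilon_U+\epsilon_W}$, since the factor $2$ cancels the $\tfrac{1}{2}$. First I would decompose the CMI with the chain rule (exactly the decomposition already displayed after the theorem statement) into a meta-level term $I(U;S^{\text{va}}_{1:n}|S^{\text{tr}}_{1:n})$ and a base-level term $\sum_{i}I(W_i;S^{\text{va}}_i|U,S^{\text{tr}}_i)$, and bound these separately by $\tfrac{1}{2}\epsilon_U$ and $\tfrac{1}{2}\epsilon_W$.

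For each term I would exploit the nested Markov structure of Meta-SGLD, in direct analogy with the proof of Theorem 6.1. Conditioned on $S^{\text{tr}}_{1:n}$, the outer iterates $U^0\to U^1\to\cdots\to U^T$ form a Markov chain driven by the validation mini-batches, so by the data-processing inequality together with the chain rule for mutual information, $I(U;S^{\text{va}}_{1:n}|S^{\text{tr}}_{1:n})$ is at most $\sum_{t} I(U^t;S^{\text{va}}_{1:n}|U^{t-1},S^{\text{tr}}_{1:n})$. The same reduction applied to each inner chain $W^0_{i,t}\to\cdots\to W^K_{i,t}$ produces a sum over $(t,i,k)$ of per-step terms. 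Assumption 1 (independent sampling) is what allows each per-step conditional information to collapse onto the single current batch, so this must be invoked at every application of the chain rule, just as in Theorem 6.1.

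The heart of the argument is bounding each per-step term. I would use the variational form of the (disintegrated) conditional mutual information, taking as the $S^{\text{tr}}_{1:n}$-measurable reference $Q$ the Gaussian one-step kernel obtained by replacing the validation gradient in the update with the corresponding training gradient. Since the injected noise is isotropic Gaussian, both the true kernel and the reference are Gaussians with common covariance $\sigma_t^2\mathbb{I}_{d}$ (resp.\ $\sigma_{t,k}^2\mathbb{I}_{d}$) whose means differ by exactly $\eta_t\,\epsilon_t^u$ (resp.\ $\beta_{t,k}\,\epsilon_{t,i,k}^w$). The KL divergence between two such Gaussians is $\|\text{mean gap}\|_2^2/(2\sigma^2)$, so with the prescribed inverse temperatures $\sigma_t^2=2\eta_t/\gamma_t$ and $\sigma_{t,k}^2=2\beta_{t,k}/\gamma_{t,k}$ the outer step contributes $\tfrac{\eta_t\gamma_t\|\epsilon_t^u\|_2^2}{4}$ and the inner step $\tfrac{\beta_{t,k}\gamma_{t,k}\|\epsilon_{t,i,k}^w\|_2^2}{4}$. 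Taking expectations and summing then gives $I(U;\cdot\,|\,\cdot)\le\tfrac{1}{2}\epsilon_U$ and $\sum_i I(W_i;\cdot\,|\,\cdot)\le\tfrac{1}{2}\epsilon_W$, and adding the two closes the bound.

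The main obstacle is constructing the reference process so that it is genuinely $S^{\text{tr}}_{1:n}$-measurable while still isolating the gradient incoherence at each level. For the outer loop this is clean, because the training meta-gradient $\nabla\tilde{R}_{B^{\text{tr}}_{I_t}}$ depends on $S^{\text{tr}}_{1:n}$ alone and the mean gap is immediately $\eta_t\epsilon_t^u$. The inner loop is more delicate: the validation data enters the base level only indirectly, through the meta-objective, so care is needed to define the comparison batch $B_{i,t,k}$ drawn from $B^{\text{va}}_{i,t}\cup B^{\text{tr}}_{i,t}$ and to verify that the induced coupling leaves a legitimate reference measure in the variational bound while producing exactly $\epsilon^w_{t,i,k}$ as the mean gap. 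Getting this measurability bookkeeping right—so that the inner contribution is attributed to $\epsilon_W$ rather than double-counted with $\epsilon_U$—is the step I expect to require the most care.
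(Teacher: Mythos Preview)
Your overall strategy—data processing to the trajectory, chain rule across iterations, and a Gaussian reference so that each per-step KL reduces to $\|\text{mean gap}\|_2^2/(2\sigma^2)$—matches the paper. But the \emph{order} in which you decompose is different from the paper's, and that difference creates a genuine gap in your outer-loop bound.

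You first split $I(U,W_{1:n};S^{\text{va}}_{1:n}|S^{\text{tr}}_{1:n})$ into $I(U;\cdot|S^{\text{tr}}_{1:n})$ and $\sum_i I(W_i;\cdot|U,S^{\text{tr}}_i)$, and only afterwards unroll each over time. When you then try to bound $I(U^t;S^{\text{va}}_{1:n}|U^{t-1},S^{\text{tr}}_{1:n})$ with the variational lemma, your proposed reference $Q$ is the Gaussian centered at $U^{t-1}-\eta_t\nabla\tilde{R}_{B^{\text{tr}}_{I_t}}(U^{t-1})$. But $\tilde{R}_{B^{\text{tr}}_{I_t}}(U^{t-1})=\frac{1}{|I_t|}\sum_i R_{B^{\text{tr}}_{i,t}}(W^K_{i,t})$ depends on the inner-loop outputs $W_{I_t}$, which are \emph{random} given only $(U^{t-1},S^{\text{tr}}_{1:n})$ because of the inner SGLD noise $\zeta^{t,k}$ and batch sampling. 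So your $Q$ is not $(U^{t-1},S^{\text{tr}}_{1:n})$-measurable, and Lemma~A.2/A.3 does not apply as stated. Your claim that ``the training meta-gradient $\nabla\tilde{R}_{B^{\text{tr}}_{I_t}}$ depends on $S^{\text{tr}}_{1:n}$ alone'' is exactly where this fails. Ironically, you flagged the inner loop as the delicate part, but it is the outer step where the measurability bookkeeping breaks.

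The paper avoids this by reversing the order: it first passes to the \emph{joint} trajectory $(U^{[T]},W_{I_{[T]}})$ via data processing, replaces $S^{\text{tr}}_{1:n}$ in the conditioning by the batch sequence $B^{\text{tr}}_{I_{[T]}}$ using a graph-model inequality (Lemma~A.8), and only then applies the chain rule over $t$. At step $t$ it splits
\[
I(U^t,W_{I_t};B^{\text{va}}_{I_t}\mid B^{\text{tr}}_{I_t},U^{t-1})=I(W_{I_t};B^{\text{va}}_{I_t}\mid B^{\text{tr}}_{I_t},U^{t-1})+I(U^t;B^{\text{va}}_{I_t}\mid B^{\text{tr}}_{I_t},U^{t-1},W_{I_t}),
\]
so that the $U^t$ term is conditioned on $W_{I_t}$. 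With $W_{I_t}$ fixed, both the true outer kernel and the training-gradient reference are bona fide Gaussians, and the KL gives $\eta_t\gamma_t\|\epsilon_t^u\|_2^2/4$ cleanly. The fix to your argument is therefore not conceptual but structural: keep $U$ and $W$ together until after the time-decomposition, and split them within each outer step so that $W_{I_t}$ lands in the conditioning of the $U^t$ term.
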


\begin{proof}

To prove the above theorem, we need to introduce some basic notations to present the sampling results and the intermediate output of each gradient step, by which we can apply the Markov structure and the mutual information chain rule.
\begin{itemize}
    \item for $K$ inner iterations:
    \begin{itemize}
        \item The sequence of validation data samplings at outer iteration $t$ for task $i$ and the task batch:
        \[B_{i,t,[K]}^{\text{\text{va}}} = (B_{i,t,1}^{\text{va}},...,B_{i,t,K}^{\text{va}}) ,B_{I_t,[K]}^{\text{va}} = (B_{1,t,[K]}^{\text{va}},..., B_{|I_t|,t,[K]}^{\text{va}})\]
        \item The sequence of train data samplings at outer iteration $t$ for task $i$ and the task batch:
        \[B_{i,t,[K]}^{\text{tr}} = (B_{i,t,1}^{\text{tr}},...,B_{i,t,K}^{\text{tr}}),B_{I_t,[K]}^{\text{tr}} = (B_{1,t,[K]}^{\text{tr}},..., B_{|I_t|,t,[K]}^{\text{tr}})\]
        \item the sequence of task specific parameters at outer iteration $t$ of task $i$ and the task batch: \[W_{i,t}^{[K]}=(W_{i,t}^1,...,W_{i,t}^K), W_{I_t}^{[K]}=(W_{1,t}^{[K]},...,W_{|I_t|,t}^{[K]})\,;\]
        \item The output of base learner at outer iteration $t$ of task $i$ and the task batch: 
        \[W_{i,t}=g(W_{i,t}^{[K]}), W_{I_{t}} = (W_{1,t}, ... , W_{|I_t|,t})\]
    \end{itemize}
    \item for $T$ outer iterations: 
    \begin{itemize}
        \item The sequence of meta parameters as $U^{[T]} = (U^1, ..., U^T)$;
        \item validation data sequences as $B_{I_{[T]}}^{\text{va}} = (B_{I_1}^{\text{va}},...,B_{I_T}^{\text{va}})$;
        \item train data sequences as $B_{I_{[T]}}^{\text{tr}}=(B_{I_1}^{\text{tr}},...,B_{I_T}^{\text{tr}})$;
        \item Output of meta learner is defined as $U=f(U^{[T]})$;
        \item Output sequence of base learner is defines as $W_{I_{[T]}} = (W_{I_{1}},...,W_{I_{T}})$
    \end{itemize}
    
\end{itemize}
% the sequence of meta parameters for $T$ outer iterations as $U^{[T]} = (U^1, ..., U^T)$ and the corresponding task data sequences as $B_{I_{[T]}}^{\text{va}} = (B_{I_1}^{\text{va}},...,B_{I_T}^{\text{va}})$ and $B_{I_{[T]}}^{\text{tr}}=(B_{I_1}^{\text{tr}},...,B_{I_T}^{\text{tr}})$, the output of meta learner is defined as $U=f(U^{[T]})$. Similarly, for the inner loop, define the sequence of task specific parameters of task $i$ at iteration $t$ as $W_{i,t}^{[K]}=(W_{i,t}^1,...,W_{i,t}^K)$, the sequence of samplings as $B_{i,t,[K]}^{\text{va}} = (B_{i,t,1}^{\text{va}},...,B_{i,t,K}^{\text{va}})$ and $B_{i,t,[K]}^{\text{tr}} = (B_{i,t,1}^{\text{tr}},...,B_{i,t,K}^{\text{tr}})$. Thus note the corresponding batch at iteration $t$ as $W_{I_t}^{[K]}=(W_{1,t}^{[K]},...,W_{|I_t|,t}^{[K]})$
% , $B_{I_t,[K]}^{\text{va}} = (B_{1,t,[K]}^{\text{va}},..., B_{|I_t|,t,[K]}^{\text{va}})$ and $B_{I_t,[K]}^{\text{tr}} = (B_{1,t,[K]}^{\text{tr}},..., B_{|I_t|,t,[K]}^{\text{tr}})$.
% Finally, let us denote the output of base learner at iteration $t$ as $W_{I_{t}} = (W_{1,t}, ... , W_{|I_t|,t})$, where $W_{i,t}=g(W_{i,t}^{[K]})$. Then $W_{I_{[T]}} = (W_{I_{1}},...,W_{I_{T}})$. 

Based on the definition above, we have the following Markov chains:
\begin{align}
\label{eq:u_chain}S_{1:n}^{\text{va}}\rightarrow B^{\text{va}}_{I_{[T]}}\rightarrow U^{[T]}\rightarrow U\\
\label{eq:w_t}S_{1:n}^{\text{tr}}\rightarrow B^{\text{tr}}_{I{[T]}} \rightarrow W_{I_{[T]}} \rightarrow W_{1:n}\\
\label{eq:w_it}B^{\text{tr}}_{I_{t}}\rightarrow B^{\text{tr}}_{I_{t},[K]} \rightarrow W_{I_{t}}^{[K]} \rightarrow W_{I_{t}}\\
\label{eq:w_itk}B^{\text{tr}}_{i,t}\rightarrow B^{\text{tr}}_{i,t,[K]} \rightarrow W_{i,t}^{[K]} \rightarrow W_{i,t}
\end{align}
And the graph model:
\begin{align}
\label{eq:graph}S_{1:n}^{\text{tr}}\rightarrow B^{\text{tr}}_{I{[T]}} \rightarrow ( W_{I_{[T]}}, U^{[T]}) \leftarrow S_{1:n}^{\text{va}}    
\end{align}

In fact, the algorithm has a nest-loop structure, we just list the above simple sub-structures for the first step of the proof. 
By combining the above Markov chains and the independence of the sample strategy, we obtain

\begin{align}
\nonumber I(U,W_{1:n};S^{\text{va}}_{1:n}|S^{\text{tr}}_{1:n}) &\leq I(U^{[T]}, W_{1:n};S^{\text{va}}_{1:n}|S^{\text{tr}}_{1:n})
\leq I(U^{[T]}, W_{I_{[T]}};S^{\text{va}}_{1:n}|S^{\text{tr}}_{1:n})\\
\label{eq:mi_1}&\leq I(U^{[T]}, W_{I_{[T]}};S^{\text{va}}_{1:n}|B^{\text{tr}}_{I_{[T]}})
\leq I(U^{[T]}, W_{I_{[T]}};B^{\text{va}}_{I_{[T]}}|B^{\text{tr}}_{I_{[T]}})
\end{align}

Apply Lemma \ref{Lemma_B.7}.7, the first and the last inequality are obtained with Markov chain (\ref{eq:u_chain}). The second inequality is obtained with (\ref{eq:w_t}). The third inequality comes from Lemma \ref{Lemma_B.8}.8 and the graph model(\ref{eq:graph}).

Furthermore, we can apply (\ref{eq:w_it}), (\ref{eq:w_itk}), the information chain rule together with the updating rules, to obtain the following decomposition:

\begin{align}
\nonumber I(&U^{[T]}, W_{I_{[T]}};B^{\text{va}}_{I_{[T]}}|B^{\text{tr}}_{I_{[T]}})
= \sum_{t=1}^T I(U^t, W_{I_t};B^{\text{va}}_{I_t}|B^{\text{tr}}_{I_{t}}, U^{t-1}, W_{I_{t-1}})\\
\nonumber &\quad\quad= \sum_{t=1}^T \left\{I(W_{I_t};B^{\text{va}}_{I_t}|B^{\text{tr}}_{I_{t}}, U^{t-1}) + I(U^t;B^{\text{va}}_{I_t}|B^{\text{tr}}_{I_{t}}, W_{I_{t}}, U^{t-1})\right\}\\
\nonumber &\quad\quad\leq \sum_{t=1}^T \left\{\sum_{i=1}^b\left[ I(W_{i,t}^{[K]};B^{\text{va}}_{i,t}|B^{\text{tr}}_{i,t},U^{t-1})\right] + I(U^t;B^{\text{va}}_{I_t}|B^{\text{tr}}_{I_{t}}, U^{t-1}, W_{I_{t}}) \right\}\\
\nonumber &\quad\quad\leq \sum_{t=1}^T \sum_{i=1}^b \sum_{k=1}^K I(W_{i,t}^k;B^{\text{va}}_{i,t,k}|U^{t-1},B^{\text{tr}}_{i,t,k}, W_{i,t}^{k-1}) + \sum_{t=1}^T I(U^t;B^{\text{va}}_{I_t}|B^{\text{tr}}_{I_{t}}, U^{t-1}, W_{I_t})\\
\nonumber &\quad\quad= \sum_{t=1}^T \sum_{i=1}^b \sum_{k=1}^K \mathbb{E}_{B^{\text{va}}_{i,t,k},B^{\text{tr}}_{i,t,k}, W_{i,t}^{k-1}} \left[D_{\text{KL}}(P_{W_{i,t}^k|B^{\text{tr}}_{i,t,k},B^{\text{va}}_{i,t,k},W_{i,t}^{k-1}}||P_{W_{i,t}^k|B^{\text{tr}}_{i,t,k},W_{i,t}^{k-1}})\right] \\
\label{eq:mi_2}&\quad\quad\quad\quad + \sum_{t=1}^T \mathbb{E}_{B^{\text{va}}_{I_t},B^{\text{tr}}_{I_{t}},U^{t-1}} \left[D_{\text{KL}}(P_{U^t|B^{\text{va}}_{I_t},B^{\text{tr}}_{I_{t}}, U^{t-1}, W_{I_t}} ||P_{U^t|B^{\text{tr}}_{I_{t}}, U^{t-1}, W_{I_t}})\right]\, .
\end{align}

\begin{remark}
Here, the KL divergence is for every single iteration, it's not for the full trajectory. In addition, the randomness brought by sampling and previous updates is implied by the expectation. To empirically evaluate the bound, we can sample the variables presented in the expectation to compute the KL divergence.
\end{remark}

Recall the following updates rules:
\[
\begin{aligned}
&W^{k}_{i,t} = W^{k-1}_{i,t} - \beta_{t,k} \nabla R_{B^{\text{tr}}_{i,t,k}}(W^{k-1}_{i,t}) + \zeta^{t,k} \\
&U^t = U^{t-1} - \eta_t \nabla \tilde{R}_{B^{\text{va}}_{I_t}}(U^{t-1}) + \xi^t\, .
\end{aligned}
\]

For the SGLD algorithm, we use the typical choices of $\sigma_t=\sqrt{2\eta_t/\gamma_t}$, $\zeta_k=\sqrt{2\beta_{t,k}/\gamma_{t,k}}$, where $\gamma_t$ and $\gamma_{t,k}$ are the inverse temperatures. Then, the update rules give
\[
\begin{aligned}
&P_{U^t|B^{\text{tr}}_{I_{t}}, U^{t-1}, W_{I_t}}\sim \mathcal{N}(U^{t-1} - \eta_t \nabla \tilde{R}_{B^{\text{tr}}_{I_t}}(U^{t-1}), \frac{2\eta_t}{\gamma_t})\\
&P_{U^t|B^{\text{va}}_{I_t},B^{\text{tr}}_{I_{t}}, U^{t-1}, W_{I_t}} \sim \mathcal{N}(U^{t-1} - \eta_t \nabla \tilde{R}_{B^{\text{va}}_{I_t}, B^{\text{tr}}_{I_t}}(U^{t-1}), \frac{2\eta_t}{\gamma_t})\\
&P_{W_{i,t}^k|B^{\text{tr}}_{i,t,k},W_{i,t}^{k-1}} \sim \mathcal{N}(W_{i,t}^{k-1} - \beta_{t,k} \nabla R_{B^{\text{tr}}_{i,t,k}}(W^{k-1}_{i,t}), \frac{2\beta_{t,k}}{\gamma_{t,k}})\\
&P_{W_{i,t}^k|B^{\text{tr}}_{i,t,k},B^{\text{va}}_{i,t,k},W_{i,t}^{k-1}} \sim \mathcal{N}(W_{i,t}^{k-1} - \beta_{t,k} \nabla R_{B^{\text{tr}}_{i,t,k}, B^{\text{va}}_{i,t,k}}(W^{k-1}_{i,t}),\frac{ 2\beta_{t,k}}{\gamma_{t,k}})
\end{aligned}
\]

Let $\epsilon_t^u= \nabla\tilde{R}_{B^{\text{va}}_{I_t}, B^{\text{tr}}_{I_t}}(U^{t-1}) -   \nabla\tilde{R}_{B^{\text{tr}}_{I_t}}(U^{t-1})$, then we have
\begin{align}
\label{eq:mi_3}D_{\text{KL}}(P_{U^t|B^{\text{va}}_{I_t},B^{\text{tr}}_{I_t},W_{I_t}}||P_{U^t|B^{\text{tr}}_{I_{t}}, W_{I_t}}) = \frac{\eta_t^2||\epsilon_t^u||^2_2}{2\sigma_t^2} = \frac{\eta_t\gamma_t||\epsilon_t^u||^2_2}{4}    
\end{align}

Similarly, let $\epsilon_{t,i,k}^w = \nabla R_{\tilde{B}^{\text{tr}}_{i,t,k}, \tilde{B}^{\text{va}}_{i,t,k}}(W^{k-1}_{i,t}) - \nabla R_{\tilde{B}^{\text{tr}}_{i,t,k}}(W^{k-1}_{i,t})$, we have
\begin{align}
\label{eq:mi_4} D_{\text{KL}}(P_{W_{i,t}^k|W_{i,t}^{k-1},B^{\text{va}}_{i,t,k},B^{\text{tr}}_{i,t,k}}||P_{W_{i,t}^k|W_{i,t}^{k-1},B^{\text{tr}}_{i,t,k}})=\frac{\beta_{t,k}\gamma_{t,k}||\epsilon_{t,i,k}^w||^2_2}{4}    
\end{align}

Combine Theorem 5.2 and equations(\ref{eq:mi_1}), (\ref{eq:mi_2}), (\ref{eq:mi_3}),(\ref{eq:mi_4}), we have
\[
\begin{aligned}
&|\text{gen}^{\text{alt}}_{\text{meta}}(\tau, \text{SGLD}, \text{SGLD})| \leq \sqrt{\frac{2\sigma^2(I(U,W_{1:n};S^{\text{va}}_{1:n}|S^{\text{tr}}_{1:n}))}{nm_{\text{va}}}}\\
&\leq \frac{\sigma}{\sqrt{nm_{\text{va}}}}\sqrt{\sum_{t=1}^T \mathbb{E}_{B^{\text{va}}_{I_t},B^{\text{tr}}_{I_t},U^{t-1}, W_{I_t}} \frac{\eta_t\gamma_t||\epsilon_t^u||^2_2}{2} + \sum_{t=1}^T\sum_{i=1}^{|I_t|} \sum_{k=1}^K \mathbb{E}_{{B^{\text{va}}_{i,t,k},B^{\text{tr}}_{i,t,k}, W_{i,t}^{k-1}}} \frac{\beta_t^k\gamma_t^k||\epsilon_{t,i,k}^w||^2_2}{2}}
\end{aligned}
\]

which concludes the proof.
\end{proof}

\section{On Subgaussianity}\label{sub}

We list the two subgaussian assumptions of \citet{xu2017information} and \citet{bu2020tightening} respectively as follows: 

\textbf{Assumption (a)} $\forall w \in \Wcal$, $\ell(w,Z)$ is $\sigma$-subgaussian for $Z \sim \mu$.

\textbf{Assumption (b)} $\ell(\tilde{W},Z)$ is $\sigma$-subgaussian under $ P_{\tilde{W},Z}=P_W\times\mu$, where $\tilde{W}$ is an independent copy of $W$ and $\tilde{W} \indep Z$.

\citet{xu2017information} directly use Assumption (a) to conclude Assumption (b) in their proof. Two counter examples have been proposed to challenge this conclusion in Appendix section C of \cite{negrea2019information} and section IV of \cite{bu2020tightening}.
However, we notice that these two counterexamples are based on the case of unbounded loss with no constraint on the parameter $W$ output by the learning algorithm. 
We now compare the two assumptions mentioned above in detail for unbounded loss and bounded loss.

\subsection{unbounded loss}

\textbf{Counterexample for Assumption (a) => (b)} (\citet{negrea2019information})

Consider $\Wcal=\Zcal=\Reals$ with $\ell(w, z) = w + z$. Assume that $\tilde{W}\indep Z, \tilde{W} \sim Cauchy, \tilde{Z}\sim \Ncal(0,\sigma^2)$. Thus, $\ell(w,Z)$ is $\sigma$-subgaussian for any $w \in \Wcal$, because $\psi_{\ell(w, Z)}(\lambda) = \log\EE_{Z}[e^{\lambda(\ell(w, Z)-\EE\ell(w, Z))}] = \log\EE[e^{\lambda Z}] = \exp^{\frac{\lambda^2\sigma^2}{2}}$.
While $\ell(\tilde{W}, \tilde{Z})$ is not subgaussian since the Cauchy distribution does not have well-defined moments higher than the zeroth moment. 

\textbf{Counterexample for Assumption (b) => (a)} (\citet{bu2020tightening})

Consider $\Wcal = \Zcal = \Reals^d$ and the square loss function $\ell(w, z) = \|w - z\|_2^2$. Assume $\tilde{W}\sim \Ncal(\mu, \sigma_W^2 \mathbb{I}_d), Z\sim \Ncal(\mu,\sigma^2_Z \mathbb{I}_d)$. Then $\ell(w, Z)$ is not subgaussian for all $w\in\Wcal$, since when $\|w\|_2^2 \rightarrow \infty$ the variance of $\ell(w,Z)$ is not bounded. However, $\tilde{W} - Z \sim \Ncal(0, (\sigma_W^2 + \sigma^2_Z)\mathbb{I}_d)$, so $\ell(\tilde{W}, \tilde{Z}) = \|\tilde{W} - Z\|_2^2 \sim (\sigma_W^2 + \sigma^2_Z)\chi^2_d$ has bounded CGF for $\lambda < 0$, which can induce one-sided bound in our theorem with $\sigma^2=2d(\sigma_W^2+\sigma_Z^2)^2$. However, in this condition, the loss is sub-exponential but not subgaussian as claimed by \citet{bu2020tightening} in assumption (b). 

\subsection{bounded loss}
For a bounded loss function $\ell(w,z) \in [a,b]$, the two assumptions are equivalent. $\ell(w, Z)$ is $\frac{(b-a)}{2}$-subgaussian $\forall w \in \Wcal, Z \sim \mu$. Similarly, $\ell(W,Z)$ is $\frac{(b-a)}{2}$-subgaussian under $P_{\tilde{W},Z}=P_W\times\mu$. The counter example of \cite{negrea2019information} does not apply because the Cauchy distribution is truncated and, consequently, has well-defined moments.

\subsection{Discussion}
Based on the above analysis, we can conclude the following. 
For a bounded loss, the two assumptions are equivalent. In contrast, Assumption (b) is a stronger assumption than Assumption (a) when the loss function is unbounded. At the same time, we found that Assumption (b) is also hard to ensure and is often replaced by the sub-exponential assumption as a relaxation for unbounded loss.

What we need for proving Theorem~\ref{thm:mi-jt} and \ref{thm:cmi-al} is actually the extension of assumption (b). However, in practice, the parameters output from an algorithm should always be bounded. Moreover, for complex data sets used in deep learning, people often adopt a bounded loss or truncate the unbounded loss to ensure the theoretical guarantee. The inconsistency between the two assumptions should not cause too many problems. Hence, we extended Assumption (a) to avoid confusion and too much discussion in the main paper, although the more rigorous version should make use of Assumption (b).

\section{Additional Experimental Results}

\subsection{Synthetic Data}\label{toy_add}
In this section, we present a more direct visualization for the 2D mean estimation experiment described in Section 7.1.
We compare the results of three different train-validation split settings in Figure \ref{fig:three graphs}. The yellow cross in the figure is the actual environment mean $(-4,-4)$. Note that we have set the task batch size as $|I_t|=5$. The five clusters in the graph are the task batch data points at the last epoch, which corresponds to five different $\mu_i\sim\tau, \forall i\in[|I_t|]$. We use small dots to represent the data points, and big dots to show the estimated cluster mean $W_i$ and the estimated environment mean $U$. 

Figure \ref{fig:three graphs} illustrates that the distances from the estimated $U$ to the yellow cross are slightly different for these three settings. When $m_{\text{va}} = 1$ the estimated mean $U$ is much closer to the actual environment mean. This result is coherent with the bound estimation results in Section 7.1, where we got the tightest gradient incoherence bound with $m_{\text{va}}=1$. While the gradient norm bound is largest for $m_{\text{va}}=1$, which indicates that the gradient norm bound may not be as reliable as the gradient incoherence bound since it may be much looser and won't give too much information.

\begin{figure}[ht]
     \centering
     \begin{subfigure}[b]{0.3\textwidth}
         \centering
         \includegraphics[width=\textwidth]{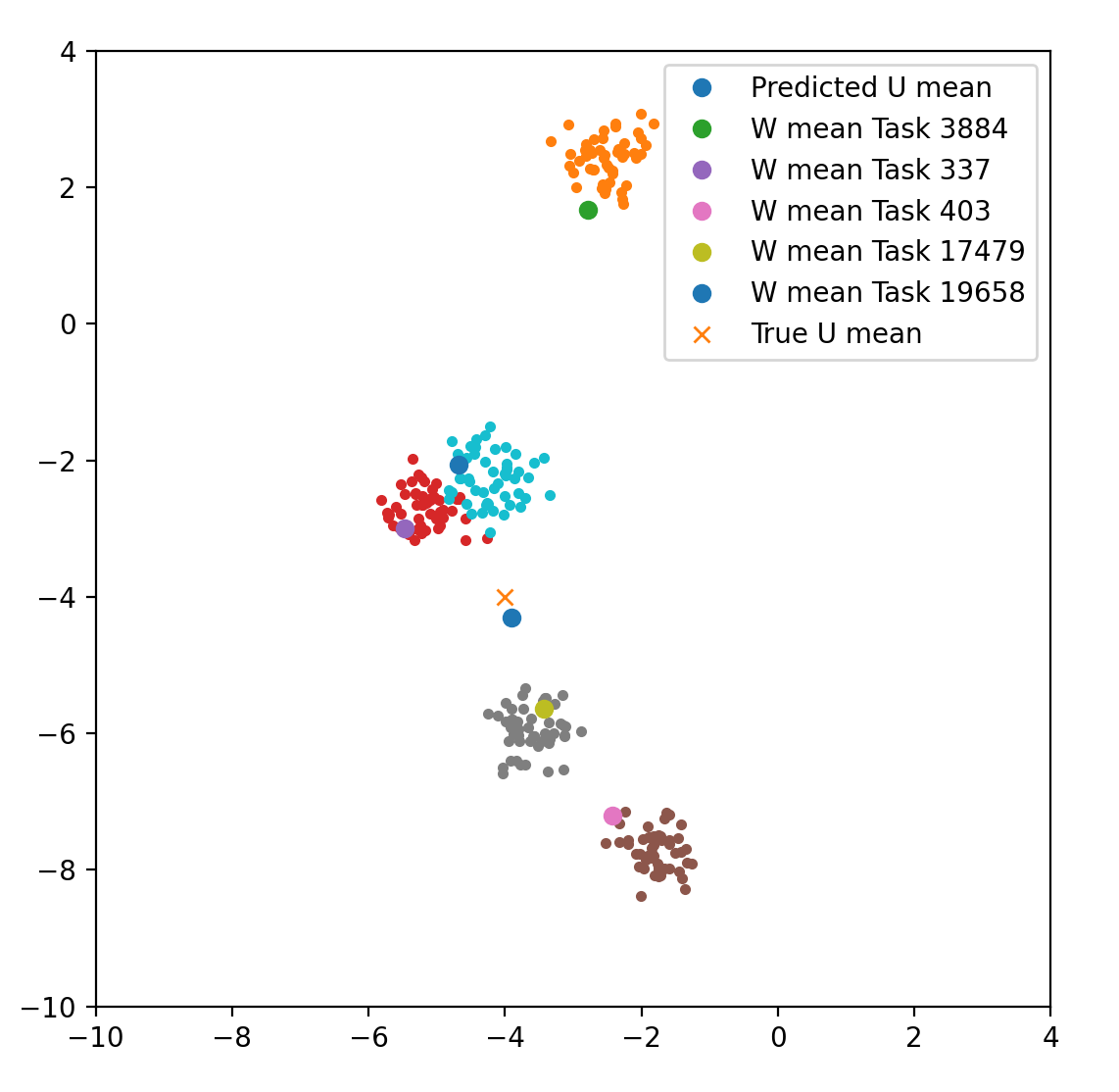}
         \caption{$m_{\text{va}}=15$}
     \end{subfigure}
     \hfill
     \begin{subfigure}[b]{0.295\textwidth}
         \centering
         \includegraphics[width=\textwidth]{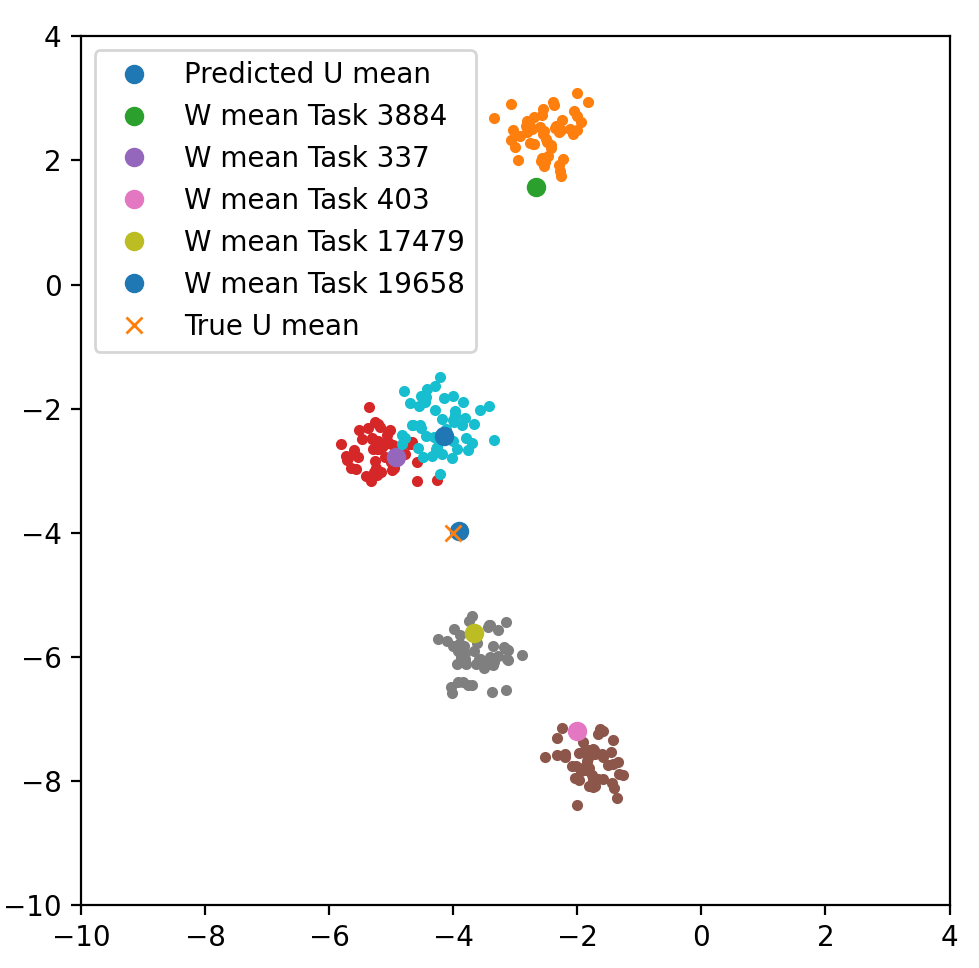}
         \caption{$m_{\text{va}}=1$}
         
     \end{subfigure}
     \hfill
     \begin{subfigure}[b]{0.295\textwidth}
         \centering
         \includegraphics[width=\textwidth]{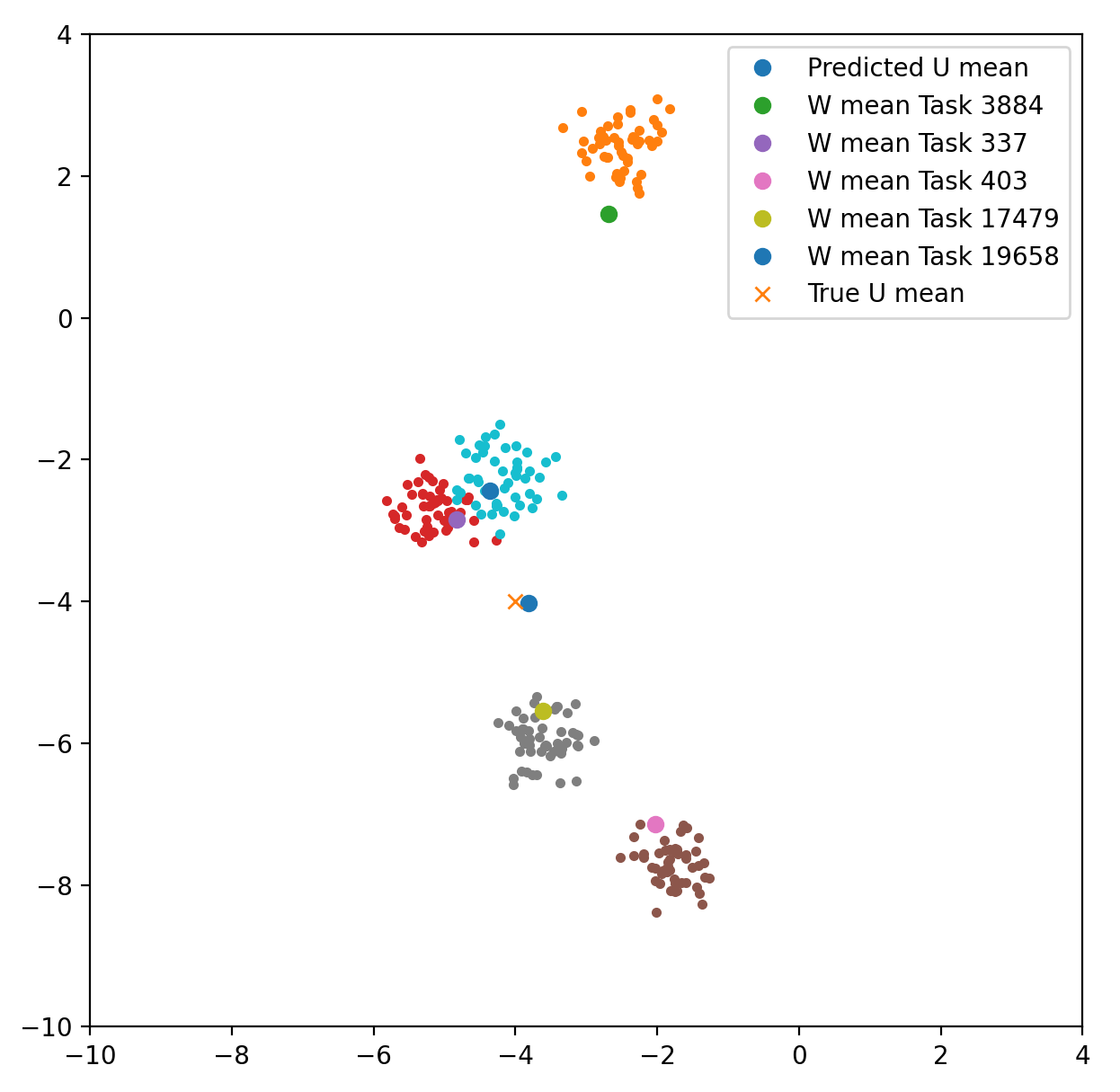}
         \caption{$m_{\text{va}}=8$}
     \end{subfigure}
        \caption{Visualization for simulated data results}
        \label{fig:three graphs}
\end{figure}\label{toy_img}

\textbf{Comparison with the observed generalization error}

We calculated the observed generalization error by evaluating the expected difference between the train loss and test loss. And we list the results of synthetic data under different train-validation split settings in Table \ref{tab:toy_8_8}, Table \ref{tab:toy_15_1} and Table \ref{tab:toy_1_15}.

\begin{table}[!htbp]
\caption{$m_{tr}=8, m_{va}=8$}
\centering
\resizebox{\columnwidth}{!}{%
\rowcolors{2}{white}{lightgray} 
\begin{tabular}{>{\itshape}0l0l0l0l0l0l0l0l0l0l}\hline % used >{\itshape} in order to be able to remove the repeated occurences of \textit in the first column, used l type columns instead of c columns for a cleaner look, added small vertical space above and below the rows with the help of the cellspace package, removed all vertical lines
\textup{epoch}       & 20 & 40 & 60 & 80 &100 &120 &140 &160 &180\\\hline
Train-Test gap & 0.0697 & 0.0371 & 0.0908 & 0.0241 & 0.1072 & 0.1492 & 0.1775 & 0.1432 & 0.1581 \\
Lipschitz &19.77&27.63&33.7&38.84&43.37&47.47&51.24&54.75&58.05\\
G\_norm & 6.009 & 7.506 & 8.807 & 9.856 & 10.643 & 11.558 & 12.386 & 13.275 & 14.014 \\
G\_inco (Ours) & 0.251 & 0.3462 & 0.4315 & 0.4976 & 0.5529 & 0.6112 & 0.6480 & 0.6983 & 0.7424 \\\hline
\end{tabular}\label{tab:toy_8_8}
}
\end{table}

\begin{table}[!htbp]
\caption{$m_{tr}=15, m_{va}=1$}
\centering
\resizebox{\columnwidth}{!}{%
\rowcolors{2}{white}{lightgray} 
\begin{tabular}{>{\itshape}0l0l0l0l0l0l0l0l0l0l}\hline % used >{\itshape} in order to be able to remove the repeated occurences of \textit in the first column, used l type columns instead of c columns for a cleaner look, added small vertical space above and below the rows with the help of the cellspace package, removed all vertical lines
\textup{epoch}       & 20 & 40 & 60 & 80 &100 &120 &140 &160 &180\\\hline
Train-Test gap &0.06857&0.03689&0.09728&0.0327&0.1113&0.1628&0.04986&0.06426&0.1674 \\
Lipschitz &55.79&77.96&95.09&109.6&122.4&133.9&144.6&154.5&163.8\\
G\_norm & 17.29&21.39&25.07&28.04&30.24&32.79&35.15&37.63&39.73 \\
G\_inco (Ours) & 0.1774&0.2486&0.3127&0.3661&0.4074&0.4481&0.4817&0.5182&0.5468 \\\hline
\end{tabular}\label{tab:toy_15_1}
}
\end{table}

\begin{table}[!htbp]
\caption{$m_{tr}=1, m_{va}=15$}
\centering
\resizebox{\columnwidth}{!}{%
\rowcolors{2}{white}{lightgray} 
\begin{tabular}{>{\itshape}0l0l0l0l0l0l0l0l0l0l}\hline % used >{\itshape} in order to be able to remove the repeated occurences of \textit in the first column, used l type columns instead of c columns for a cleaner look, added small vertical space above and below the rows with the help of the cellspace package, removed all vertical lines
\textup{epoch}       & 20 & 40 & 60 & 80 &100 &120 &140 &160 &180\\\hline
Train-Test gap &0.2389&0.06293&0.3155&0.1718&0.1822&0.2196&0.1687&0.1814&0.1923 \\
Lipschitz &14.21&19.86&24.22&27.91&31.17&34.12&36.83&39.35&41.72\\
G\_norm & 4.484&5.579&6.572&7.363&7.953&8.632&9.249&9.899&10.42 \\
G\_inco (Ours)& 0.7801&1.048&1.286&1.45&1.617&1.76&1.89&2.018&2.149 \\\hline
\end{tabular}\label{tab:toy_1_15}
}
\end{table}

Where Train-Test gap is the observed generalization error, G\_inco is the whole gradient incoherence bound, i.e: $\sqrt{\frac{\sigma^2(\epsilon_U + \epsilon_W)}{nm_{va}}}$, G\_norm is the corresponding bound w.r.t. gradient norm. 

Thus, we can see that the gradient-incoherence bound is much closer to the estimation of the actual gap but can be improved in the future.

\subsection{Omniglot}\label{omniglot_add}

Now we give additional experimental results for the deep few-shot benchmark -- Omniglot. We compare the test accuracy for Meta-SGLD with three train-validation split settings, \ie, $m_{va}=\{1,8,15\}$. The test accuracy for MAML and Meta-SGLD with $\{0, 1, 4, 10\}$ fine-tune steps are illustrated in Table \ref{tab:acc}. 

Under the same experiment settings, Meta-SGLD achieves slightly better performance than our reproduced MAML. However, our test accuracy is not comparable to the original results of MAML~\cite{finn2017model}. We only trained the model with 2000 epochs, and the other hyper-parameter settings are also different from \cite{finn2017model}. Moreover, our Meta-SGLD code is modified based on \cite{MAML_Pytorch}. This realization version of MAML is claimed by the author to have worse performance than original MAML. We would like to re-emphasize that our experiments were conducted to validate our theories but not to achieve SOTA results. 

Comparing experimental results for different train-validation split settings, we note that the train loss at last epoch for $m_{va}=1$ is smaller than $m_{va}=8$, while the best test accuracy is obtained with $m_{va} = 8$. Non-rigorously we think the generalization error of $m_{va}=8$ should be smaller than $m_{va}=1$. The consistent result was verified by the gradient-incoherence bound, which is the tightest for $m_{va}=8$. For $m_{va}=15$,  \ie, training with 1-shot data, both the test accuracy, train loss and the estimated bound were the worst.

\begin{table}[ht]
    \centering
    \caption{Test Accuracy for Omniglot, train with 2000 epochs}
    \begin{tabular}{ |p{3cm}||p{3cm}|p{3cm}|p{3cm}|  }
 \hline
 \multicolumn{4}{|c|}{$5$-way Test Accuracy} \\
 \hline
 Algorithm & $m_{va}=15$ &$m_{va}=8$&$m_{va}=1$\\
 \hline
 MAML $0$-step   & 20.7\%    & 20.13\%&   20.26\%\\
 MAML $1$-step   & 88.43\%    &95.8\%&   92.43\%\\
 MAML $4$-step   & 90.77\%    &96.97\%&   96.14\%\\
 MAML $10$-step   & 91.06\%    &97.07\%&   96.53\%\\
 Meta-SGLD $0$-step &   19.48\%  & 20.06\%   &20.29\%\\
 Meta-SGLD $1$-step &   88.8\%  & 95.95\%   &92.8\%\\
 Meta-SGLD $4$-step &  91.1\%  & 96.97\%   &96.1\%\\
 Meta-SGLD $10$-step &   91.26\%  & 97.1\%   &96.53\%\\
 \hline
\end{tabular}
    \label{tab:acc}
\end{table}

\textbf{Comparison with the observed generalization error}

Similar to the synthetic setting, we calculated the observed generalization error by evaluating the expected difference between the train loss and test loss. And we list the results of Omniglot data under different train-validation split setting in the following Table~\ref{tab:omg_8_8}, \ref{tab:omg_15_1} and \ref{tab:omg_1_15}:

\begin{table}[!ht]
\caption{$m_{tr}=8, m_{va}=8$}
\centering
\resizebox{\columnwidth}{!}{%
\rowcolors{2}{white}{lightgray} 
\begin{tabular}{>{\itshape}0l0l0l0l0l0l0l0l0l0l}\hline % used >{\itshape} in order to be able to remove the repeated occurences of \textit in the first column, used l type columns instead of c columns for a cleaner look, added small vertical space above and below the rows with the help of the cellspace package, removed all vertical lines
\textup{epoch} & 200 & 400 & 600 & 800 & 1000 & 1200 & 1400 & 1600 & 1800\\\hline
Train-Test gap & 0.01896&0.00364&0.008821&0.01856&0.01366&0.0001578&0.04087&0.02269&0.01669\\
Lipschitz &4.8159&6.8108&8.3415&9.6319&10.7688&11.7966&12.7418&13.6215&14.4478\\
G\_norm & 0.1835&0.2765&0.3578&0.4292&0.4959&0.5557&0.6134&0.6679&0.7204\\
G\_inco (Ours) & 0.109&0.1372&0.1617&0.1841&0.2057&0.2252&0.2444&0.2625&0.2798 \\\hline
\end{tabular}\label{tab:omg_8_8}
}
\end{table}

\begin{table}[!ht]
\caption{$m_{tr}=15, m_{va}=1$}
\centering
\resizebox{\columnwidth}{!}{%
\rowcolors{2}{white}{lightgray} 
\begin{tabular}{>{\itshape}0l0l0l0l0l0l0l0l0l0l}\hline % used >{\itshape} in order to be able to remove the repeated occurences of \textit in the first column, used l type columns instead of c columns for a cleaner look, added small vertical space above and below the rows with the help of the cellspace package, removed all vertical lines
\textup{epoch} & 200 & 400 & 600 & 800 & 1000 & 1200 & 1400 & 1600 & 1800\\\hline
Train-Test gap & 0.01508&0.02493&0.1099&0.07129&0.01425&0.07677&0.007417&0.04808&0.04199 \\
Lipschitz &9.3429&13.2129&16.1824&18.6858&20.8914&22.8854&24.719&26.4258&28.0287\\
G\_norm & 0.4536&0.6265&0.8012&0.9641&1.115&1.265&1.409&1.55&1.688\\
G\_inco (Ours) & 0.123&0.1572&0.2146&0.2787&0.3437&0.4154&0.4775&0.5275&0.5848 \\\hline
\end{tabular}\label{tab:omg_15_1}
}
\end{table}

\begin{table}[!ht]
\caption{$m_{tr}=1, m_{va}=15$}
\centering
\resizebox{\columnwidth}{!}{%
\rowcolors{2}{white}{lightgray} 
\begin{tabular}{>{\itshape}0l0l0l0l0l0l0l0l0l0l}\hline % used >{\itshape} in order to be able to remove the repeated occurences of \textit in the first column, used l type columns instead of c columns for a cleaner look, added small vertical space above and below the rows with the help of the cellspace package, removed all vertical lines
\textup{epoch} & 200 & 400 & 600 & 800 & 1000 & 1200 & 1400 & 1600 & 1800\\\hline
Train-Test gap &0.0474&0.001597&0.02478&0.01946&0.02008&0.006832&0.05856&0.09882&0.0331 \\
Lipschitz &45.8866&64.8935&79.4779&91.7732&102.6056&112.3988&121.4046&129.7869&137.6598\\
G\_norm & 0.9537&0.9639&0.9756&0.9861&0.9974&1.011&1.025&1.039&1.053\\
G\_inco (Ours) & 0.9534&0.9619&0.971&0.9785&0.9863&0.9959&1.006&1.015&1.025 \\\hline
\end{tabular}\label{tab:omg_1_15}
}
\end{table}

Where Train-Test gap is the observed generalization error,                       G\_inco is the whole gradient incoherence bound, i.e: $\sqrt{\frac{\sigma^2(\epsilon_U + \epsilon_W)}{nm_{va}}}$, G\_norm is the corresponding bound w.r.t. gradient norm.

\section{Experiment Details}
Although we have described the detailed algorithm in the main paper to obtain a data-dependent estimate bound, we offer a more structural pseudo-code in section \ref{alg:meta-sgld}. 
We used Monte Carlo simulations to estimate our generalization error bound in Theorem 6.2. Recall the  accumulated gradient incoherence  for meta learner and base learner are respectively denoted as:
\[\epsilon_U = \sum_{t=1}^T \mathbb{E}_{B^{\text{va}}_{I_t},B^{\text{tr}}_{I_t},W_{I_t},U^{t-1}} \frac{\eta_t\gamma_t\|\epsilon_t^u\|^2_2}{2},~~~~\epsilon_W=\sum_{t=1}^T\sum_{i=1}^{|I_t|} \sum_{k=1}^K \mathbb{E}_{{B^{\text{va}}_{i,t,k},B^{\text{tr}}_{i,t,k}, W_{i,t}^{k-1}}} \frac{\beta_{t,k}\gamma_{t,k}\|\epsilon_{t,i,k}^w\|^2_2}{2}\, .\]
In our experiments, the two terms are separately estimated.
Since we have \[\tilde{R}_{B^{\text{va}}_{I_t}}(U^{t-1}) = \frac{1}{|I_t|}\sum_{i\in I_t} R_{B^{\text{va}}_{i,t}}(W^{K}_{i,t})\,,\]
$\epsilon_t^u= \nabla\tilde{R}_{B^{\text{va}}_{I_t}, B^{\text{tr}}_{I_t}}(U^{t-1}) -   \nabla\tilde{R}_{B^{\text{tr}}_{I_t}}(U^{t-1})$ is related to the last inner step output $W^{K}_{i,t}$. To estimate $\epsilon_U$, we conducted 10 times Monte Carlo simulations for the corresponding inner path at each iteration $t$, the gradients are calculated with back-propagation. For $\epsilon_W$, it's much simpler, we just conducted 10 times Monte Carlo simulations at each inner step, see more details in the code. Our code is modified based on \citet{MAML_Pytorch} and \citet{MALP}. 

\subsection{Synthetic Data}\label{toy_exp}

\textbf{Network Structure} For Synthetic Data, the model structure is quite simple. It is a 2D mean estimation. For a single task with parameter $w$, to estimate the mean, we need to calculate the loss $\ell(W,Z)=||W-Z||_2^2$. Hence, we constructed a single layer that conducts $W - Z$. Then the output of this layer and the pseudo target (always set to 0) were taken as input to a square loss function. 

\begin{table}[hbt!]
\centering
\begin{tabular}{c|c}
\hline
Hyper parameters & values\\
task numbers $n$ & 20000\\
sample numbers $m$ & 16 \\
outer Loop inverse temperature $\gamma_t$ & 10000\\
Inner Loop inverse temperature $\gamma_{t,k}$ & 10000\\
Outer Loop learning rate $\eta_t$ & $0.2$\\
Inner Loop learning rate $\beta_{t,k}$ & $0.4$\\
task batch size $|I_t|$ & 5\\
epoch/Outer Loop iterations $T$ & 200\\
Inner Loop updates $K$ & 4\\
$m_{\text{va}}$ & $\{1, 8, 15\}$\\
$m_{\text{tr}}$ & $\{15, 8, 1\}$\\
data dimension & 2\\
loss & square loss\\
test update step & 10\\
\hline
\end{tabular}
\caption{\label{tab:synthetic} Synthetic Data Experiment Setting}
\end{table}

\textbf{Training Details} The hyper parameter settings and training details for Synthetic data set are presented in Table \ref{tab:synthetic}. 

\textbf{Compute Resource} All experiments for Synthetic data were tested on a machine runing macOS system with an Intel Core i5 CPU, 8G memory.

\textbf{Subgaussian parameter}
For the synthetic data, we want to estimate the mean for each sub-task, where we have for task $i$, $Z\sim \Ncal(\mu_i, 0.1\mathbb{I}_d), d=2$. The task mean $\mu_i$ is sampled from the truncated normal distribution $\Ncal((-4,-4)^T, 5\mathbb{I}_{2})$ with $\mu_i\in [-12, 4]\times[-12, 4]$. Thus we have $||\mu_i||_2^2 \leq 288$.
$\tilde{W}$ is the independent copy of the SGLD algorithm output $W$. To estimated the $\sigma^2$ that satisfies the subgaussian loss, We consider the worst case where the output is obtained with a single example $Z'$ and one inner step update. So we have $W = \textbf{0} -2\beta(\textbf{0} - Z') + \epsilon \approx 0.8Z'$, since the inner loop learning rate in our experiment setting is $0.4$(the noise added is quite small, which can be ignored).  Hence, $\tilde{W} \sim \Ncal(0.8\mu_i, 0.064\mathbb{I}_d)$. Moreover, we have $\tilde{W} \indep Z$ and $\ell(\tilde{W}, Z)= ||\tilde{W} - Z||_2^2$, so $\tilde{W} - Z \sim \Ncal(0.2\mu_i, \sigma_l^2\mathbb{I}_d), \sigma_l^2 = 0.164$. Furthermore, $\ell(\tilde{W}, Z) \sim \sigma_l^2\prime{\chi}_d^2(k), k=0.04||\mu_i||_2^2$, which is a noncentral chi-squared distribution(\citet{bu2020tightening} analyzed ERM, where $\ell(\tilde{W}, Z)$ follows central chi-squared distribution). Thus the CGF of $\ell(\tilde{W}, Z)$ is given by:
\[
\begin{aligned}
\psi_{\ell(\tilde{W}, Z)}(\lambda) &= -(d+k)\sigma_l^2\lambda -\frac{d}{2}\log(1-2\sigma_l^2\lambda) + \frac{k\sigma_l^2\lambda}{1-2\sigma_l^2\lambda}\\
&= \frac{d}{2}(-2\sigma_l^2\lambda -\log(1-2\sigma_l^2\lambda)) + k\sigma_l^2\lambda\frac{2\sigma_l^2\lambda}{1-2\sigma_l^2\lambda}, \lambda \in (-\infty, \frac{1}{2\sigma_l^2})
\end{aligned}\,
\]

Let $u\eqdef2\sigma_l^2\lambda$, and note that $-u-\log(1-u)\leq \frac{u^2}{2}, u<0$.
\[\psi_{\ell(\tilde{W}, Z)}(\lambda) = \frac{d}{2}(-u-\log(1-u)) + \frac{ku^2}{2(1-u)}\leq \frac{du^2}{4} + \frac{ku^2}{2}= (2k+d)\sigma_l^4\lambda^2, \lambda < 0\,.\] So the subgaussian parameter $\sigma^2$ in our assumption can be expressed as $\sigma^2=2(2k+d)\sigma_l^4= 2(2*0.04||\mu_i||_2^2+d)(0.164)^2$, where $d=2$ and $||\mu_i||\leq 288$. So we obtain $\sigma^2 = 0.164*0.164*4*(1+0.04*288)= 1.3469$.

\subsection{Omniglot}\label{omniglot}

\textbf{Network Structure} We used a CNN network architecture for Omniglot data set, which consists of a stack of modules. The first three modules are the same, each of which is a $3\times3$ 2d convolution layer of 64 filters and stride 2 followed by a Relu layer and a batch normalization layer. Then the fourth module is a $2\times2$ 2d convolution layer of 64 filters and stride 1, followed by a Relu layer and a batch normalization layer. Through the aforementioned modules, we got a $64\times 1\times 1$ feature map. This feature map was further taken into a fully connected layer which output the logits for a $5$-way classification. Finally,  the cross-entropy loss is calculated with the logits and the corresponding labels.  

\textbf{Training Details} The hyper parameter settings and training details for Omniglot data set are outlined in Table \ref{tab:Omniglot}. 

\textbf{Compute Resource} The experiments for Omniglot were run on a server node with 6 CPUs and 1 GPU of 32GB memory.

\textbf{Subgaussian parameter} For Omniglot data, we used the cross entropy loss, which is unbounded. And the data distribution is too complex that we cannot obtain a similar closed form estimation for the subgaussian parameter. To assure the theoretic guarantee, we can adopt a variation of the loss function which is clipped to $[0,2]$ and hence $1$-subgaussian. Actually, such clip is not always necessary. As we discussed in section \ref{sub}, the subgaussian parameter $\sigma^2$ is related to the independent copy $\tilde{W}$ of the base learner output for each task. During our experiments, the loss w.r.t $\tilde{W}$ rarely exceed the clip value. 

% For a well converged algorithm, which can obtain $D_{KL}(P_{\tilde{W}}||P_{Y|X}) \leq \epsilon$, the cross entropy loss is less than $H(Y|X) + \epsilon \leq \log5 + \epsilon$ (here we conduct 5-way classification, the maximum conditional entropy is obtained with uniform distribution).
\begin{table}[!hbt]
\centering
\begin{tabular}{c|c}
\hline
Hyper parameters & values\\
task numbers $n$ & $\tbinom{1200}{5}$\\
sample numbers $m$ & 16 \\
outer Loop inverse temperature $\gamma_t$ & 100000000\\
Inner Loop inverse temperature $\gamma_{t,k}$ & 100000000\\
Outer Loop learning rate $\eta_t$ & $10^{-3}*0.96^{\frac{t}{800}}$\\
Inner Loop learning rate $\beta_{t,k}$ & $0.3*0.96^{\frac{t}{1000}}$\\
n-way classification & 5\\
task batch size $|I_t|$ & 32\\
epoch/Outer Loop iterations $T$ & 2000\\
Inner Loop updates $K$ & 4\\
$m_{\text{va}}$ & $\{1, 8, 15\}$\\
$m_{\text{tr}}$ & $\{15, 8, 1\}$\\
loss & cross entropy\\
test update step & 10\\
image size & 28*28\\
image channel & 1\\
\hline
\end{tabular}
\caption{\label{tab:Omniglot} Omniglot Experiment Setting}
\end{table}

\section{Additional Comparison to Related Works}

\textbf{Discussion with \citet{jose2020information}} \label{jose_related}
They adopted different and generally unrealistic assumptions to derive the theoretical results. Concretely:

In joint-training (Eq (33) in \citet{jose2020information}), the task-level error w.r.t. base-learner $W$ is related to the unknown environment distribution $P_T$, which is hard to estimate from the observed data. In contrast, the task-level risk in our paper is associated with the distribution meta-parameter $U$, which can be evaluated efficiently. Besides, when $m\to\infty$ and the number of task $n$ is limited, their bound always has a non-zero term. This does not fit the reality since the new task already has enough samples to learn.

In the alternate-training (meta train-validation) settings, they assumed the task parameters $W$ and $S^{va}$ are conditionally independent given $S^{tr}$ (Eq A(8) in their paper). This is an unrealistic condition in meta-learning since $W$ depends on the meta-parameter $U$, where $U$ is updated by $S^{va}_{1:n}$. As a result, if we set $m=1$ (each task has only one sample), then $n\to\infty$, the upper bound in Eq(3) of \cite{jose2020information} will converge to 0, which is problematic since task distribution can be arbitrary noisy and the task-level error (with one sample) can be quite large. Besides, this bound is irrelevant to the train validation split, which is inconsistent with the previous work such as \cite{denevi2018learning,saunshi2021representation}.

Therefore, our theoretical results are not directly comparable. Even if we ignore all these unrealistic theoretical assumptions and directly compare the results in \citet{jose2020information}, their theoretical results in noisy iterative approaches still depend on the Lipschitz constant of the neural network (Eq~(45) in their paper), which is vacuous in deep learning.

\textbf{Discussion with recent theoretical analysis on the support-query approach}\label{sq-related}

\citet{denevi2018learning} first studied train-validation split for meta-learning in biased linear regression model. They proved a generalization bound and concluded that there exists a trade-off for train-validation split, which is consistent with Theorem 5.2 in our paper. Specifically, they constructed two datasets: For the simple unimodal distribution, the optimal split is $m_{tr}=0$. For the bimodal distribution, the optimal split is $m_{tr}\in(0, m-1]$.

\citet{bai2021important} proposed a theoretical analysis of train-validation split in linear centroid meta-learning (parameter transfer). By comparing the train-val (alternate training) and train-train (joint training) method, they showed that train-validation split is necessary for the agnostic setting, where the train-val meta loss is an unbiased estimator w.r.t. the meta-test loss while the train-train loss is biased(consistent with our Theorem 5.1). When it is realizable (noiseless scenario), the train-train model can achieve better excess loss.

\citet{saunshi2021representation} analyze the train-valid splitting for linear representation learning (representation transfer). They proved that the train-validation split encourages learning a low-rank representation. In the noiseless setting, the train-val method already enables low-rank representation, so it's preferable to set a smaller train-split and larger validation-split.

While our work focus on general settings with randomized algorithms and does not specify the form of base-learner and meta-learner, which can be applied in non-linear representation, non-linear classifier, and non-convex loss. Besides, the relations of our papers are as follows: 

1. Our theory can recover the stochastic version of the above parameter and representation transfer settings. If we consider the linear model with $\mathcal{U}=\mathcal{W} \subseteq R^d$ and $P_{W|U}$ is approximated by a Gaussian distribution $\mathcal{N}(U,\mathbb{I}_{d})$, the problem is analogous to parameter-transfer meta-learning. If $\mathcal{U}\subseteq R^k, \mathcal{W} \subseteq R^{k+d}$ (where $U \in R^k$ is the shared representation parameter, $V \in R^d$ is the parameter of the linear classifier, $W=(U,V) \in R^{(k+d)}$ is the whole task parameter) and the prior of stochastic linear classifier $V $ is approximated by a Gaussian distribution $\mathcal{N}(0,\mathbb{I}_{d})$, the setting is similar to the representation transfer paradigm.
   
2. Since our bounds are based on the generic settings (flexible data distribution, algorithm, and loss choice), the two training modes are not directly comparable in our problem. However, we agree on the potential limit of joint training (asymptotically biased in the agnostic setting) and believe it is highly interesting to explore the specific conditions to understand the benefits and limitations of these training modes as the future work.

\section{Pseudo Code}\label{alg:meta-sgld}

\begin{algorithm}[H]
\SetAlgoLined
\textbf{Require}: Task environment $\tau$\;
\textbf{Require}: initial learning rates $\eta_0, \beta_0$, inverse temperature $\gamma$\;
randomly initialize $U^0$\;
\For{$t \gets 1$ to $T$}{
  Sample task data batch $B_i \sim \mu_{m,\tau}, \forall i \in I_t$\;
  Randomly split $B_{I_t}$ to $B_{I_t}^{tr}$ and $B_{I_t}^{va}$\;
  learning rate decay, get $\eta_t$\;
  \For{$i \gets 1$ to $|I_t|$}{
     \For{$k \gets 1$ to $K$}{
        learning rate decay, get $\beta_{t,k}$\;
        \eIf{GLD}{ 
           Use full batch, $B_{i,t,k}^{tr} = B_{i,t}^{tr}$ and $B_{i,t,k}^{tr} = B_{i,t}^{va}$\;
        }{
           Sample $B_{i,t,k}^{tr}$ from $B_{i,t}^{tr}$\;
           Sample $B_{i,t,k}^{va}$ from $B_{i,t}^{va}$\;
        }
        Update parameter with gradient descent:\;
        \If{$k==1$}{
            $W_{i,t}^{k-1}=U^{t-1}$\;
        }
        Calculate $\mathbb{E}_{{B^{\text{va}}_{i,t,k},B^{\text{tr}}_{i,t,k}, W_{i,t}^{k-1}}} \frac{\beta_t^k\gamma_t^k||\epsilon_{t,i,k}^w||^2_2}{2}$ with Monte Carlo simulation\;
        $W^{k}_{i,t} = W^{k-1}_{i,t} - \beta_{t,k} \nabla R_{B^{\text{tr}}_{i,t,k}}(W^{k-1}_{i,t}) + \zeta^{t,k}$\;
     }
  }
  Calculate $\mathbb{E}_{B^{\text{va}}_{I_t},B^{\text{tr}}_{I_t},U^{t-1}, W_{I_t}} \frac{\eta_t\gamma_t||\epsilon_t^u||^2_2}{2}$ with Monte Carlo Simulation\;
  $U^t = U^{t-1} - \eta_t \nabla \frac{1}{|I_t|}\sum_{i\in I_t} R_{B^{\text{va}}_{i,t}}(W^{K}_{i,t}) + \xi^t$
}
\caption{Meta-SGLD for Few-Shot Learning}
\end{algorithm}

\end{document}